\pgfplotsset{compat=1.7}
\definecolor[named]{MyLightGreen}{cmyk}{0.22,0,0.10,0.28}
\DeclareMathOperator*{\argmin}{argmin} 
\DeclareMathOperator{\E}{\mathbb{E}}
\DeclareMathOperator{\R}{\mathbb{R}}
\DeclareMathOperator{\1}{\mathbb{1}}
\DeclareMathOperator{\X}{\mathcal{X}}
\DeclareMathOperator{\Y}{\mathcal{Y}}
\DeclareMathOperator{\F}{\mathcal{F}}
\DeclareMathOperator{\Z}{\mathcal{Z}}
\DeclareMathOperator{\A}{\mathcal{A}}
\DeclareMathOperator{\V}{\mathcal{V}}
\newcommand{\rulesep}{\unskip\ \vrule\ }
\newcommand{\anc}{anc}
\let\emptyset\varnothing
\newcommand{\fts}{{f}_2^*}
\newtheorem{theorem}{Theorem}
\newtheorem{proposition}[theorem]{Proposition}
\newtheorem{lemma}[theorem]{Lemma}
\newtheorem{conjecture}[theorem]{Conjecture}
\newtheorem{definition}[theorem]{Definition}
\xpatchcmd{\proof}{\@addpunct{.}}{\normalfont\,\@addpunct{:}}{}{}
\title{
On the Existence of Simpler Machine Learning Models
}
\author[]{Lesia Semenova}
\author[]{Cynthia Rudin}
\author[]{Ronald Parr}
\affil[]{\{\textit{lesia, cynthia, parr}\}@cs.duke.edu}
\affil[]{Department of Computer Science, Duke University}
\date{}
\begin{document}

\maketitle
It is almost always easier to find an accurate-but-complex model than an accurate-yet-simple model. Finding optimal, sparse, accurate models of various forms (linear models with integer coefficients, decision sets, rule lists, decision trees) is generally NP-hard. We often do not know whether the search for a simpler model will be worthwhile, and thus we do not go to the trouble of searching for one. In this work, we ask an important practical question: can accurate-yet-simple models be proven to exist, or shown likely to exist, before explicitly searching for them? We hypothesize that there is an important reason that simple-yet-accurate models often do exist. This hypothesis is that the \textit{size of the Rashomon set is often large}, where the Rashomon set is the set of almost-equally-accurate models from a function class. If the Rashomon set is large, it contains numerous accurate models, and perhaps at least one of them is the simple model we desire. In this work, we formally present the \textit{Rashomon ratio} as a new gauge of simplicity for a learning problem, depending on a function class and a data set. The Rashomon ratio is the ratio of the volume of the set of \textit{accurate models} to the volume of the \textit{hypothesis space}, and it is different from standard complexity measures from statistical learning theory. 
Insight from studying the Rashomon ratio provides an easy way to check whether a simpler model might exist for a problem before finding it, namely whether several different machine learning methods achieve similar performance on the data. In that sense, the Rashomon ratio is a powerful tool for understanding why and when an accurate-yet-simple model might exist. If, as we hypothesize in this work, many real-world data sets admit large Rashomon sets, the implications are vast: it means that simple or interpretable models may often be used for high-stakes decisions without losing accuracy.

\section{Introduction}
Following the principle of Occam's Razor, one should use the simplest model that explains the data well. However, finding the simplest model, let alone any simple-yet-accurate model, is hard. As soon as simplicity constraints such as sparsity are introduced, the optimization problem for finding a simpler model typically becomes NP-hard. Thus, practitioners -- who have no assurance of finding a simpler model that achieves the performance level of a black box -- may not see a reason to attempt such potentially difficult optimization problems. Thus, sadly, what was once the holy grail of finding simpler models, has been, for the most part, abandoned in modern machine learning. In this work, we ask a question that is essential, and potentially game-changing, for this discussion: what if we knew, before attempting a computationally expensive search for a simpler-yet-accurate model, that one was likely to exist? Perhaps knowing this would allow us to justify the time and expense of searching for such a model. If it is true that many data sets have large enough Rashomon sets to admit simple models, then there are important implications for society -- it means we may be able to use simpler or interpretable models for many high-stakes problems without losing accuracy. 


Proving the existence of simpler models before aiming to find them differs from the current approach to machine learning in practice. We generally do not think about going from more complicated spaces to simpler ones; in fact, the reverse is true, where typical statistical learning theory and algorithms allowed us to maintain generalization when handling more complicated model classes (e.g., large margins for support vector machines with complex kernels or large margins for boosted trees) \cite{cortes1995support, schapire1998boosting}. We even build neural networks that are so complex that they can achieve zero training error, and try afterwards to determine why they generalize \cite{Belkin15849, nakkiran2021deep}. However, because simple models are essential for many high-stakes decisions \citep{rudin2019stop}, perhaps we should return to the goal of aiming directly for simpler models. We will need new ideas in order to do this.

Decades of study about generalization in machine learning have provided many different mathematical theories. Many of them measure the complexity of classes of functions without considering the data \citep[e.g., VC theory,][]{vapnik1995nature}, or measure properties of specific algorithms \citep[e.g., algorithmic stability, see][]{bousquet2002stability}. However, none of these theories seems to capture directly a phenomenon that occurs throughout practical machine learning. In particular, \textit{there are a vast number of data sets for which many standard machine learning algorithms perform similarly}. In these cases, the machine learning models \textit{tend to generalize well}. Furthermore, in these same cases, \textit{there is often a simpler model that performs similarly and also generalizes well}. 

We hypothesize that these three observations can all be explained by the same phenomenon: the ``Rashomon effect,''  which is the existence of many almost-equally-accurate models \citep{breiman2001statistical}. Firstly, following a key argument in our work, if there is a large \textit{Rashomon set} of almost-equally-accurate models, a simple model may also be contained in it. Secondly, if the Rashomon set is large, many different machine learning algorithms may find different but approximately-equally-well-performing models inside it. An experimenter could then observe similar performance for different types of algorithms that produce very different functions. Thirdly, if the Rashomon set is large enough to contain simpler models, those models are guaranteed to generalize well. As we will show in Section \ref{sec:exis1}, there are mathematical assumptions that allow us to \textit{prove} existence of simpler models within the Rashomon set. If the assumptions are satisfied, a model from a simpler class is approximately as accurate as the most accurate model within the hypothesis space, which consequently leads to better generalization guarantees. The assumptions are based in approximation theory, which models how one class of functions can approximate another.

We quantify the magnitude of the Rashomon effect through the 
\textit{Rashomon ratio}, which is the ratio of the Rashomon set's volume to the volume of the hypothesis space. An illustration of the Rashomon set is shown in Figure \ref{fig:rset_example}; it does not need to be a connected or convex set.
The Rashomon ratio can serve as a gauge of simplicity for a learning problem.\footnote{Such measures are typically called ``complexity'' measures, but the Rashomon ratio measures simplicity, not complexity.}
As a property of both a data set and a hypothesis space, it differs from the VC dimension \citep{vapnik1971uniform} (because the Rashomon ratio is specific to a data set),  it differs from algorithmic stability \citep[see][]{rogers1978finite,kearns1999algorithmic} (as the Rashomon ratio does not rely on robustness of an algorithm with respect to changes in the data), it differs from local Rademacher complexity \citep{bartlett2005local} (as the Rashomon ratio does not measure the ability of the hypothesis space to handle random changes in targets and actually benefits from multiple similar models), and it differs from geometric margins \citep{vapnik1995nature} (as the maximum margin classifier can have a small minimum margin yet the Rashomon ratio can be large, and margins are measured with respect to one model, whereas the Rashomon ratio considers the existence of many). We provide theorems that show simple cases when the Rashomon ratio disagrees with these complexity measures in Section \ref{section:notation} and Appendix \ref{appendix:measures}. The Rashomon set is not just functions within a flat minimum; it could consist of functions from many non-flat local minima as illustrated in  Figure \ref{fig:flatminima} in Appendix \ref{appendix:flat_minima_figure}, and it applies to discrete hypothesis spaces where gradients, and thus ``sharpness'' \citep{dinh2017sharp} do not exist. 
For linear regression, we derive a closed form solution for the volume of the Rashomon set in parameter space in Theorem \ref{th:ridge_regression} in Appendix \ref{appendix:ridge}.

Our theory and empirical results have implications beyond cases where the size of the Rashomon set can be estimated in practice: they suggest computationally inexpensive ways to gauge whether the Rashomon set is large without directly measuring it. \textit{In particular, our results indicate that when many machine learning methods perform similarly on the same data set (without overfitting), it could be because the Rashomon set of the functions these algorithms consider is large.} 
\textit{Thus, after running different machine learning methods and observing similar performance, our results indicate that it may be worthwhile to optimize directly for simpler models within the Rashomon set.} 

	\begin{figure*}[t]
		\centering
		\includegraphics[width=0.9\textwidth]{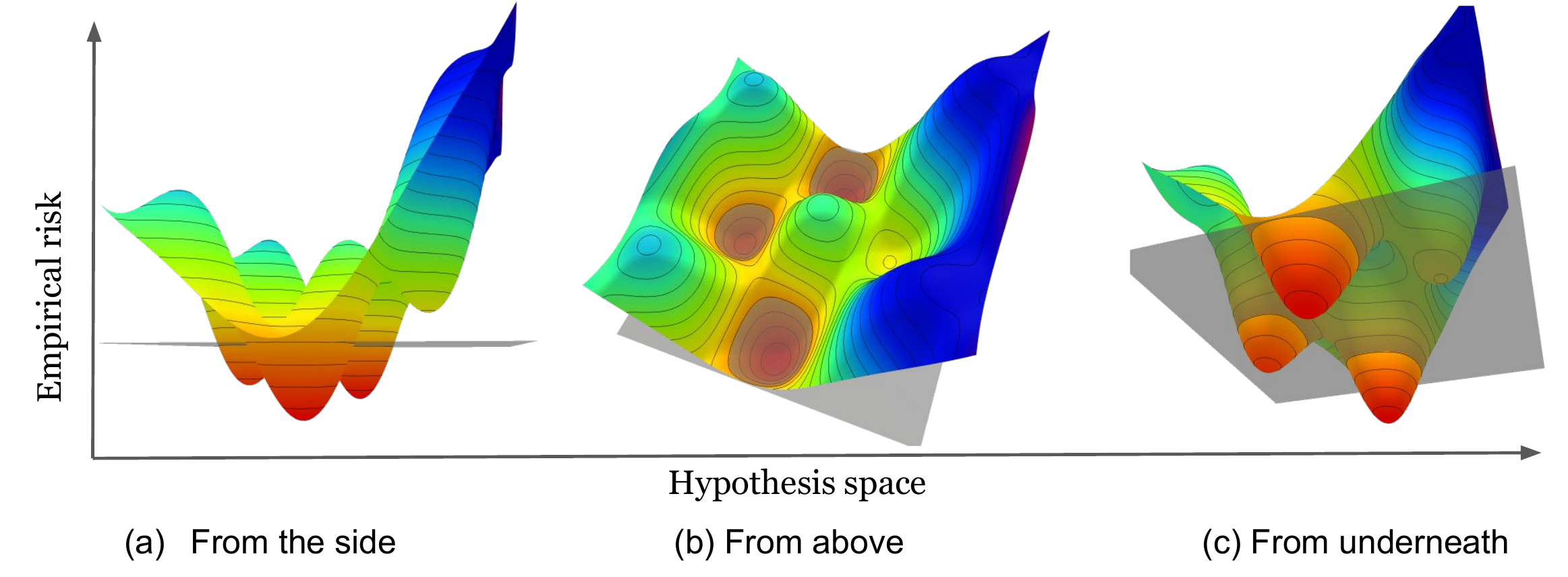}
		\caption{An illustration of a possible Rashomon set in two dimensional hypothesis space $\F$.  Models below the gray plane belong to the Rashomon set $\hat{R}_{set}(\F,\theta)$, where the height of the gray plane is adjusted by the Rashomon parameter $\theta$ defined in Section \ref{section:notation}.}
		\label{fig:rset_example}
	\end{figure*}

We summarize the contributions of this work as follows: (i) We define the \textit{Rashomon ratio} as an important characteristic of the Rashomon set. (ii) We provide generalization bounds for models from the Rashomon set, and show that the size of the Rashomon set serves as a barometer for the existence of accurate-yet-simpler models that generalize well. These are different from standard learning theory bounds that consider the distance between the true and empirical risks for the same function. (iii) We provide several approaches for estimating the size of the Rashomon set. (iv) We show empirically that when a large Rashomon set occurs, most machine learning methods tend to perform similarly, and also in these cases, simple or sparse (yet accurate) models exist. 
(v) We demonstrate that the Rashomon ratio, as a gauge of simplicity of a machine learning problem, is different from other known complexity measures such as VC-dimension, algorithmic stability, geometric margin, and Rademacher complexity. (vi) We show that larger Rashomon sets might occur in the presence of label or feature noise. 

	
\section{Related Work}

There are several bodies of relevant literature as discussed below.

    \paragraph{Rashomon sets: }
	Rashomon sets have been used for various purposes \citep{breiman2001statistical, srebro2010smoothness, fisher2018model, coker2018theory, tulabandhula2014robust,  meinshausen2010stability, LethamLeRuBr16, nevo2017identifying}. For instance, \citet{srebro2010smoothness} consider a loss-restricted class of close-to-optimal models, and with an assumption of H-smoothness of a loss function, they obtain a tighter excess risk bound through local Rademacher complexity \citep{bartlett2005local}. Our bounds do not work the same way and aim to prove a different type of result. Other works aim to search through the Rashomon set to find the most extreme models within it, rather than looking at the size of the Rashomon set, as we do in this work. \citet{fisher2018model} 
	 leverages the Rashomon set in order to understand the spectrum of variable importance and other statistics across the set of good models. Our work considers the existence of models from simpler classes rather than exploring the Rashomon set to find a range of variable importance or other statistics. The work of  \citet*{TulabandhulaRu13, TulabandhulaRu14MLJ, tulabandhula2014robust} uses the Rashomon set to assist with decision making, by finding the range of downstream operational costs associated with the Rashomon set. Rashomon sets are related to p-hacking and robustness of estimation, because the Rashomon set is a set over which one might conduct a sensitivity analysis to choices made by an analyst \citep{coker2018theory}. Large Rashomon sets can occur when the machine learning pipeline is underspecified. \citet{d2020underspecification} provides multiple examples of underspecification in computer vision, natural language processing, and healthcare domains; their work builds off of (an earlier version of) our work.
	 \citet{madras2019detecting} proposed a post-hoc local-ensemble method that measures underspecification for a given test datum. 
	 \citet{pmlr-v119-marx20a} studies conflicting predictions between models within the Rashomon set, while \citet{coston2021characterizing} investigates predictive disparities for algorithmic fairness. 

	\paragraph{Flat minima or wide valleys: } 
	The concept of flat minima (wide valleys) has been explored in the deep learning literature as a possible way to understand convergence properties of the complicated, non-convex loss functions that deep networks traverse during training \citep{hochreiter1997flat, dinh2017sharp, keskar2016large, chaudhari2016entropy}. Based on a minimum-message-length argument \citep{wallace1968information}, several works claim that flat loss functions lead to better generalization due to a robustness to noise around the minimum \citep{hochreiter1997flat, keskar2016large, chaudhari2016entropy}. Following \citet{hochreiter1997flat}, \citet{dinh2017sharp} define volume $\epsilon$-flatness, which constitutes a special case of our Rashomon sets, as shown in  Figure \ref{fig:flatminima} in Appendix \ref{appendix:flat_minima_figure}. In particular, the Rashomon set is defined over the
    hypothesis (functional) space, 
	while the volume $\epsilon$-flatness is defined in a parameter space (though sometimes we use parameter space for ease of computation), and the Rashomon set is not necessarily a single connected component (although it might be in the case of a convex loss over a continuous domain), while volume $\epsilon$-flatness pertains only to a connected set. This means that the Rashomon set can contain models from different local minima, or can be defined on discrete spaces, while volume  $\epsilon$-flatness is relevant only for continuous loss functions. Another way of quantifying flatness is $\sigma$-sharpness \citep{keskar2016large, dinh2017sharp}, which measures the change of the loss function inside a $\sigma$-ball in a parameter space.  In the case of a connected Rashomon set, this loss difference corresponds to the Rashomon parameter $\theta$. 
	
	\paragraph{Statistical learning theory: }
	Numerous works provide generalization bounds based on different complexity measures, and under different assumptions. Some discuss Rademacher \citep{srebro2010smoothness, kakade2009complexity} and Gaussian complexities \citep{kakade2009complexity}, PAC-Bayes theorems \citep{langford2003pac}, covering numbers bounds \citep{zhou2002covering}, and margin bounds \citep{vapnik1971uniform,schapire1998boosting, koltchinskii2002empirical}. In contrast, under assumptions elaborated in Section \ref{section:generalization}, the Rashomon ratio provides a certificate of the existence of a simpler model that generalizes. The use of approximating sets, as used extensively in this paper, is used throughout the literature on learning theory \citep{lecue2011interplay, lugosi2004complexity, schapire1998boosting, mendelson2003few}. An example of this is the classical generalization bound for boosting and margins   \citep{schapire1998boosting}, which uses combinations of several random draws of base classifiers to represent combinations of base classifiers. This is an instance of the so-called ``Maurey's lemma,'' which provides an approximating set for linear model classes.




\section{Rashomon Set Definitions and Notation}\label{section:notation}

	Consider a training set of $n$ data points $S=\{z_1, z_2, ..., z_n\}$,  $z_i = (x_i, y_i)$ drawn i.i.d$.$ from an unknown distribution $\mathcal{D}$ on a bounded set $\Z = \X \times \Y$, where $\X \subset \R^p$ and $\Y \subset \R$  are  an input  and an output space respectively. Our hypothesis space is $\F = \{f: \X \rightarrow \Y\}$. We limit the hypothesis space $\F$ to contain only models that vary within the bounded domain $\Z$ where the data reside. We will assume that the hypothesis space is bounded and that there is a prior distribution $\rho$ over functions in $\mathcal{F}$. 
	To measure the quality of a prediction made by a hypothesis, we use a loss function $\phi: \Y \times \Y \rightarrow \R^+$. Specifically, for each given point $z = (x,y)$ and a hypothesis $f$, the loss function is $\phi(f(x), y)$.  For a given $f$ we will also overload notation by writing $l: \F \times \Z \rightarrow \R^+$ that takes $f$ explicitly as an argument: $l(f, z ) = \phi(f(x), y)$. We are interested in learning a model $f$ that minimizes the \textit{true risk}
	$L(f) = \E_{z \sim \mathcal{D}} [\phi (f(x), y)],$
	which depends on unknown distribution $\mathcal{D}$ and therefore is estimated with an \textit{empirical risk}:
	$\hat{L}(f) = \frac{1}{n} \sum_{i=1}^n \phi (f(x_i), y_i).$

	The \textit{empirical Rashomon set} (or simply \textit{Rashomon set}) is a subset of models of the hypothesis space $\F$ that have training performance close to the best model in the class, according to a loss function \citep{breiman2001statistical, srebro2010smoothness, fisher2018model, coker2018theory, tulabandhula2014robust}. More precisely: 
	\begin{definition}[Rashomon set]\label{def:rset} 
		Given $\theta \geq 0$, a data set $S$, a hypothesis space $\F$, and a loss function $\phi$, the \textit{Rashomon set} $\hat{R}_{set}(\F, \theta)$ is the subspace of the hypothesis space defined as follows:
		\begin{equation*}\label{eq:rset_definition}
		\hat{R}_{set}(\F, \theta) := \{f \in \F: \hat{L}(f) \leq \hat{L}(\hat{f}) + \theta\},
		\end{equation*}
		where $\hat{f}$ is an empirical risk minimizer for the training data $S$ with respect to loss function $\phi$:
		$\hat{f} \in \argmin_{f \in \F} \hat{L}(f).$
	\end{definition}
	 If we want to specify the data set $S$ that is used to compute the Rashomon set, we indicate the data set in the subscript, as $\hat{R}_{{set}_{S}}(\F, \theta)$. \citet{fisher2018model}'s definition of Rashomon set is distinct from ours in that we typically use an empirical risk minimizer to define the Rashomon set instead of a prespecified reference model which is independent of the sample.

    The hypothesis space $\F$ 
    can be a well-defined hypothesis space, such as the space of decision trees of depth $D$ or neural nets with $D$ hidden layers, or it can be a more general space (a meta-hypothesis space) that contains models from different hypothesis spaces (e.g., linear functions, polynomials up to degree $D$, and piecewise constant functions). 

	We call $\theta$  the \textit{Rashomon parameter}. Since hypothesis spaces can vary from one problem to another, we will often normalize the size of the Rashomom set via the \textit{Rashomon ratio} $\hat{R}_{ratio}(\hat{R}_{set}(\F, \theta))$ which takes the Rashomon set as input and outputs a value between 0 and 1. Given a prior, $\rho$, on the hypothesis space, the Rashomon ratio measures the fraction of the hypothesis space contained in the Rashomon set. Unless explicitly specified, $\rho$ is assumed to be uniform. 
	For simplicity, we will denote the Rashomon ratio as $\hat{R}_{ratio}(\F, \theta)$. In general, the Rashomon ratio is $\hat{R}_{ratio}(\F, \theta) = \int_{f\in\mathcal{F}} 1_{f\in \hat{R}_{set}(\mathcal{F},\theta)}  \rho(f) d \rho$.
	If the hypothesis space has a uniform prior, then the Rashomon ratio is the volume of the Rashomon set divided by the volume of the hypothesis space $\hat{R}_{ratio}(\F,\theta) = \frac{\V(\hat{R}_{set}(\F,\theta))}{\V(\F)}$, where $\V(\cdot): \F\rightarrow \R_+$ is the volume function. If the hypothesis space is discrete with a uniform prior, the Rashomon ratio can be computed as $\hat{R}_{ratio}(\F, \theta) = \frac{\left|\hat{R}_{set}(\mathcal{F},\theta)\right|}{\left|\F\right|},$ 
	where $\left|A\right| = \sum_{f\in \mathcal{F}} \mathbb{1}_{f\in A}$.
The Rashomon ratio represents the fraction of models that are good (the fraction of models that fit the data about equally well). A larger Rashomon ratio implies that more models perform about equally well. 
The data set $S$ is denoted in the subscript, as  $\hat{R}_{{ratio}_S}(\F, \theta)$.
	 

   We consider \textit{true Rashomon sets} that contain models with low true risk, relative to the optimal true risk value, with parameter $\gamma > 0$:
    \[R_{set}(\F, \gamma) = \{f\in \F: L(f)\leq L(f^*)+\gamma\},\]
    where $f^*\in\F$ minimizes the true risk. $R_{ratio}(\F, \theta)$ denotes the Rashomon ratio for the true Rashomon set.
    	
    
    A large true Rashomon set, as it turns out, can be a certificate of the existence of a simpler model. Though, since we can never actually explore the true Rashomon set, we would never know whether it will be (or has been) useful for a particular problem. We explain this in Section \ref{section:generalization}, and spend most of our effort considering \textit{empirical} Rashomon sets, which are easier to work with in practice. 
    
    When the hypothesis space has a parameterized representation (denote $\F_{\Omega}$), we assume that we can parameterize each model $f \in \F_{\Omega}$ with a unique parameter vector $\omega \in \Omega$ of finite length and denote $f(z)= f_{\omega}(z)$. 
    In the next section, we discuss 
    properties of the Rashomon ratio as a complexity measure.

\section{Rashomon ratio as a simplicity measure}\label{sec:compelity_measures}

\begin{table}[t]
    \centering
    \small
    \caption{Comparison of Rashomon ratio and other complexity measures. The Rashomon ratio considers the fact that there are multiple good models and is a property both of the hypothesis space and data. 
    }
    \label{tab:complexity_comparison}
    \begin{tabular}{|p{0.33\columnwidth}|p{0.17\columnwidth}|p{0.2\columnwidth}|p{0.15\columnwidth}|}
    \hline
        Complexity measure & Property of & Depends on data & Considers set of good models \\\hline
        VC Dimension & hypothesis space & no&no\\\hline
        Algorithmic stability (Hypothesis stability \cite{bousquet2002stability}) & algorithm, hypothesis space & no&no\\\hline
        Empirical algorithmic stability (Algorithmic hypothesis stability \cite{bousquet2002stability}) & algorithm, hypothesis space & 
        yes&no\\\hline
        Geometric margins & one function & yes&no\\\hline
        Empirical Local Rademacher Complexity \citep{bartlett2005local} & hypothesis space & depends on features, not on labels &no\\\hline
        Rashomon ratio & hypothesis space & yes, but not always on labels (see Theorem \ref{th:ridge_regression}) &yes\\\hline
   \end{tabular}
\end{table}

    The Rashomon ratio, as a \textit{property of a data set and a hypothesis space}, serves as gauge of simplicity of the learning problem. 
    If the Rashomon set is large, many different reasonable optimization procedures could lead to a model from the Rashomon set. Therefore, for large Rashomon sets, accurate models tend to be easier to find (since optimization procedures can find them). 
    \textit{In other words, if the Rashomon ratio is large, the Rashomon set could contain many accurate and simple models, and the learning problem becomes simpler.}
    On the other hand, smaller Rashomon ratios might imply a harder learning problem, especially in the case of few deep and narrow local minima. 
    
    The Rashomon ratio can give insight into the simplicity of a learning problem, though 
    it was designed for a fundamentally different goal than well-known complexity measures from learning theory (see Table \ref{tab:complexity_comparison}). While those complexity measures were designed to help us understand generalization, the Rashomon ratio (with additional assumptions) helps us understand whether simpler functions might exist with the same level of accuracy as complex functions.
    The Rashomon ratio depends on a loss function, the hypothesis space, and a data set, while the majority of other measures are either data-agnostic or focus on properties of a specific model in the space.


    \paragraph{The Rashomon ratio is different from VC dimension.} 
    The VC dimension \citep{vapnik1971uniform} shows the expressive power of a hypothesis space for \textit{any} data set including \textit{an extreme} arrangement of data points and labels. On the contrary, the Rashomon set depends on an empirical risk minimizer that we compute directly for a specific data set, which may not be extreme.
    
    \paragraph{The Rashomon ratio is different from algorithmic stability.}
    Algorithmic stability \citep{bousquet2002stability} (see Definition \ref{def:algorithmic_stability}) depends on a change to a data set, whereas Rashomon Ratio uses a fixed data set. As we will show in Theorem \ref{th:ridge_regression} in Appendix \ref{appendix:ridge}, in the case of linear least squares regression, the Rashomon ratio depends on features ($X$) only, and does not depend on regression targets $Y$. In contrast, hypothesis stability depends heavily on $Y$. In fact, if we can control how we change the set of targets, hypothesis stability (a form of algorithmic stability) can be made to change by an arbitrarily large amount. This is formalized in Theorem \ref{th:ratio_stability} with proof in Appendix \ref{appendix:alg_stability}. 
    
	\newcommand{\ThRatioStability}
	{Consider a distribution $P_{X}$ over a discrete domain $\X = \{x_1,...x_N\}$ and a learning algorithm $A$ that minimizes the sum of squares loss 
	: $\|X\omega-Y\|^2_2$.
	for a linear hypothesis space $\F_{\Omega}$.
	For any $\lambda >0$, there exist joint distributions $P_{X, Y_1}$ and $P_{X, Y_2}$ where for $\mathbf{X}$ drawn i.i.d$.$ from $P_{X}$, $\mathbf{Y}_1$ drawn from  $P_{Y_1|\mathbf{X}}$ over $\Y|\X$, and $\mathbf{Y}_2$ drawn from $P_{Y_2|\mathbf{X}}$ over $\Y|\X$,  the expected Rashomon ratios are the same:
        \begin{equation*}\E_{P_{X,Y_1}}[ \hat{R}_{ratio_{ \mathbf{S}_1}}(\F_{\Omega}, \theta)] = \E_{P_{X, Y_2}}[ \hat{R}_{ratio_{ \mathbf{S}_2}}(\F_{\Omega}, \theta)],\end{equation*}
		yet hypothesis stability constants are different by our arbitrarily chosen value of $\lambda$:
		$\tilde{\beta}_2 - \tilde{\beta}_1 \geq \lambda,$
		where $\mathbf{S}_1$ and $\mathbf{S}_2$ denote data sets $\mathbf{S}_1 = [\mathbf{X}, \mathbf{Y_1}]$ and $\mathbf{S}_2 = [\mathbf{X}, \mathbf{Y_2}]$, $\tilde{\beta}_1$ is the hypothesis stability coefficient of algorithm $\A$ for distribution $P_{X, Y_1}$ and $\tilde{\beta}_2$ is the hypothesis stability coefficient for distribution $P_{X, Y_2}$.}
	
	\begin{theorem}[Rashomon ratio is not algorithmic stability]\label{th:ratio_stability}
	\ThRatioStability
	\end{theorem} 

    \paragraph{The Rashomon ratio is different from geometric margins.}
    The margin (i.e., the minimum margin of the maximum margin classifier) \citep{schapire1998boosting, burges1998tutorial} depends on points closest to the decision boundary (support vectors), while the Rashomon set does not necessarily rely on the support vectors and may depend on the full data set. Theorem \ref{th:ratio_margin}  shows this with proof in Appendix \ref{appendix:margin}.
    
    	\newcommand{\ThRatioMargin}{For any fixed $0 < \lambda < 1$, there exists a fixed hypothesis space $\F_{\Omega}$, a Rashomon parameter $\theta$, and there exist two data sets $S_1$ and $S_2$ with the same empirical risk minimizer $\hat{f}\in \F_{\Omega}$ such that the geometric margin $d$ is the same for both data sets,
		yet the Rashomon ratios are different:
		\begin{equation*}|\hat{R}_{ratio_{S_1}}(\F_{\Omega}, \theta) - \hat{R}_{ratio_{S_2}}(\F_{\Omega}, \theta)| > \lambda.\end{equation*}}
	
	\begin{theorem}[Rashomon ratio is not the geometric margin]\label{th:ratio_margin} 
	\ThRatioMargin
	\end{theorem}
	
\paragraph{The Rashomon ratio is different from empirical local Rademacher complexity.}
Empirical Rademacher complexity \citep{bartlett2005local} (see Definitions \ref{def:rademacher}) measures how well the hypothesis space can fit random assignments of the labels. The Rashomon ratio uses fixed labels. It measures the number of models that are close to optimal. In other words, the Rashomon set benefits from having multiple similar models, while Rademacher complexity treats them as equivalent. Please see Theorem \ref{th:ratio_rademacher} with definition and proof in Appendix \ref{appendix:rademacher}.
   
   	\newcommand{\ThRatioRademacher}
	{
	For $0 < \lambda < 1$, there exist two data sets $S_1$ and $S_2$, a hypothesis space $\F_{\Omega}$, and a Rashomon parameter $\theta$ such that the local Rademacher complexities defined on the Rashomon sets for $S_1$ and $S_2$ are the same:
		$\hat{R}_n^{S_1}\left(\hat{R}_{set}(\F_{\Omega}, \theta)\right) = \hat{R}_n^{S_2}\left(\hat{R}_{set}(\F_{\Omega}, \theta)\right),$
		yet the Rashomon ratios are different:
		$
		\left|\hat{R}_{ratio_{S_1}}(\F_{\Omega}, \theta) - \hat{R}_{ratio_{S_2}}(\F_{\Omega}, \theta)\right| > \lambda.
	$
	}
	
	\begin{theorem}[Rashomon ratio is not local Rademacher complexity]\label{th:ratio_rademacher}
	\ThRatioRademacher
	\end{theorem}
   

Now that we have established that the Rashomon ratio is not the same as other simplicity measures, we can now shift our focus to proving simplicity and generalization properties of models in the Rashomon set. This is critical to our thesis that simple-yet-accurate models exist.

\section{Rashomon Set Models: Simplicity and Generalization}\label{section:generalization}

    Consider two hypothesis (functional) spaces with different levels of complexity, where the lower-complexity space serves as a good \textit{approximating set} (i.e., a good \textit{cover}) for the higher-complexity space. The hypothesis spaces are called $\F_1$, for the simpler space, and $\F_2$, for the more complex space, where $\F_1 \subset \F_2$. Here, to determine the complexity of a hypothesis space, we use traditional notions of complexity (conversely, simplicity) such as covering numbers or VC dimension. For a useful example of a simple and a more complex space, consider $\F_2$ to be the space of linear models with real-valued coefficients in a space of $d$ dimensions, and consider $\F_1$ to be the space of scoring systems \citep{UstunRu2016SLIM}, which are sparse linear models, with at most $d'$ nonzero integer coefficients, $d'\ll d$. Another example is if the more complex space $\F_2$ consists of boosted decision trees, and $\F_1$ consists of single trees. 
     Generalization bounds would be tighter if we could use the lower complexity space $\F_1$, but as we are considering functions from $\F_2$,  learning theory often has us include the complexity of $\F_2$ in the bound. Given this setup, we have several questions to answer: 
\begin{enumerate}
     \item What if the higher-complexity hypothesis space we chose were more complex than necessary for modeling the data? In that case, if we had instead used the simpler model class $\F_1$, would we still get a model that is (almost) as good as we could have obtained using the more complex class $\F_2$? If so, perhaps we can leverage the complexity of the simpler model class $\F_1$ for generalization bounds on our model rather than the more complex class $\F_2$. We answer this question in  Section \ref{sec:exis0}, where a property on the complex space that will help us is that \textbf{the true Rashomon set of $\F_2$ is large enough to admit a simpler model}. We do not need to know what this model is and we may never discover it (we would likely discover a different model using data).

    \item Under what conditions on the complex and simpler model classes does the property we mentioned above (that the Rashomon set includes simpler models) hold? 
    Does it hold often? As it turns out, under natural conditions on the function class and loss function, a large Rashomon set in the complex class does imply the existence of simple-yet-accurate models. We identify these conditions in Section \ref{sec:exis1}, namely that the loss function is smooth, and that $\F_1$ serves as a cover for $\F_2$. Thus, \textbf{under these natural conditions that occur in practice, a large Rashomon set for a complex class of functions implies the existence of a simple-yet-accurate model}.

    \end{enumerate}

    The bounds we present in Section \ref{sec:exis0} do not serve the same purpose as standard statistical learning theoretic bounds, as they do not aim to bound generalization error for a single function (that is, the difference between training and test loss for a function). Rather, we are interested in bounding train loss of one function (a \textit{simpler} function) with test loss of another (the optimal model in a \textit{more complex} function class). 
    Standard learning theory analysis handles the single function case nicely; we are concerned with other questions here.

\subsection{The True Rashomon Set Can Be Very Helpful... But You Might Not Know When}\label{sec:exis0} 

    As in classic Occam's razor bounds, we start with finite hypothesis spaces. Consider finite hypothesis spaces $\F_1$ and $\F_2$, where  $\F_1 \subset \F_2$. Consider the first question discussed above: Given $\F_1$ and $\F_2$, can we have a guarantee that a model we produce using a simpler function class $\F_1$ on our data could be approximately as good as the test performance of the best model from $\F_2$? 
    In the following theorem, we will make a key assumption that allows us to do this: we assume that the Rashomon set of $\F_2$ includes a member of the simpler class of functions, $\F_1$, even if we do not know which function it is. 
     Later, in Section \ref{sec:exis1}, we show conditions under which simple models from $\F_1$ are proven to exist in the Rashomon set of $\F_2$, which depends on the size of $\F_2$'s Rashomon set.
    Here, $|\mathcal{F}|$ denotes the cardinality of the finite space $\mathcal{F}$. These bounds can be generalized to infinite hypothesis spaces with a simple extension to covering numbers, but they are designed for intuition, which works nicely with finite hypothesis spaces. Again, this is different from a regular learning theory bound as it does not consider generalization of just one function.
    
    \newcommand{\ThApproximatingFiniteOne}
    {
        Consider finite hypothesis spaces $\F_1$ and $\F_2$, such that $\F_1\subset\F_2$. Let the loss $l$ be bounded by $b$, $l(f_2,z)\in[0,b] \;\;\forall f_2\in\F_2, \forall z\in\mathcal{Z}$.
        Define an optimal function $\fts\in \textrm{\rm argmin}_{f_2\in \F_2} L(f_2)$. Assume that the true Rashomon set includes a function from $\F_1$, so there exists a model $\tilde{f}_1\in\F_1$ such that $\tilde{f}_1\in R_{set}(\F_2,\gamma)$. (Note that we do not know $\tilde{f}_1$.)
        In that case, for any $\epsilon > 0$ with probability at least $1 - \epsilon$ with respect to the random draw of data:
        \begin{equation}\label{eq:theorem43}
        L(f_2^*) - b\sqrt{\frac{\log |\F_1| + \log 2/\epsilon}{2n}} \leq \hat{L}(\hat{f}_1) \leq L(f_2^*) + \gamma+b\sqrt{\frac{ \log 1/\epsilon}{2n}},
        \end{equation}
        where $\hat{f}_1 \in \argmin_{f_1\in\F_1}\hat{L}(f_1)$. (Unlike $\tilde{f}_1$, we do know $\hat{f}_1$ because we can calculate it.)
    }

    \begin{theorem}[The advantage of a true  Rashomon set]\label{th:approximating_finite_1}
    \ThApproximatingFiniteOne
    \end{theorem} 

    That is, we can bound the best empirical model from $\F_1$ with the true risk of the best model within $\F_2$. 
    Thus, if the Rashomon set is large enough to include a single model from $\F_1$, we can work with the simpler class $\F_1$ in practice and achieve strong performance guarantees. 

    The main assumption in Theorem \ref{th:approximating_finite_1} is about the population, and does not rely on the sample. It relies only on the existence of one special function in the true  Rashomon set. There are no smoothness assumptions on the loss function.
    If the main assumption of this theorem holds, then we gain the benefit of guarantees on $\F_2$ from looking only at $\F_1$ empirically. We cannot check whether the assumption holds since it involves the true risk, but practitioners can reap the benefits of it anyway: The possibility of a large Rashomon set may embolden the practitioner to minimize over $\F_1$, achieving test error close to the best of $\F_2$ if the conditions of Theorem \ref{th:approximating_finite_1} are indeed satisfied. 

    To make the connection of this result to Rashomon sets more explicit, we will choose a specific relationship between $\F_1$ and $\F_2$, specifically, $\F_1$ will be a random sample of $\F_2$ that is chosen prior to, and separately from, learning. This is an artificial example in that $\F_1$ would never actually be chosen as a random sample from $\F_2$ in reality. However, the random sampling assumption permits $\F_1$ to be distributed fairly evenly within $\F_2$, which, arguably, could approximate the way some simpler spaces are embedded in more complex spaces.  


    If $\F_1$ is a random sample of functions from $\F_2$, 
     and if $\F_2$ has a large true Rashomon set, then the true Rashomon set is likely to include at least one model from $\F_1$.
    In that case, Theorem \ref{th:approximating_finite_1} applies. This is formalized below. 

    \newcommand{\ThApproximatingFiniteThree}
    {
    Consider finite hypothesis spaces $\F_1$ and $\F_2$, such that $\F_1 \subset \F_2$ and $\F_1$ is uniformly drawn from $\F_2$ without replacement. For loss $l$ bounded by $b$, if the Rashomon ratio is at least 
    $$R_{ratio}(\F_2,\gamma)\geq  1 - \epsilon^{\frac{1}{|\F_1|}} $$
    then for  any $\epsilon > 0$, with probability at least $\left(1-\epsilon\right)^2$ with respect to the random draw of functions from $\F_2$ to form $\F_1$ and with respect to the random draw of data, the assumptions of Theorem \ref{th:approximating_finite_1} hold and thus the bound \eqref{eq:theorem43} holds.
    }
    
    \begin{theorem}[Example of the advantage of a large true Rashomon set]\label{th:approximating_finite_3}
    \ThApproximatingFiniteThree
    \end{theorem}


    
     Table \ref{table:th43examples} shows possible values of the lower bound on the Rashomon ratio, given $|\F_1|$ and $\epsilon$. For example, the first line of the table states that if at least a tiny fraction (0.0053\%) of the complex function space $\F_2$ consists of good models, and there exists at least 100,000 simple functions in $\F_1$, then the chance that we will find an accurate-but-simple model on our data set is over 99\%.
    
    
    The intuition for Theorem \ref{th:approximating_finite_3} holds beyond the case when $\F_1$ is randomly sampled from $\F_2$, it holds whenever $\F_1$ covers $\F_2$ sufficiently well. This intuition is that as the true Rashomon ratio increases, it is more likely that the empirical risk minimum of $\F_1$ will be close to the minimum of the true risk of $\F_2$.
    
     \begin{table}[t]
    \centering
    \caption {Examples of the possible usage of Theorem \ref{th:approximating_finite_3}. }
    		\label{table:th43examples}
    \begin{tabular}{|p{\columnwidth}|} 
    				\hline
    			If $|F_1|= 100000$ then to get the bound \eqref{eq:theorem43} to hold with probability at least 99\% \\\hspace{0.5cm}
    			the Rashomon ratio should be $R_{ratio}(\F_2,\gamma) \geq 0.0053\%$.\\\hline
    			If $|F_1|= 10000$ then to get the bound \eqref{eq:theorem43} to hold with probability at least 99\% \\\hspace{0.5cm}
    			the Rashomon ratio should be $R_{ratio}(\F_2,\gamma) \geq 0.053\%$.\\\hline
    			If $|F_1|= 1000$ then to get the bound \eqref{eq:theorem43} to hold with probability at least 99\% \\\hspace{0.5cm}
    			the Rashomon ratio should be $R_{ratio}(\F_2,\gamma) \geq 0.53\%$.\\\hline
    			\end{tabular}

    \end{table}

\subsection{Proving the Existence of Simple-yet-Accurate Models with Good Generalization}\label{sec:exis1}

 Theorems \ref{th:approximating_finite_1} and \ref{th:approximating_finite_3} do not take advantage of the fact that we can investigate $\F_2$ empirically, and \textit{more easily than we can investigate} $\F_1$; these theorems instead only discuss exploration of $\F_1$. Thus, the next analysis makes two improvements: (1) it studies empirical Rashomon sets instead of true Rashomon sets, (2) it substitutes the unrealistic random draw assumption for a realistic smoothness assumption. We now  assume smoothness of the loss over the function space.
   

    
    The field of Approximation Theory provides general conditions under which classes of functions can approximate each other. Given a target function from one class, we want to know whether a sequence of functions from another class can converge to the target. Table \ref{table:approximation} in Appendix \ref{appendix:approximatingtable} shows classes of functions $\F_2$ that can be approximated by classes $\F_1$. For instance, piecewise constant functions, such as decision trees, can approximate smooth functions.

    For a hypothesis space $\F$ and some $f' \in \F$, define the $\delta$-ball of functions centered at $f'$ as $B_{\delta}(f')=\{f \in \F: \|f' - f\|_p \leq \delta\}.$  A loss $l: \F\times\X \rightarrow \Y$  is said to be K-\textit{Lipschitz}, $K \geq 0$, if for all $f_1, f_2 \in \F$ and for all $z \in \mathcal{Z}$:
    $|l(f_1, z) -l(f_2, z)| \leq K \|f_1 - f_2\|_p.$ The $p$-norm can be defined, for example, as $\|f\|_p = \left(\int_{\X}|f|^pd\mu\right)^{1/p}$, where $\mu$ is a measure on $\X$. Define a $\delta$-packing as a  finite set $\Xi =\{\xi_1, ..., \xi_k | \xi_i \in \F\}$ such that $\|\xi_i - \xi_j\|_p > \delta$, meaning that $B_{\delta/2}(\xi_i) \cap B_{\delta/2}(\xi_j) = \emptyset$ for all $i \neq j$. The \textit{packing number} $\mathcal{B}(\F, \delta)$ is the largest $\delta$-packing. 
	
	  Theorem \ref{th:existence_multiple} below uses the approximating set argument from the previous subsection, but now requires the Rashomon set to be large enough to include balls of functions rather than using the random draw assumption. As long as the set of simpler functions is distributed well among the full hypothesis space, each ball contains at least one function from the simpler class.
    
    \newcommand{\ThExistenceMultiple}
    {
    For $K$-Lipschitz  loss $l$ bounded by $b$, consider hypothesis spaces $\F_1$ and $\F_2$, $\F_1 \subset \F_2$. With probability greater than $1 - \epsilon$ w.r.t$.$ the random draw of training data, if for every model $f_2\in \hat{R}_{set}(\F_2,\theta)$ there exists $f_1 \in \F_1$ such that $\|f_2 - f_1\|_p \leq \delta$, then there exists at least $B = \mathcal{B}(\hat{R}_{set}(\F_2, \theta), 2\delta)$ functions $\bar{f_1^1}, \bar{f_1^2}...,\bar{f_1^B} \in \hat{R}_{set}(\F,\theta)$ such that:
    
    \begin{enumerate}[leftmargin=0.055\textwidth]
        \item They are from the simpler space: $\bar{f_1^1}, \bar{f_1^2}...,\bar{f_1^B} \in \F_1$.
        \item $\left|L(\bar{f_1^i}) - \hat{L}(\bar{f_1^i})\right| \leq 2K R_n(\F_1) + b\sqrt{\frac{\log(2/\epsilon)}{2n}},$ for all $i \in [1,..,B]$,
        where  $R_n(\F)$ is the Rademacher complexity of a hypothesis space $\F$. (This is from standard learning theory.)
    \end{enumerate}
    }
    
    \begin{theorem}[Existence of multiple simpler models]\label{th:existence_multiple}
    \ThExistenceMultiple
    \end{theorem}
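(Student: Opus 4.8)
The plan is to promote the single-function arguments of Theorems~\ref{th:existence_single_1} and~\ref{th:existence_single_2} to a packing of the Rashomon set, so that each cell of the packing contributes one distinct simple model, and then to obtain the generalization bound for all of them at once from a single uniform-convergence event. First I would invoke the definition of the packing number to fix a maximal $2\delta$-packing $\{\xi_1,\dots,\xi_B\}\subseteq \hat{R}_{set}(\F_2,\theta)$ with $B=\mathcal{B}(\hat{R}_{set}(\F_2,\theta),2\delta)$; by the packing condition at radius $2\delta$ (half-radius $\delta$), the balls $B_{\delta}(\xi_1),\dots,B_{\delta}(\xi_B)$ are pairwise disjoint. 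Each $\xi_i$ lies in $\hat{R}_{set}(\F_2,\theta)$, so the approximating-set hypothesis supplies a model $f_1^i\in\F_1$ with $\|\xi_i-f_1^i\|_p\le\delta$, i.e.\ $f_1^i\in B_{\delta}(\xi_i)$. Since the balls $B_{\delta}(\xi_i)$ are pairwise disjoint, the $f_1^i$ are pairwise distinct, which delivers conclusion~1 (they are $B$ distinct functions of the simpler space $\F_1$) and matches the claimed count.

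\emph{Membership in the Rashomon set} is where I expect the real work to lie. Using $K$-Lipschitzness exactly as in Theorems~\ref{th:existence_single_1} and~\ref{th:existence_single_2}, one has $|\hat{L}(f_1^i)-\hat{L}(\xi_i)|\le K\|f_1^i-\xi_i\|_p\le K\delta$. A purely naive estimate only yields $\hat{L}(f_1^i)\le \hat{L}(\hat{f}_2)+\theta+K\delta$, i.e.\ membership in the slightly enlarged set $\hat{R}_{set}(\F_2,\theta+K\delta)$ rather than in $\hat{R}_{set}(\F_2,\theta)$ itself. To land in $\hat{R}_{set}(\F_2,\theta)$, I would carry out the packing with the additional requirement that each ball $B_{\delta}(\xi_i)$ be contained in the Rashomon set---the natural $B$-fold analogue of the single-ball hypothesis $\hat{R}_{set}(\F_2,\theta)\supset B_{\delta}(\cdot)$ used in Theorem~\ref{th:existence_single_2}---so that $f_1^i\in B_{\delta}(\xi_i)\subseteq \hat{R}_{set}(\F_2,\theta)$ directly. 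The obstacle is precisely this bookkeeping: guaranteeing that the $B$ packing cells are not merely centered in the Rashomon set but genuinely nested inside it, which is what upgrades ``close to a good model'' into ``is itself a good model.''

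Finally, for conclusion~2 I would observe that $|L(f)-\hat{L}(f)|\le 2KR_n(\F_1)+b\sqrt{\log(2/\epsilon)/(2n)}$ is the same Rademacher uniform-convergence statement already used for part~2 of Theorem~\ref{th:existence_single_1} (symmetrization plus McDiarmid, together with Talagrand contraction to pass from the loss class to $\F_1$ at the cost of the factor $K$). The key point is that this is a \emph{single} event of probability at least $1-\epsilon$ that holds simultaneously for every $f\in\F_1$, so it applies to all of $f_1^1,\dots,f_1^B$ at once---no union bound over $i$ is needed and the confidence level does not degrade as $B$ grows. Intersecting this event with the deterministic, data-independent construction of the $f_1^i$ from the previous steps yields both conclusions with probability at least $1-\epsilon$.
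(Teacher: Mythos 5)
Your construction is the same as the paper's: fix a maximal $2\delta$-packing $\{\xi_1,\dots,\xi_B\}$ of $\hat{R}_{set}(\F_2,\theta)$, attach to each center an approximating $f_1^i\in\F_1$ with $\|\xi_i-f_1^i\|_p\le\delta$, get distinctness from the packing separation (the paper argues via $\|\xi_i-\xi_j\|_p>2\delta$ and the triangle inequality, you via disjointness of the $\delta$-balls---equivalent), and obtain part~2 from the single uniform Bartlett--Mendelson event over all of $\F_1$, with no union bound over $i$, exactly as the paper does by citing the same bound ``as before.'' Where you diverge is instructive: the paper's proof simply concludes ``the Rashomon set contains at least $B$ models from $\F_1$'' without ever verifying $\hat{L}(f_1^i)\le\hat{L}(\hat{f}_2)+\theta$, and you are right that the stated hypotheses only give $\hat{L}(f_1^i)\le\hat{L}(\xi_i)+K\delta\le\hat{L}(\hat{f}_2)+\theta+K\delta$, i.e.\ membership in $\hat{R}_{set}(\F_2,\theta+K\delta)$; unlike Theorem~\ref{th:existence_single_1}, the centers here are arbitrary points of the Rashomon set rather than the empirical minimizer, so no choice of $\delta$ relative to $\theta/K$ closes the gap. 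Your patch---requiring each $B_{\delta}(\xi_i)$ to lie inside the Rashomon set, the $B$-fold analogue of the ball hypothesis in Theorem~\ref{th:existence_single_2}---is a valid repair, though it strengthens the hypotheses beyond what the theorem states; the alternative repair is to keep the hypotheses and weaken the conclusion to membership in the inflated set $\hat{R}_{set}(\F_2,\theta+K\delta)$ (which leaves both the count $B$ and the generalization bound of part~2 intact, since part~2 needs only $f_1^i\in\F_1$). So your proposal matches the paper's route and, on the one step the paper elides, is the more careful of the two.
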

    
    
    From Theorem \ref{th:existence_multiple}, we see that since larger Rashomon sets have larger packing numbers, they contain more simpler models with good generalization guarantees. 
    Note that in Theorem 
    \ref{th:existence_multiple}, other complexity measures  from learning theory could be used. 
    We chose Rademacher complexity as it provides the tightest bound among standard complexity measures.
    

 Theorem \ref{th:existence_multiple} has practical implications. \textit{If the Rashomon set is large, and the smoothness conditions are obeyed, Theorem \ref{th:existence_multiple} shows that many simple-yet-accurate models would exist, prior to actually finding them.} Knowledge that simple models exist implies it will be worthwhile to actually solve the difficult optimization problem to find a simple model.

Thus, \textit{if} the Rashomon set is large, we have a guarantee. But how will we know when is the Rashomon set large? This is what we answer in the next section.



\section{Larger Rashomon Ratios Correlate with Similar Performance of Machine Learning Algorithms, and Good Generalization}\label{section:experiments}


We expect that in many real-world applications of machine learning, properties similar to the assumptions behind our theorems hold, i.e., that large enough Rashomon sets intersect simpler hypothesis spaces in ways that lead to or explain good performance. This conjecture is difficult to verify theoretically because it is not a mathematical conjecture about the structure of two specific function spaces, but a statement about many function spaces, and how they interact with commonly occurring data sets. Thus, we consider this question empirically.

Our experiments will demonstrate that, in the case where Rashomon sets are large, two conclusions follow that are consistent with our theoretical development. First, \textit{training} performance in \textit{simpler} hypothesis spaces is correlated with \textit{test} performance in the more \textit{complex} hypothesis spaces (Theorem \ref{th:approximating_finite_1}), and second, that good \textit{training} performance in a \textit{simpler} space $\F_1$ correlates with \textit{good generalization} performance of other models in 
the more \textit{complex} space $\F_2$. 
Most importantly, our experiments suggest an intriguing alternative to the often difficult computational problem of directly estimating the size of the Rashomon set, namely that {\em similar performance across a range of algorithms with different hypothesis spaces is strongly correlated with a large Rashomon set}.

Now we will describe our experimental setup for arriving at these conclusions.

\subsection{Experimental Design}

\noindent \paragraph{Data sets.}
	We used 38 machine learning classification data sets from the UCI Machine Learning Repository \citep{Dua:2017}, among which 16 have categorical features and 22 have real-valued features. The majority of the data sets are binary classification data sets and we adapted the rest to binary classification (as shown in  Table \ref{table:datasets} in Appendix \ref{apendix:more_data}) to make importance sampling easier (as discussed in Appendix \ref{appendix:perfomance_figures}). The number of features varies from 3 to 784, with the majority of the data sets being in the 15--25 feature range. Appendix \ref{apendix:more_data} contains a description of the data sets we considered.

\noindent \paragraph{Definition of complex hypothesis space.} 
For these experiments, we will consider $\F_2$ to be the union ($\F_{union}$) of the hypothesis spaces of five popular machine learning algorithms: logistic regression (LR), CART, random forests (RF), gradient boosted trees (GBT), and support vector machines with RBF kernels (SVM). CART, RF and GBT were regularized by varying the tree depth, the minimum number of samples required to split a node, the minimum number of samples required to create a leaf node, and the number of trees in the ensemble. SVMs were tuned by varying the regularization parameter and the kernel coefficient and LR by varying the regularization parameter. Appendix \ref{appendix:perfomance_figures} discusses the effect of regularization on the model class. We chose algorithms that search hypothesis spaces of different complexity to ensure that these algorithms produce diverse models.
The notion of $\F_2$ as a union of hypothesis spaces may seem surprising at first, but it is consistent with how many machine learning practitioners approach problems by running a collection of machine learning techniques in parallel and comparing the results, creating a {\em de facto} union space. Our experiment has three steps, as follows.

\noindent \paragraph{Step 1: Run all machine learning algorithms.}
We obtain training and generalization performance from all algorithms (logistic regression, CART, random forests, gradient boosted trees, and SVM with RBF kernels) on all data sets.

\noindent \paragraph{Step 2: Estimate the size of the Rashomon set.}
It is not possible to measure the Rashomon set of such a complex model space, so we will estimate its size by sampling from an approximating set, which is decision trees of bounded depth. 
Decision trees are easy to sample and can refine an input space arbitrarily finely as tree depth increases.
With sufficient depth they can approximate many other types of hypothesis spaces, including those used by other machine learning methods. Thus, we will measure the size of the Rashomon set and Rashomon ratio in decision trees of depth seven as a surrogate for measuring these quantities in $\F_2$. The suitability of these trees for this role is an empirical observation about the data sets we have used; they may not be a suitable surrogate for some other data sets, e.g., imagery data. 
We measure the size of the empirical Rashomon ratio as a surrogate for the true Rashomon ratio when referring to Theorem \ref{th:approximating_finite_1}. 
To estimate the Rashomon ratio of depth seven decision trees, we used importance sampling. 
The proposal distribution assigns the correct labels to the leaves of the tree based on the training data. Since the data are populated on a bounded domain, to grow a tree up to a depth $D$ fully, we make $2^{D-1}$ splits.
	For each data set and each depth, we average our results over ten folds for data sets with less than 200 points and over five folds for data sets with more than 200 points, and we sample 250,000 decision trees per fold. We choose the Rashomon parameter $\theta$ to be 5\%, and, therefore, all the models in the Rashomon set have empirical risk not more than $\hat{L}(\hat{f})+ 0.05$, where $\hat{L}(\hat{f})$ is the lowest achievable empirical risk across all algorithms we considered. 
	We further discuss experimental setup in Appendix \ref{appendix:perfomance_figures}.

\noindent \paragraph{Step 3: See if a large Rashomon Set in Step 2 correlates with performance differences in Step 1.}
By construction, the hypothesis spaces of each of the machine learning algorithms we consider are embedded in $\F_2$.
RF and GBT both enjoy extremely rich hypothesis spaces that are likely close in size to $\F_2$ itself. 
LR and CART are less expressive than these others, so we will view LR and CART as simpler, $\F_1$ type, hypothesis spaces. Our question to answer is whether a large Rashomon set measured in Step 2 correlates with the functions from $\F_1$ (CART, LR) having performance as good as that of $\F_2$ (GBT, RF, SVM) as our theory predicts it will. 

\subsection{Experimental Results}\label{subsec:simper}

	\begin{figure*}[t]
		\centering
		\begin{tabular}{cc}
		
		\begin{subfigure}[b]{0.44\textwidth}
		\includegraphics[width=0.48\textwidth]{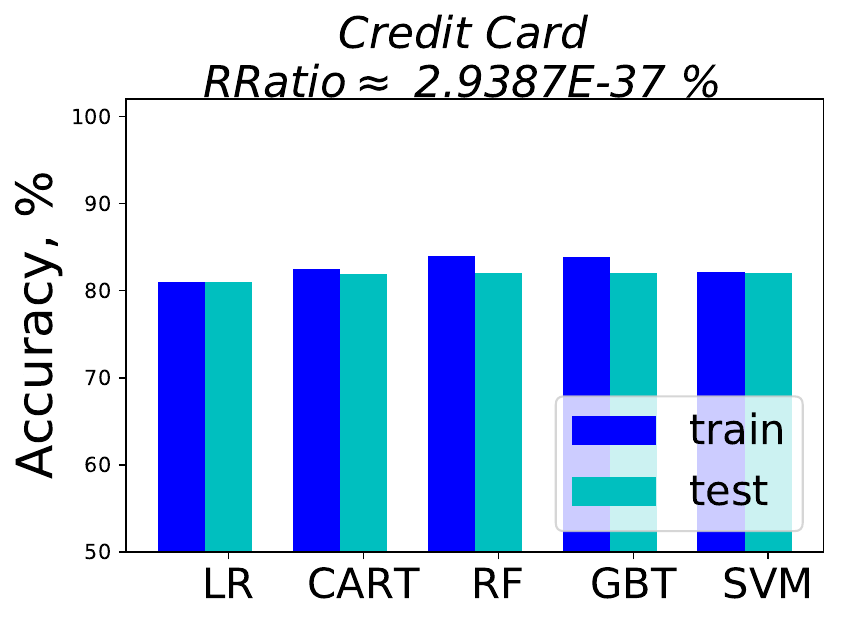}
		\includegraphics[width=0.48\textwidth]{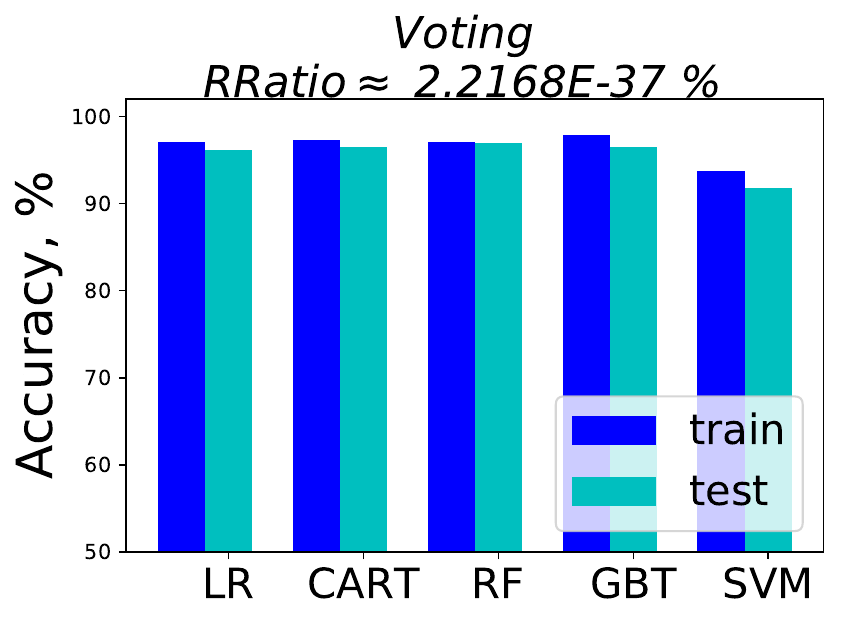}\\
		\includegraphics[width=0.48\textwidth]{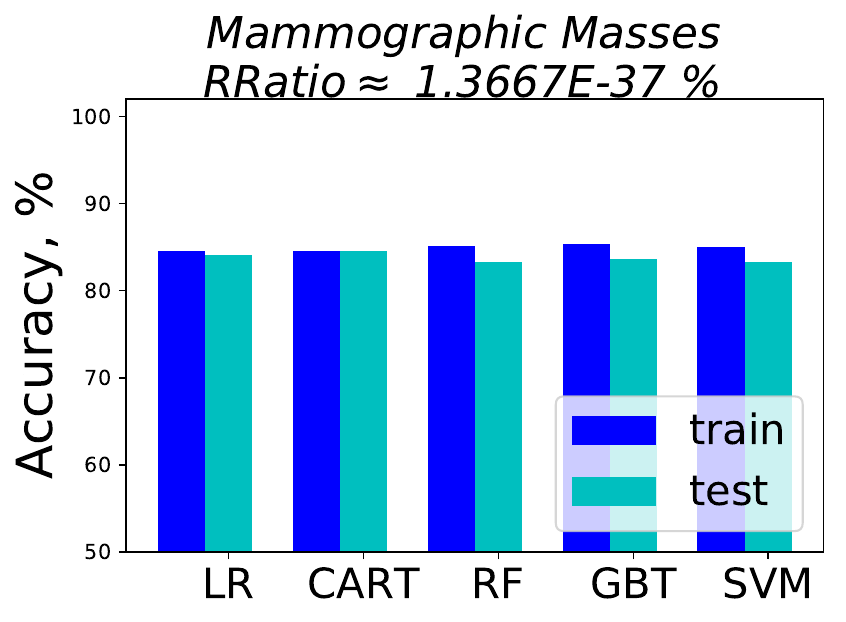}
		\includegraphics[width=0.48\textwidth]{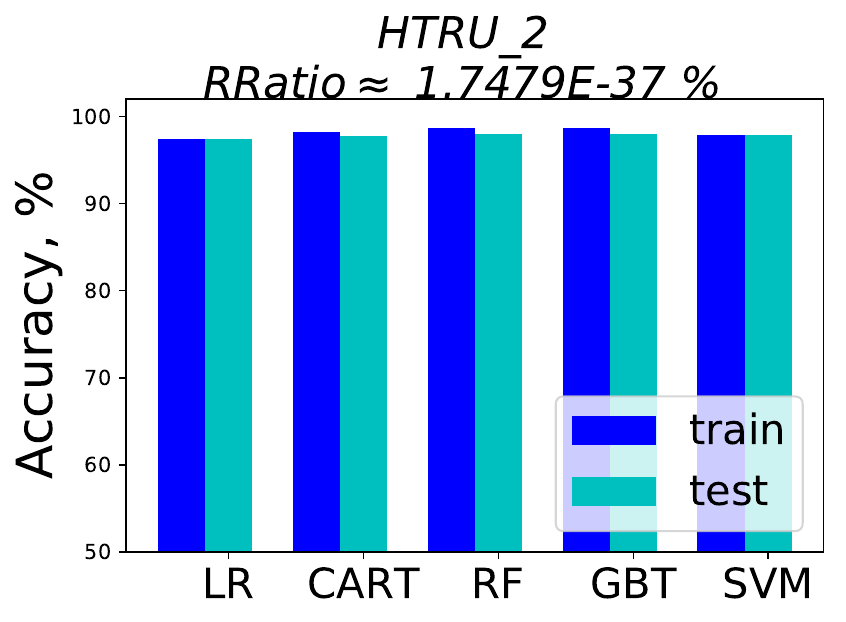}
		\caption{ }
		\end{subfigure}
		\rulesep
		
		\begin{subfigure}[b]{0.22\textwidth}
		\includegraphics[width=0.96\textwidth]{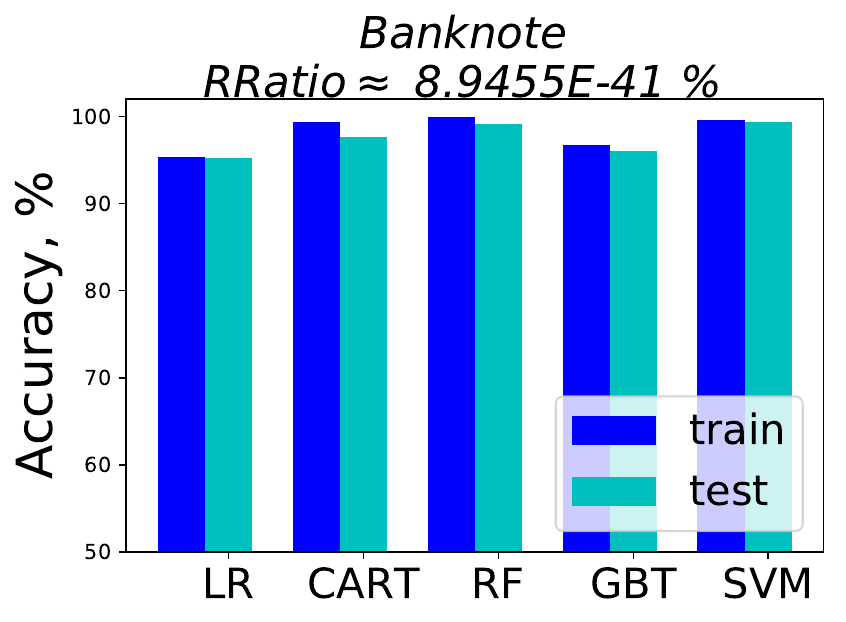}\\
		\includegraphics[width=0.96\textwidth]{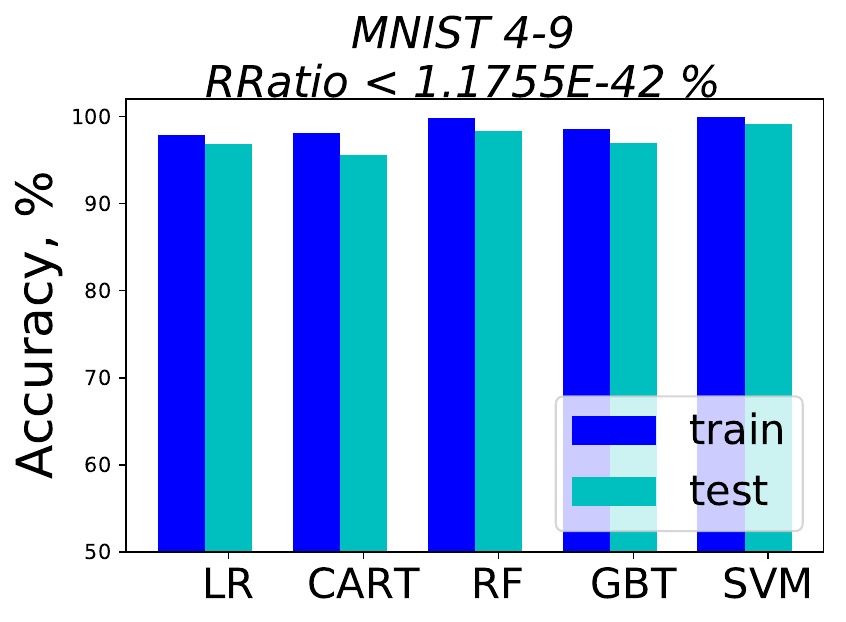}
		\caption{ }
		\end{subfigure}
		
		 \rulesep
		
		\begin{subfigure}[b]{0.22\textwidth}
		\includegraphics[width=0.96\textwidth]{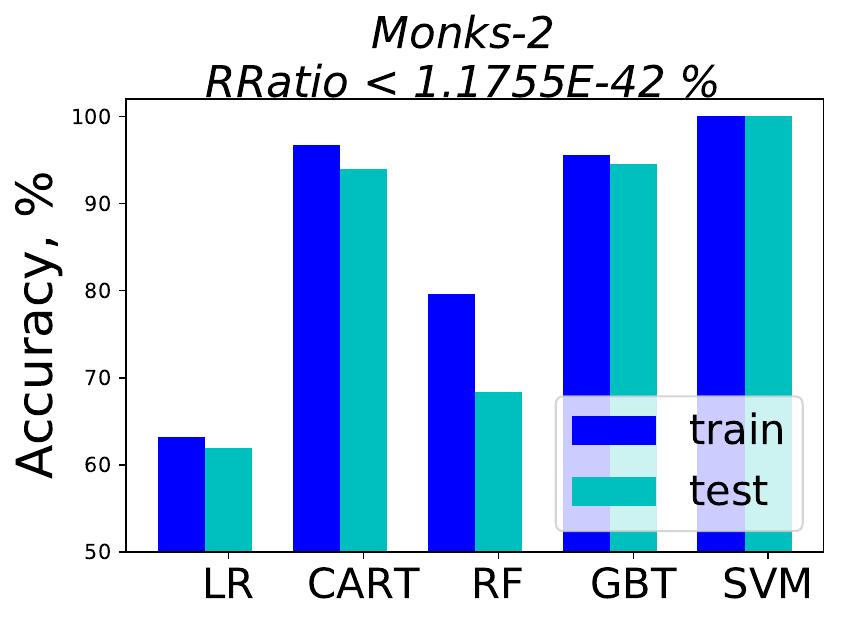}\\
		\includegraphics[width=0.96\textwidth]{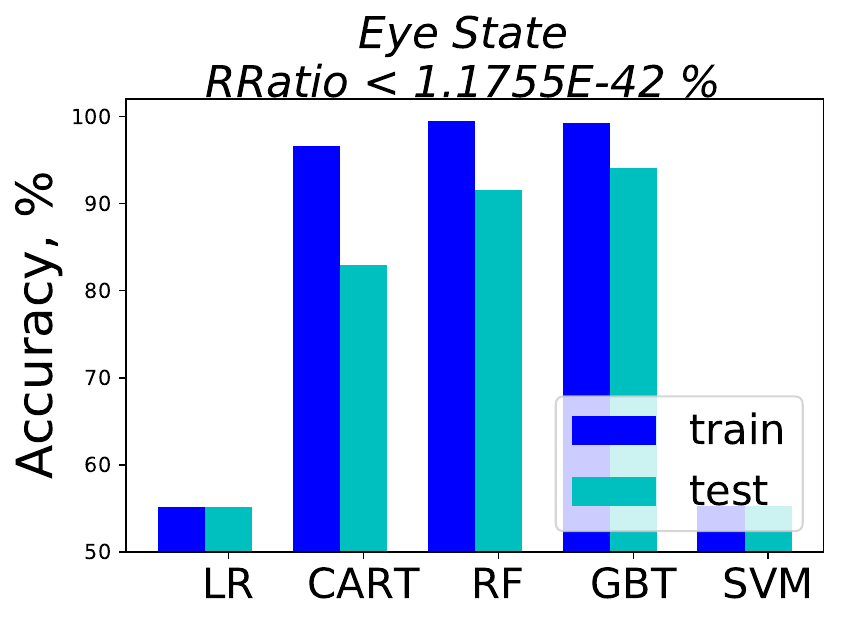}
		\caption{ }
		\end{subfigure}
		
		\end{tabular}
		\caption{ (a) Examples of experiments on four data sets showing that larger Rashomon ratios lead to similar performance of five machine learning algorithms with regularization. All the algorithms generalize well and have similar test accuracy. (b)-(c): Examples showing that smaller Rashomon ratios do not necessarily imply a performance difference between machine learning algorithms. Even with low Rashomon ratios, algorithms can be highly accurate and generalize well, as shown in Figure (b). On the other hand, when the Rashomon ratio 
		is small, sometimes algorithms can perform differently or fail to generalize, as shown in Figure (c). In the figure, test accuracies, training accuracies and the Rashomon ratio are averaged over ten folds. We show all 38 data sets in the Appendix \ref{appendix:perfomance_figures}.}
		\label{fig:exp_bar}
	\end{figure*}
	
Figure \ref{fig:exp_bar}(a) shows the performance of the five machine learning algorithms on data sets for which the Rashomon ratio was largest, as measured in the space of decision trees of depth 7.
	Performance for all data sets is shown in  Figures \ref{fig:bar_plots_1} and \ref{fig:bar_plots_2} in Appendix \ref{appendix:perfomance_figures}.
	Across the 38 data sets considered, we observe \textit{\textbf{larger Rashomon ratios led to approximately similar training results across all algorithms}} (within $\sim 5\%$ difference between algorithms). Here, large Rashomon ratios are on the order of $10^{-37}\%$ or $10^{-38}\%$, whereas small Rashomon ratios are $10^{-40}\%$ or less\footnote{For other data sets and other metrics of measuring the Rashomon set, the results might be different.}.
	Moreover, all of the models chosen by the algorithms, including simpler $\F_1$ type models, generalized well
	(the differences between training and test errors are within $\sim 5\%$). These results are consistent with our thesis that larger Rashomon sets lead to the existence of accurate-yet-simpler models (in agreement with the theory in Section \ref{sec:exis0}), and that larger Rashomon sets lead to better generalization.
	\textit{The results also imply that large Rashomon sets do occur in many data sets, with the Rashomon effect being large enough to include simpler models in practice 
	(in agreement with Section \ref{sec:exis1})}.

	Interestingly, the converse statement, that similar performance across different algorithms should lead to large Rashomon sets, does not always hold; sometimes, generalization occurs with small Rashomon ratios (see Figure \ref{fig:exp_bar}(b)). This observation could be explained in several different ways. Mainly, the Rashomon ratio is not the only driver of good generalization performance. The amount of data is one obvious additional driver. Appendix \ref{sec:sm_rsetmeasure} discusses this further. Quality of features is another driver, as discussed in Appendix \ref{appendix:ratios_and_features}.

	
    Our second main result is that in \textit{\textbf{all}} cases where large Rashomon ratios were observed, 
    \textbf{\textit{ test performance was consistent with training performance across algorithms of varying complexity.}}
    This correlation between the size of the Rashomon ratio and consistent generalization performance suggests an indirect means of assessing the size of the Rashomon ratio as an alternative to the computationally intensive approach of sampling. When consistent training and test performance across algorithms is observed, this {\em may} indicate a large Rashomon ratio.
   
    One thing we notably did not observe were cases where algorithms did not generalize, performance differed across algorithms, and the Rashomon set was large. Across all 38 data sets, we did not observe cases where the Rashomon set was large and performance differed among algorithms.

    Figure \ref{fig:exp_bar}(c) shows small Rashomon sets, where we observe wildly different performance across algorithms, where sometimes the models generalize and sometimes they do not. We show one example of each of these cases in Figure \ref{fig:exp_bar}(c). Our theory does not apply to the case of small Rashomon sets, and thus there is no guarantee for such data sets.

\section{Rashomon Sets in the Presence of Noise}\label{sec:noise}

We have seen that we can empirically determine whether a data set is likely to have a large Rashomon set: as we showed, we simply run many algorithms, and if they all perform similarly and generalize, there could be a large Rashomon set. But what about before examining the data? Could we know, just from understanding what kind of data set it is, whether it is likely to have a large Rashomon set? We aim to answer this now.

A typical reason given for ``underspecification'' \cite{d2020underspecification} (i.e., a large number of approximately-equally good models, a large Rashomon set) is the presence of substantial noise in the data. Intuitively, for data whose outcomes are essentially a random guess, it makes sense that no model would perform well, and many models would be equally poor. But what about more interesting cases? Does this intuition still hold? First, as we have shown above, large Rashomon sets exists for non-noisy data sets as well  (see Figure \ref{fig:exp_bar} Voting and HTRU 2 data sets); \textit{it is not just noise that determines the Rashomon set}. Second, it is not true that the Rashomon set always gets much larger under noise. In fact, if we add random classification noise \cite{angluin1988learning, Natarajan13} (each label is flipped independently with some small probability), it is possible that the Rashomon set does not change at all. This is because the error rate of all models in the Rashomon set (assuming they are all better than random guessing) increases by the same amount in expectation. At least, as we show in Theorem \ref{th:noise_labels}, the size of the true Rashomon set does not decrease if we add random classification noise.

\newcommand{\ThNoiseLabels}
    {Consider hypothesis space $\F$, data distribution $\mathcal{D} = \mathcal{X}\times \mathcal{Y}$, where, as before, $\mathcal{X}\in\R^p$, and $\mathcal{Y}\in\{-1,1\}$. Let $\rho \in (0,\frac{1}{2})$ be a probability with which each label $y_i$ is flipped independently, and $\mathcal{D}_{\rho}$ denotes the noisy version of $D$. If the loss function is $\phi(f(x), y) = \mathbb{1}_{[f(x)\neq y]}$, then in expectation, the true Rashomon set over $\mathcal{D}$ is a subset of the true Rashomon set over $\mathcal{D}_{\rho}$, $R_{{set}_{\mathcal{D}}}(\F,\gamma)  \subseteq R_{{set}_{\mathcal{D}_{\rho}}}(\F,\gamma)$.
    }

    \begin{theorem}[Expected size of the true Rashomon set cannot decrease under random classification noise]\label{th:noise_labels}
    \ThNoiseLabels
    \end{theorem}

In Theorem \ref{th:noise_labels} we have shown that the size of the true Rashomon set does not decrease when adding random classification noise, but to prove that $R_{{set}_{\mathcal{D}}}(\F,\gamma)\subset R_{{set}_{\mathcal{D}_{\rho}}}(\F,\gamma)$, we would need at least one model $\bar{f}$ such that $\bar{f} \not\in R_{{set}_{\mathcal{D}}}(\F,\gamma)$, yet $ \bar{f}\in R_{{set}_{\mathcal{D}_{\rho}}}(\F,\gamma)$, and such a model may not actually exist; in fact, if all models have increased error rates when noise is added, it does not.

We still are left with finding a scenario where noise does impact the size of the Rashomon set in order to provide some proof to the intuition. Let us consider the setting of linear (Gaussian) discriminant analysis, where the data arise from two Gaussians, one with positive labels and one with negative labels. Instead of increasing label noise, we will increase feature noise by increasing the variances or changing the means of the Gaussians, making the two distributions overlap. In this case, will the Rashomon set increase in size? The answer to this question is conjectured in Conjecture \ref{th:noise_gaussian}.

\begin{figure*}[t]
\centering
\includegraphics[width=0.7\textwidth]{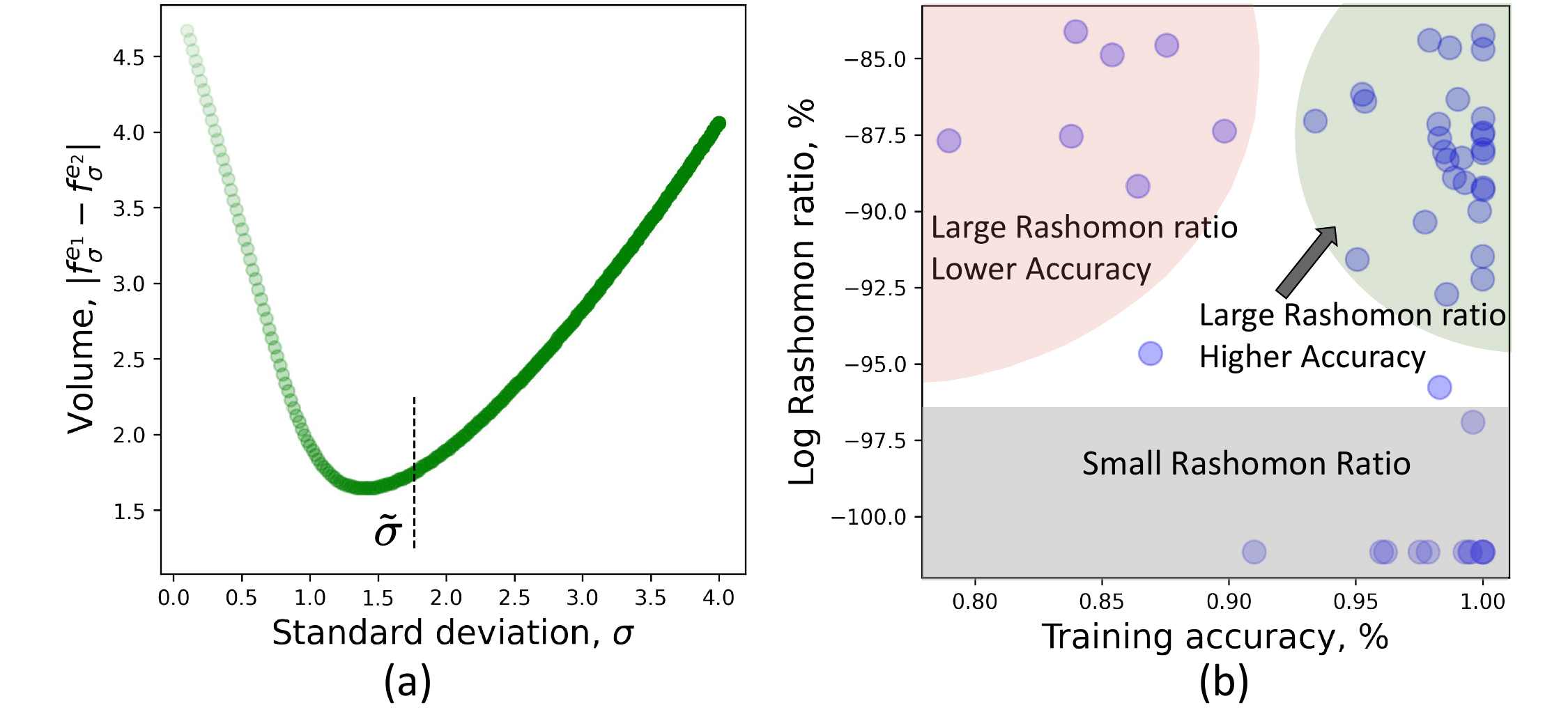}
        \caption{(a) Dependence of the Rashomon ratio on noise $\sigma$ for the two Gaussians example in Conjecture \ref{th:noise_gaussian}.  When $\sigma > \tilde{\sigma}$, as we add more noise, the size of the Rashomon set increases. (2). 
        (b) Scatter plot of the log of Rashomon ratio versus the maximum accuracy across five different algorithms for 38 data sets in Section \ref{section:experiments}. 
        We observe larger Rashomon ratios in both noisy and non-noisy data sets. 
        }
        \label{fig:noise}
    \end{figure*}

\newcommand{\ThNoiseGaussianOneD}
    {Consider data distribution $\mathcal{D} = \mathcal{X}\times \mathcal{Y}$, where, $\mathcal{X}\in\R$, $\mathcal{Y}\in\{-1,1\}$, and classes are balanced $P(Y=-1)=P(Y=1)$ and generated by Gaussian distributions $P(X|Y=-1)=\mathcal{N}(\mu_1,\sigma^2)$,  $P(X|Y=1)=\mathcal{N}(\mu_2,\sigma^2)$, where $0 \leq \mu_1 <\mu_2$. For the hypothesis space $\F = \{f: f\in (\beta_1, \beta_2) \}$, where $ (\mu_1, \mu_2) \subset (\beta_1, \beta_2)$,  $\beta_1 \ll \mu_1$, and $\mu_2 \ll \beta_2$, 
    and the Rashomon parameter $\gamma > 0$:
    \begin{enumerate}
        \item[(I)] The volume of the Rashomon set is $\V(R_{{set}_{\sigma}}(\F, \gamma)) = |f^{e_1}_\sigma- f^{e_2}_\sigma|$, where $f^{e_1}_\sigma$ and $f^{e_2}_\sigma$ are the two solutions to Eqn$.$ \eqref{eq:gaus_obg}, where $\Phi$ is the CDF of the standard normal:
    \begin{equation}\label{eq:gaus_obg}
        2\Phi \left(\frac{\mu_2-\mu_1}{2\sigma}\right) - \Phi \left(\frac{\mu_2-f}{\sigma}\right) - \Phi \left(\frac{f-\mu_1}{\sigma}\right) = \gamma.
    \end{equation}
    \item[(II)] We conjecture that\footnote{The hypothesis space for Part II conservatively includes all reasonable candidates for the empirical risk minimizer. In other words, we assume that decision boundary can be anywhere between the means of the two distributions.} for $\F = \{f: f\in (\mu_1, \mu_2) \}$, as we add feature noise to the data set by increasing the standard deviation $\sigma$, for all $\sigma$ such that $\sigma > \tilde{\sigma} = \frac{\mu_2-\mu_1}{2\sqrt{2}}$, the volume of the Rashomon set 
    increases as a function of $\sigma$.
    \item[(III)] Consider the setting where $\sigma = 1$ for both Gaussians, and we add or remove noise by moving the means $\mu_1$ and $\mu_2$ of the Gaussians towards or away from each other. For any $\gamma>0$, the volume of the Rashomon set is minimized when $\mu_2\approx \mu_1+2$. Moving the Gaussians either away from or towards each other increases the volume of the Rashomon set.
    \end{enumerate}
}

    \begin{conjecture}[The Rashomon set can increase with feature noise]\label{th:noise_gaussian}
    \ThNoiseGaussianOneD
    \end{conjecture}

This conjecture is not a theorem because there is no analytical solution to the minimizer of the volume of the Rashomon set; the calculations are quite complex, involving differences of the CDF values of different Gaussians. However, \textit{all parts of the conjecture have been fully checked numerically}. In part (II), we use an analytical derivation and exhaustive numerical computations to show that the derivatives of the left side of Eqn$.$ \eqref{eq:gaus_obg} are either positive or negative sign.
For part (III), we transpose the left Gaussian to $N(0,1)$ to form a canonical problem in which all possible solutions can be computed numerically. We exhaustively search over the range of $\gamma$, finding the optimal $\mu_2$ and volume of the Rashomon set for each $\gamma$. We find that $\mu_2$ is very close to 2 for all $\gamma$.
We discuss this in Appendix \ref{appendix:gaus_proof}.


This conjecture suggests that \textbf{\textit{data that are approximately distributed according to two normal distributions, where the positive and negative normal distributions substantially overlap, will have a large Rashomon set}}. Figure \ref{fig:noise}(a) shows the dependence of the Rashomon set on the noise level $\sigma$ for $\mu_1 = 1, \mu_2 = 6$ and $\sigma \in [0.2, 4]$. 
Figure \ref{fig:noise}(b) plots maximum accuracy versus Rashomon ratio for 38 data sets considered in Section \ref{section:experiments}. These figures indicate that large Rashomon sets occur both in noisy and non-noisy data. 

There can be many data sets with characteristics as in Conjecture \ref{th:noise_gaussian}. For example, let us consider criminal recidivism data, whose Rashomon sets have been studied \cite{fisher2018model,DongRu2020} and that admit simple-yet-accurate models \cite{ZengUsRu2017,RudinWaCo2020}. Each data point is generated based on a set of random events happening in the world; whether someone enters a job training program, whether someone associates with criminal associates after release, and whether someone commits a crime each day are all random variables whose random effects are cumulative over time, and thus could be modeled by Gaussians by the central limit theorem. By this logic, we would expect many criminal recidivism prediction problems to admit large Rashomon sets. Other high-stakes predictions such as loan defaults may have similar characteristics. 

In a sense, this full analysis paints a much clearer picture as to why such problems admit simple yet similarly accurate models: their distributions are approximately Gaussian with significant overlap, such overlap leads to large Rashomon sets, and large Rashomon sets lead to the existence of simple yet similarly accurate models.

\section{Conclusion and Implications}\label{section:practical_implication_2}
	
We have proposed Rashomon sets and ratios as another perspective on the relationship between hypothesis spaces and data sets, and we have provided initial theoretical and experimental results showing that this is a unique perspective that may help explain some phenomena observed in practice. 
More specifically, the main conclusions include: (1) Large Rashomon sets can embed models from simpler hypothesis spaces (Section \ref{section:generalization}); (2) Similar performance across different machine learning algorithms may correlate with large Rashomon sets (Section \ref{section:experiments}); (3)  Large Rashomon sets correlate with existence of models that have good generalization performance (Section \ref{section:experiments}); (4) The Rashomon ratio is a measure of a learning problem's complexity (Section \ref{sec:compelity_measures}), and that data that approximately arise from overlapping Gaussian distributions tend to have large Rashomon sets (Section \ref{sec:noise}). 
    
    Consider a researcher conducting a standard set of machine learning experiments in which the performance of several different algorithms are compared, and generalization is assessed. 
    In the possible scenario where \textit{all algorithms perform similarly}, and when their models tend to generalize well on validation data, the learning problem is likely to have a large Rashomon set. 
   Based on the result in Section \ref{section:generalization}, simpler models are likely to exist in a large Rashomon set. If the researcher is interested in simpler models, they can search the simpler function class $\F_1$, a subset of the larger class $\F_2$, to locate simpler models within it. While optimizing for simplicity or interpretability constraints is usually much more computationally expensive than running standard machine learning algorithms, our thesis is that this search would be likely to succeed in the presence of a large Rashomon set. 
    In the converse case, if the researcher's algorithms perform \textit{differently from each other}, the researcher might then select a more complex model class that achieves better performance yet does not overfit. Further, if the researcher knows that the data are likely to have arisen from overlapping Gaussian distributions, the researcher could assume that it is worthwhile to search for a simple model that performs well. 

\bibliographystyle{ACM-Reference-Format}
\bibliography{bibliography}


\newpage

\appendix


\section{Difference between flat minima and Rashomon set}\label{appendix:flat_minima_figure}

 Figure \ref{fig:flatminima} illustrates differences between volume $\epsilon$-flatness and the Rashomon set.

	\begin{figure*}[h]
		\centering
		\begin{subfigure}[b]{0.3\textwidth}
			\includegraphics[width=\textwidth]{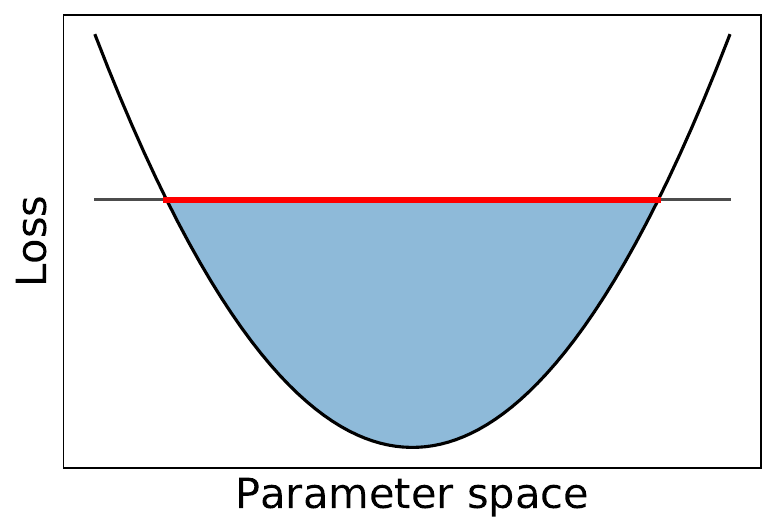}
			\caption{Volume $\epsilon$-flatness}
			\label{fig:flatminima1}
		\end{subfigure}
		~ 
		\begin{subfigure}[b]{0.3\textwidth}
			\includegraphics[width=\textwidth]{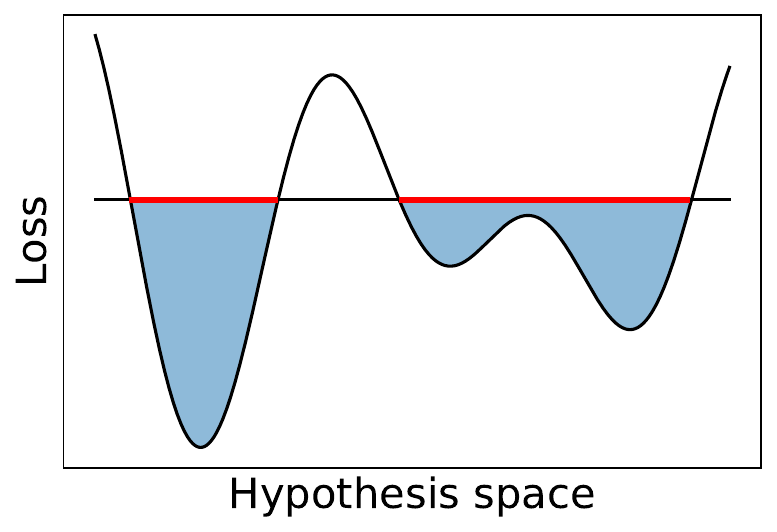}
			\caption{The Rashomon set 	}
			\label{fig:flatminima2}
		\end{subfigure}
		
		\caption{Difference between volume $\epsilon$-flatness as defined in \citet{dinh2017sharp} and the Rashomon set. The red line represents the volume $\epsilon$-flatness in (a), and the Rashomon set in (b). The volume of the Rashomon set is the sum of lengths of red lines in (b). The height of the shaded area represents (a) the parameter $\epsilon$ or the  $2\sigma$-sharpness, and (b) the Rashomon parameter $\theta$.  Volume $\epsilon$-flatness is defined by a connected component in a parameter space for a given local minimum, while the Rashomon set is defined with respect to an empirical risk minimizer over the full hypothesis space $\F$ and may contain models from multiple local minima. Rashomon sets are also defined for discrete spaces. }\label{fig:flatminima}
	\end{figure*}
	
	
\section{Connection to Simplicity Measures}\label{appendix:measures}

    We will use demonstrations to show the differences between the Rashomon ratio and other complexity measures mentioned in Section \ref{sec:compelity_measures}. However, first we discuss how to analytically compute the Rashomon ratio for ridge regression since the demonstration of the difference between algorithmic stability and Rashomon ratio relies on it.

\subsection{Analytical Calculation of Rashomon Ratio for Ridge Regression}\label{appendix:ridge}

	
	A special case of when the Rashomon ratio can be computed in  closed form in a parameter space is ridge regression. For a space of linear models $\F_{\Omega} = \{\omega^Tx, \omega \in \R^p\}$,  ridge regression chooses a parameter vector by minimizing the penalized sum of squared errors for a training data set $S = [X, Y]$:
	\begin{equation}\label{eq:ridge}
	\min_{\omega}\hat{L} (\omega) = \min_{\omega}(X\omega - Y)^T(X\omega - Y) + C\omega^T\omega,
	\end{equation}
	where the optimal solution of the ridge regression estimator is $\hat{\omega} = (X^TX + CI_p)^{-1} X^T Y.$

    Geometrically, the optimal solution to ridge regression will be a parameter vector that corresponds to the intersection of ellipsoidal isosurfaces of the sum of squares term and a hypersphere centered at the origin, with the regularization parameter $C$ determining the trade off between the loss and the radius of the sphere. More generally, isosurfaces of the ridge regression loss function are ellipsoids, and the volume of such an ellipsoid corresponds to the volume of the Rashomon set. For a hypothesis space with uniform prior and volume function $\V$,  the Rashomon ratio is $\frac{\V(\hat{R}_{set}(\F_{\Omega}, \theta))}{\V(\F_{\Omega})}$. Using the geometric intuition above, we compute the Rashomon ratio in the parameter space by the following theorem: 
    
    \newcommand{\ThRidgeRegression}
    {
    For a parametric hypothesis space of linear models $\F_{\Omega} = \{f_{\omega}(x) = \omega^Tx, \omega \in \R^p\}$ with uniform prior and a data set $S = X\times Y$, the Rashomon set $\hat{R}_{set}(\F_{\Omega}, \theta)$ of ridge regression is an ellipsoid, containing vectors $\omega$ such that:
		\begin{equation*}
		(\omega - \hat{\omega})^T \frac{X^TX + CI_p}{\theta}(\omega - \hat{\omega}) \leq 1,
		\end{equation*}
		and the Rashomon ratio can be computed as:
		\begin{equation}\label{eq:ridge_volume}
		\hat{R}_{ratio}(\F_{\Omega}, \theta) = \frac{J(\theta, p)}{\V(\F_{\Omega})}\prod_{i=1}^p\frac{1}{\sqrt{\sigma_i^2 + C}},
		\end{equation}
		where $\sigma_i$ are singular values of matrix $X$, $ J(\theta, p) = \frac{\pi^{p/2}\theta^{p/2}}{\Gamma(p/2+1)}$ and $\Gamma(\cdot)$ is the gamma function.
    }
    
    \newcommand{\ThRidgeRegressionCopy}
    {
    For a parametric hypothesis space of linear models $\F_{\Omega} = \{f_{\omega}(x) = \omega^Tx, \omega \in \R^p\}$ and a data set $S = X\times Y$, the Rashomon set $\hat{R}_{set}(\F_{\Omega}, \theta)$ of ridge regression is an ellipsoid, containing vectors $\omega$ such that:
		\begin{equation*}
		(\omega - \hat{\omega})^T \frac{X^TX + CI_p}{\theta}(\omega - \hat{\omega}) \leq 1,
		\end{equation*}
		and the Rashomon ratio can be computed as:
		\begin{equation*}
		\V(\hat{R}_{set}(\F_{\Omega}, \theta)) = J(\theta, p)\prod_{i=1}^p\frac{1}{\sqrt{\sigma_i^2 + C}},
		\end{equation*}
		where $\sigma_i$ are singular values of matrix $X$, $ J(\theta, p) = \frac{\pi^{p/2}\theta^{p/2}}{\Gamma(p/2+1)}$ and $\Gamma(\cdot)$ is the gamma function.
    }
	
	\begin{theorem}[Rashomon ratio for ridge regression]\label{th:ridge_regression}
	\ThRidgeRegression
	\end{theorem}
		\begin{proof}
		Consider all models $f_{\omega} \in \F_{\Omega}$ from the Rashomon set $\hat{R}_{set}(\F_{\Omega}, \theta)$. Then by Definition \ref{def:rset} we get:
		
		\begin{equation}\label{eq:ridgeRashomon}
		\hat{L} (X, Y, \omega) \leq \hat{L} (X, Y, \hat{\omega}) + \theta.
		\end{equation}
		
		Using $X^TY = (X^TX + CI_p)\hat{\omega}$ from the optimal solution of the ridge regression estimator $\hat{\omega} = (X^TX + CI_p)^{-1} X^T Y$, and expanding the difference between empirical risks we have:
		\begin{equation*} 
		\begin{split}
	\theta \geq &\hat{L} (X, Y, \omega) - \hat{L} (X, Y, \hat{\omega})\\
	       &= (X\omega - Y)^T(X\omega - Y) + C\omega^T\omega - (X\hat{\omega} - Y)^T(X\hat{\omega} - Y) - C\hat{\omega}^{T}\hat{\omega}\\
	       &= \omega^TX^TX\omega - 2\omega^TX^TY + C\omega^T\omega - \hat{\omega}^{T}X^TX\hat{\omega}+ 2\hat{\omega}^{T}X^TY - C\hat{\omega}^{T}\hat{\omega}\\
	       &= \omega^TX^TX\omega - 2\omega^T(X^TX + CI_p)\hat{\omega} + C\omega^T\omega - \hat{\omega}^{T}X^TX\hat{\omega} \\
	       & +  2\hat{\omega}^{T}(X^TX + CI_p)\hat{\omega} - C\hat{\omega}^{T}\hat{\omega}\\
	       &= 	\omega^TX^TX\omega  + C\omega^T\omega - 2\omega^T(X^TX + CI_p)\hat{\omega}  + \hat{\omega}^{T} X^T X \hat{\omega} + C\hat{\omega}^{T}\hat{\omega}\\
	       &=  \omega^T(X^TX + CI_p)\omega - 2\omega^T(X^TX + CI_p)\hat{\omega} + \hat{\omega}^{T}(X^TX + CI_p)\hat{\omega}\\
	       &=  	(\omega - \hat{\omega})^T(X^TX + CI_p)(\omega-\hat{\omega}).
	   \end{split}
		\end{equation*}

	Therefore the Rashomon set is an ellipsoid centered at $\hat{\omega}$:
		\begin{equation*}
		(\omega-\hat{\omega})^T \frac{X^TX+CI_p}{\theta} (\omega - \hat{\omega}) \leq 1.
		\end{equation*}
		By the formula of the volume of a p-dimensional ellipsoid, 
		the volume of the Rashomon set can be computed as:
		\begin{equation*}
		\V(\hat{R}_{set}(\F_{\Omega}, \theta)) = \frac{\pi^{p/2}\theta^{p/2}}{\Gamma(p/2+1)} \prod_{i=1}^p \frac{1}{\sqrt{\sigma_i^2 + C}},
		\end{equation*}
		where $\sigma_i$ are singular values of $X$. 
		
		Since we assume a uniform prior on $\F_{\Omega}$, $\V(F_{\Omega})$ is the volume of a box (or other closed region) containing the plausible values of $\Omega$.
		Therefore, the Rashomon ratio is $\hat{R}_{ratio}(\F_{\Omega}, \theta) =\frac{\V(\hat{R}_{set}(\F_{\Omega}, \theta))}{\V(\F_{\Omega})} =  \frac{J(\theta, p)}{\V(\F_{\Omega})}\prod_{i=1}^p\frac{1}{\sqrt{\sigma_i^2 + C}}$, where $ J(\theta, p) = \frac{\pi^{p/2}\theta^{p/2}}{\Gamma(p/2+1)}$.
	\end{proof}	
	
	
    Interestingly, from Theorem \ref{th:ridge_regression}, it follows that for ridge regression, \textit{the Rashomon ratio depends on the feature space only and does not depend on the regression targets $Y$}. Indeed, assume that every parameter vector $\omega$ such that $f_{\omega} \in \hat{R}_{set}(\F_{\Omega}, \theta)$ can be represented as $\omega = \hat{\omega} + \delta$. By a simple transformation, we have that $\hat{L}(f_{\omega}) - \hat{L}(f_{\hat{\omega}}) = \delta^TX^TX\delta$, meaning that if we take a step in parameter space, the empirical risk difference will depend only on the feature space and the step itself, and not on the targets of the problem. This observation can help us choose the parameter $\theta$ as $\theta = \delta^TX^TX\delta$ if we want to ensure some dependence between the optimal model $\hat{\omega}$ and a model of interest $\omega$. Then, by choosing the direction as $\delta = \omega - \hat{\omega}$, we can compute the Rashomon parameter $\theta$.
	
    For other algorithms, the Rashomon ratio generally depends on the targets; in that sense, ridge regression is unusual.

	\subsection{Algorithmic Stability}\label{appendix:alg_stability}
     The main motivation for algorithmic stability theory is to ensure robustness of a learning algorithm. Following \citet {bousquet2002stability}, we define the hypothesis stability of a learning algorithm as follows.	
	\begin{definition}[Hypothesis stability]
		A learning algorithm $\A$
		has $\beta$ hypothesis stability  with respect to the  loss $l$
		if  for all $i \in \{1, . . . , n\}$,
		\begin{equation*} \E_{S, z}[|l(f_S, z) - l(f_{S^{ \setminus i}} , z)|] \leq \beta,\end{equation*}
		where $\beta \in R_+$, hypothesis $f_S$ is learned by an algorithm $\A$ on a data set $S$,  loss $l(f_S, z ) = \phi(f_S(x), y)$ for $z=(x,y)$, data set $S = \{z_1, ... z_n\}$,  and $S^{\setminus i}$ is modified from the training data by removing the $i$\textsuperscript{th} element of the data set: $S^{\setminus i} = \{z_1, ..., z_{i-1}, z_{i+1}, ... z_n\}$.
	\end{definition}\label{def:algorithmic_stability}

	The Rashomon ratio is fundamentally different from  hypothesis stability, in case of linear least squares regression (which is discussed in Section \ref{appendix:ridge}). This is formalized in Theorem \ref{th:ratio_stability}.
	
	

\begingroup
\def\thetheorem{\ref{th:ratio_stability}}
\begin{theorem}[Rashomon ratio is not algorithmic stability]
\ThRatioStability
\end{theorem}
\addtocounter{theorem}{-1}
\endgroup
	
		\begin{proof} Let us create our distribution.
		Consider the least squares regression  $\min_{\omega} \sum_{i=1}^n l(\omega,\mathbf{z}_i)^2$, where $\omega \in \R^p$, and  loss $l(\omega, \mathbf{z}) = \phi(\omega^T\mathbf{x}, \mathbf{y})$ for $\mathbf{z} = (\mathbf{x},\mathbf{y})$. For the marginal distribution $P_X$ and $\mathbf{X = [x_1, ..., x_n]}$  drawn i.i.d$.$ from $P_X$, we design distributions $P_{Y_1|\mathbf{X}}$ and $P_{Y_2|\mathbf{X}}$ as:
		\begin{equation*}P_{Y_1|\mathbf{X}}(y=\mathbf{0}|\mathbf{x}) = 1\;\; \forall \mathbf{x} \in \mathbf{X},\end{equation*}
		\begin{equation*}
		P_{Y_2|\mathbf{X}}(y = \mathbf{0}|\mathbf{x} \neq \mathbf{x_0}) = 1, \text{ }P_{Y_2|\mathbf{X}}(y= \mathbf{0}| \mathbf{x} = \mathbf{x_0}) = 0.5,
		\end{equation*}
		\begin{equation*}\text{ }P_{Y_2|\mathbf{X}}(y = \mathbf{H}| \mathbf{x} = \mathbf{x_0}) = 0.5,
		\end{equation*}
		where $\mathbf{x_0} \in \{x_1,...,x_N\}$ is some fixed point with a positive probability $P_{X}(\mathbf{x_0})$ and we define $\mathbf{H} \in \R$ later. That is, the two conditional distributions have $y=0$ except when $\mathbf{x}=\mathbf{x}_0$ for $Y_2$, when it is $H$ with probability 1/2.
		
		As a first part of the proof, we show that the algorithmic stability constants are different. According to the definition of algorithmic stability, for $P_{X, Y_1}$ we have:
		\begin{equation*} \E_{S_1, z}[|l(f_{\mathbf{S}_1}, \mathbf{z}) - l(f_{\mathbf{S}_1^{ \setminus i}} , \mathbf{z})|] =0 = \tilde{\beta_1},\end{equation*}
		and for distribution $P_{X,Y_2}$:
		\[ 
		\begin{split}
		 \E_{S_2, z}&\left[\left|l(f_{\mathbf{S}_2}, \mathbf{z}) - l(f_{\mathbf{S}_2^{ \setminus i}} , \mathbf{z})\right|\right] =
		\sum_{\mathbf{S}_2, \mathbf{z} \sim P_{X,Y_2}} P_{X,Y_2}(\mathbf{S}_2) P_{X,Y_2}(\mathbf{z})\\
		& \times \left|l(f_{\mathbf{S}_2}, \mathbf{z}) - l(f_{\mathbf{S}_2^{ \setminus i}} , \mathbf{z})\right| \\ 
		&\geq P_{X,Y_2}(\mathbf{S}^s_2) P_{X,Y_2}(\mathbf{z}^s) \left|l(f_{\mathbf{S}_2}^s, \mathbf{z}^s) - l(f_{\mathbf{S}_2^{s, \setminus i}} , \mathbf{z}^s)\right|,
		\end{split}  
		\]
		where $\mathbf{S}_2^s, \mathbf{z}^s$ is a special draw such that $\mathbf{z}^s =  (\mathbf{x_0}, \mathbf{H})$, and where $\mathbf{S}_2^s$ includes one point at $(\mathbf{x_0}, \mathbf{H})$, one point at $(\mathbf{x_0}, \mathbf{0})$, and the rest at other values $(\mathbf{x, 0})$.
		Since the domain $\X$ is discrete, the probabilities of a special draw are:
		\begin{equation*}
		P_{X,Y_2}(\mathbf{z}^s) = \frac{1}{2} Bin(1, n, P_X(\mathbf{x_0})),
		\end{equation*}
		\begin{equation*}
		P_{X,Y_2}(\mathbf{S}^s_2) = \frac{1}{4} Bin(1, n, P_X(\mathbf{x_0}))^2  Bin(n-2, n, 1 - P_X(\mathbf{x_0})),
		\end{equation*}
		where $Bin(k, n, p_k) = \binom{n}{k} p_k^k (1-p_k)^{(n-k)}$ is a binomial coefficient, namely the probability of getting exactly $k$ successes from $n$ trials, where each trial has a probability of success $p_k$. Denote $P_{(\mathbf{S}_2^s, \mathbf{z}^s) }$ as the probability of getting a special draw, then $P_{(\mathbf{S}_2^s, \mathbf{z}^s) } = P_{X,Y_2}(\mathbf{S}^s_2) P_{X,Y_2}(\mathbf{z}^s)$.
		
		If $\mathbf{S}_2^s$ contains only two points $\mathbf{z_1} = (\mathbf{x_0}, \mathbf{H})$ and $\mathbf{z_2} = (\mathbf{x_0}, \mathbf{0})$, the  loss difference $|l(f_{\mathbf{S}_2^s}, \mathbf{z}^s) - l(f_{\mathbf{S}_2^{ s,\setminus i}} , \mathbf{z}^s)|$  evaluated at $\mathbf{z}^s$ for all $i$ will be at least $\frac{\mathbf{H}^2}{4}$. To see this, note that the optimal function's value at $\mathbf{x_0}$ is: $f_{\mathbf{S}_2^s}(\mathbf{x_0}) = \frac{\mathbf{H}}{2}$, the optimal function's value at $\mathbf{x_0}$ after we remove the first point is $f_{\mathbf{S}_2^{ s,\setminus 1}}(\mathbf{x_0}) = \mathbf{0}$, and the optimal function's value at $\mathbf{x_0}$ after removing the second point is $f_{\mathbf{S}_2^{ s,\setminus 2}}(\mathbf{x_0}) = \mathbf{H}$. Therefore, $l(f_{\mathbf{S}_2^s}, \mathbf{z}^s) = \frac{\mathbf{H}^2}{4}$, $l(f_{\mathbf{S}_2^{ s,\setminus 1}} , \mathbf{z}^s) = \mathbf{H}^2$, $l(f_{\mathbf{S}_2^{ s,\setminus 2}} , \mathbf{z}^s) = \mathbf{0}$. And we get that $|l(f_{\mathbf{S}_2^s}, \mathbf{z}^s) - l(f_{\mathbf{S}_2^{ s,\setminus 1}} , \mathbf{z}^s)| = \frac{3\mathbf{H}^2}{4}$, $|l(f_{\mathbf{S}_2^s}, \mathbf{z}^s) - l(f_{\mathbf{S}_2^{ s,\setminus 2}} , \mathbf{z}^s)| = \frac{\mathbf{H}^2}{4}$.
		As we add the rest of the points $(\mathbf{x}_i, \mathbf{0})$ to the data set $\mathbf{S}_2^s$,  the loss difference (from changing $f_{\mathbf{S}_2^s}(\mathbf{z}^s)$ to $f_{\mathbf{S}_2^{ s, \setminus i}}(\mathbf{z}^s)$) in the special draw case will only increase. Therefore for all $i$:
		\begin{equation*}|l(f_{\mathbf{S}_2^s}, \mathbf{z}^s) - l(f_{\mathbf{S}_2^{ s, \setminus i}} , \mathbf{z}^s)| \geq \frac{\mathbf{H}^2}{4}.\end{equation*}
		If we choose $\mathbf{H}$ such that $\mathbf{H} > 2 \sqrt{\lambda}\left(P_{(\mathbf{S}_2^s, \mathbf{z}^s)}\right)^{-1/2}$, then from the definition of algorithmic stability we have:
		\begin{equation*}\tilde{\beta_2} \geq \E_{S_2, z}\left[\left|l(f_{\mathbf{S}_2}, \mathbf{z}) - l(f_{\mathbf{S}_2^{ \setminus i}} , \mathbf{z})\right|\right]	\geq P_{(\mathbf{S}_2^s, \mathbf{z}^s) } \frac{\mathbf{H}^2}{4} > \lambda.\end{equation*}
		Therefore for any given $\lambda$ we get that $\left|\tilde{\beta_1} - \tilde{\beta_2} \right| > \lambda$. This proves that the hypothesis stability constants are different and completes the first part of the proof.
		
		We now need to prove that the expected Rashomon ratios are the same, which will constitute the second part of the proof.
		The Rashomon ratio for the hypothesis  space $\F_{\Omega}$ of linear models does not depend on targets and can be calculated as in (\ref{eq:ridge_volume}) for both $\mathbf{S}_1$ and $\mathbf{S}_2$. 	Therefore the expected Rashomon ratios are the same:	
		\begin{equation*}\E_{P_{X,Y_1}}[ \hat{R}_{ratio_{ \mathbf{S}_1}}(\F_{\Omega}, \theta)] = \E_{P_{X, Y_2}}[ \hat{R}_{ratio_{ \mathbf{S}_2}}(\F_{\Omega}, \theta)].\end{equation*}
		
		Thus, both halves of our proof are complete.
	\end{proof}	
	
	
	\subsection{Geometric Margin}\label{appendix:margin}
	For the parametric hypothesis space of linear models $\F_{\Omega} = \{f: f(x) = \omega^T x, \omega \in \R^p\}$ and binary classification, denote $d_{+}$ and $d_{-}$ as the shortest distances from a decision boundary to the closest points with targets $y = 1$ and $y = -1$ respectively. Then the margin $d$ is a sum of these distances $d = d_+ + d_-$ \citep{burges1998tutorial}. Moreover, for the model $f_{\hat{\omega}}$ that maximizes the margin, the margin width is $\frac{2}{\|\hat{\omega}\|_2}$.
	
	Intuitively both the Rashomon ratio and the width of the geometric margin are data-dependent and show how expressive the hypothesis space is with respect to a given data set. However, the margin depends on support vectors while the Rashomon set depends on the full data set. Theorem \ref{th:ratio_margin} summarizes this idea.

\begingroup
\def\thetheorem{\ref{th:ratio_margin}}
\begin{theorem}[Rashomon ratio is not the geometric margin]
\ThRatioMargin
\end{theorem}
\addtocounter{theorem}{-1}
\endgroup
	
	
		\begin{figure*}[t]
\hspace{0.125\textwidth}
            \begin{subfigure}[t]{0.25\textwidth}
			\includegraphics[width=\textwidth]{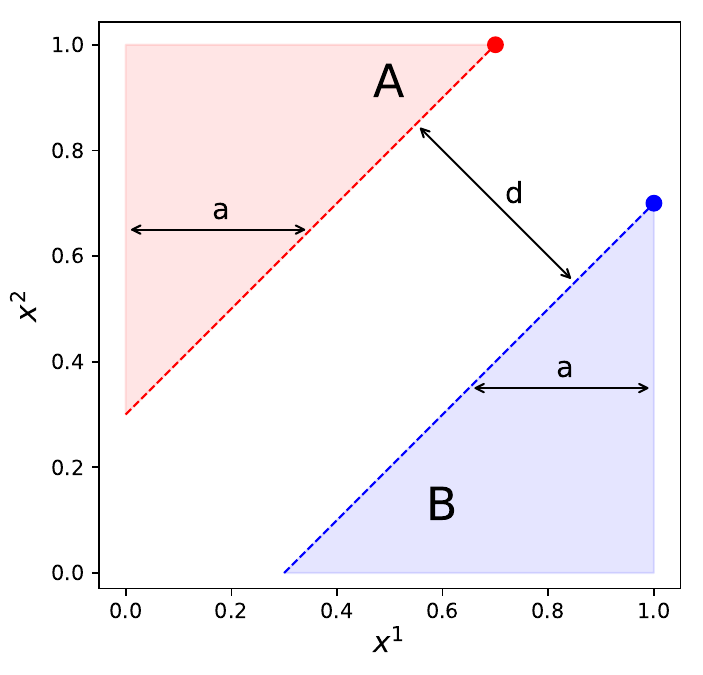}
			\caption{Structure of $S_1$}
			\label{fig:margin1}
		\end{subfigure}
		\begin{subfigure}[t]{0.25\textwidth}
			\includegraphics[width=\textwidth]{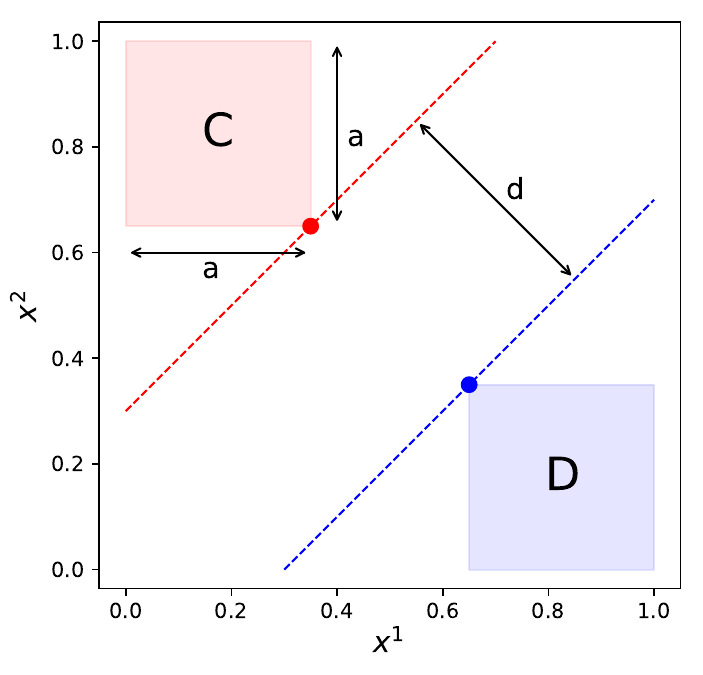}
			\caption{Structure of $S_2$}
			\label{fig:margin2}
		\end{subfigure}
		\begin{subfigure}[t]{0.25\textwidth}
			\includegraphics[width=\textwidth]{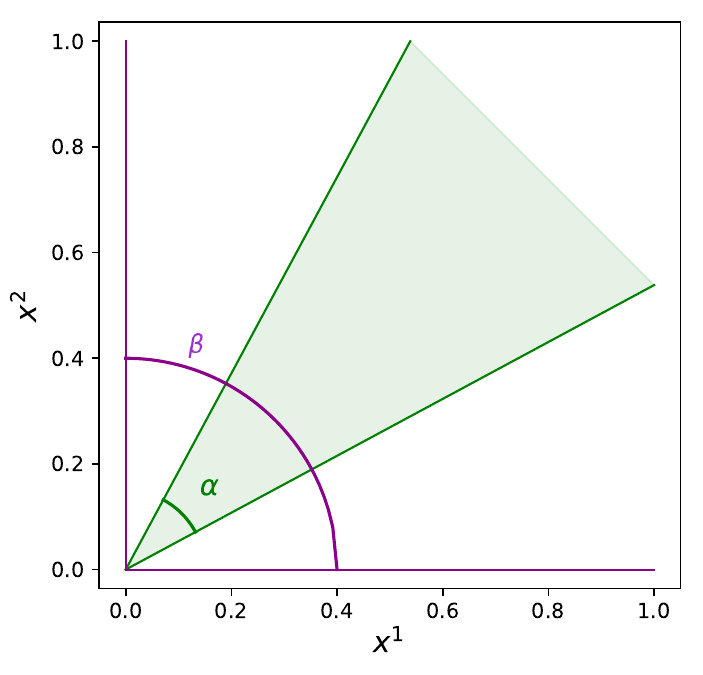}
			\caption{$\hat{R}_{ratio}(\F_{\Omega}, \theta) = \frac{\alpha}{\beta}$}
			\label{fig:margin3}
		\end{subfigure}
		
		\bigskip
		\hspace{0.125\textwidth}
		\begin{subfigure}[t]{0.25\textwidth}
			\includegraphics[width=\textwidth]{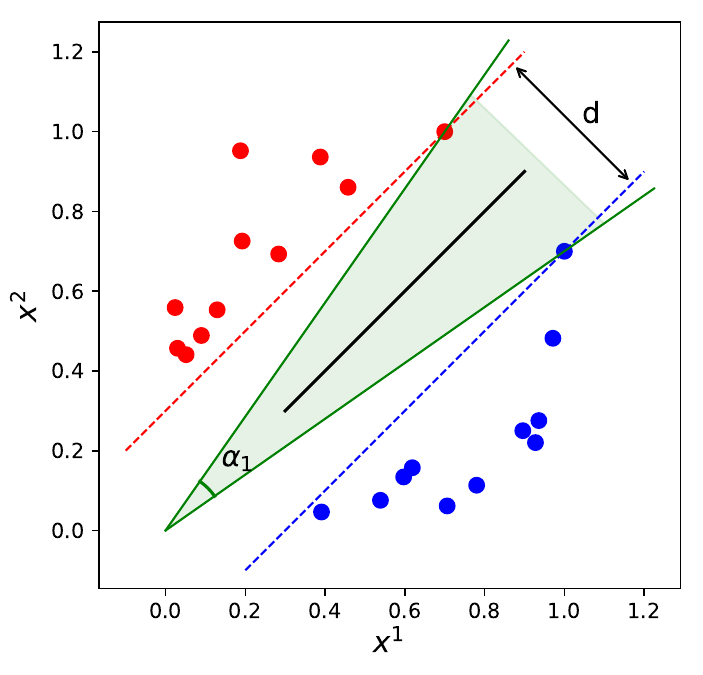}
			\caption{$ \hat{R}_{{ratio}_{S_1}}(\F_{\Omega}, 0) = \frac{\alpha_1}{\pi/2}$}
			\label{fig:margin4}
		\end{subfigure}
		\begin{subfigure}[t]{0.25\textwidth}
			\includegraphics[width=\textwidth]{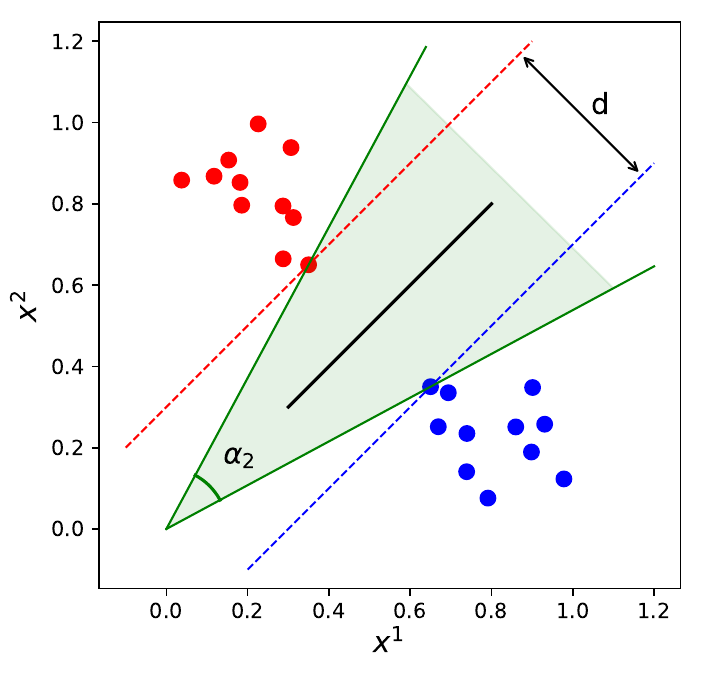}
			\caption{$ \hat{R}_{{ratio}_{S_2}}(\F_{\Omega}, 0) = \frac{\alpha_2}{\pi/2}$}
			\label{fig:margin5}
		\end{subfigure}

		\caption{An illustration of different Rashomon ratios with identical  geometric margins. (a) and (b) show the data sets $S_1$ and $S_2$ with identical margin $d$. The black line in (d) and (e) shows the optimal model, the shaded region in (c), (d), and (e) indicates the Rashomon set $\hat{R}_{set}(\F_{\Omega}, 0)$ with its boundaries represented by green lines. The hypothesis space consists of all origin-centered linear models that intersect the zero-one hypercube, where data reside. 
		(c) shows that the Rashomon ratio can be computed as a ratio of angles $\alpha$ (represents the Rashomon set) and $\beta$ (represents the hypothesis space). (d) and (e) illustrate that data sets $S_1$ and $S_2$ are represented by different angles $\alpha_1$ and $\alpha_2$ and therefore have different Rashomon ratios. Figure is best seen in color.} 
			\label{fig:margin}
	\end{figure*}

	\begin{proof}
		
		Consider two-dimensional separable data, $\X \in [0,1]^2$, and a parametrized hypothesis space of origin-centered linear models: $\F = \{\omega^T x, \omega = (k, -1), x \in \R^2, k \in \R\}$. Consider also 0-1 loss $\phi_{\omega}(x,y) = \1_{[y = sign (\omega^Tx)]}$ and an empirical risk minimizer $\hat{f} = f_{\hat{\omega}} $ that maximizes the geometric margin. Since the data are populated in a $[0,1]^2$ hypercube, as a hypothesis space we will consider all models that intersect the unit-hypercube.
		
		For some positive constant $a \in(0,1)$ that we choose later, consider the following regions of the feature space:
		\[A = \{ x^1 \in[0, 1-a), x^2 > x^1 + (1-2a) \},\]
		\[B = \{ x^1 \in(a, 1], x^2 < x^1 - (1-2a) \},\]
		\[C = \{ x^1 \in[0, a), x^2 \in (1-a, 1] \},\]
		\[D = \{ x^1 \in(1-a, 1], x^2 \in [0, a) \}.\]
		
		Construct data set $S_1$, such that $S_1 = \{(x_A,1) \cup (x_B, -1) \cup (x_{S_1}^{s_1}, 1) \cup (x_{S_1}^{s_2}, -1)\}$, where $x_A \in A$ is any sample from the region $A$, $x_B \in B$ is any sample from the region $B$, $x_{S_1}^{s_1}$ and $s_{S_1}^{s^2}$ are special points for the data set $S_1$ such that $x_{S_1}^{s_1} = [1 - 2a, 1]$ and $x_{S_1}^{s_2} = [1, 1 -2a]$. Please see  Figure \ref{fig:margin1} for details.
		
		Construct data set $S_2$, such that $S_2 = \{(x_C,1) \cup (x_D, -1) \cup (x_{S_2}^{s_1}, 1) \cup (x_{S_2}^{s_2}, -1)\}$, where $x_C \in C$ is any sample from the region $C$, $x_D \in D$ is any sample from the region $D$, $x_{S_2}^{s_1}$ and $x_{S_2}^{s_2}$ are special points for the data set $S_2$ such that $x_{S_2}^{s_1} = [a, 1-a]$ and $x_{S_2}^{s_2} = [1-a, a]$. Please see  Figure \ref{fig:margin2} for details.
		
		Note that the data sets we considered have the same width for the geometrical margin $d = \sqrt{2}(2a - 1)$ (see  Figures \ref{fig:margin1}, \ref{fig:margin2}). Now, we are left to show that the Rashomon ratios are different.
		
		For the hypothesis space of origin-centered lines we have a unique parameterization and a one-to-one correspondence between an actual model and its parameterization. Therefore, if the Rashomon set is a single connected component, an angle $\alpha$ between the two most distant models in the Rashomon set gives us some information about the size of the Rashomon set. In particular, we can compute the Rashomon ratio as a ratio of the angle $\alpha$ that represents the Rashomon set and the angle $\beta$ that corresponds to the hypothesis space as shown on  Figure \ref{fig:margin3}.  Since the hypothesis space is defined on the unit-hypercube, $\beta = \pi/2$ and for the Rashomon parameter $\theta = 0$ the Rashomon ratio is:
		\begin{equation*} \hat{R}_{ratio}(\F, 0))  = \frac{\alpha}{\beta} = \frac{ 2  \max_{f \in \hat{R}_{set}(\F_{\Omega}, 0)}\left|\arctan (f_{\hat{\omega}}) - \arctan(f_{\omega})\right|}{\pi/2}.\end{equation*}
		
		For data sets $S_1$ and $S_2$  Figures \ref{fig:margin4} and \ref{fig:margin5}   show the Rashomon set and angles $\alpha_1$ and $\alpha_2$ that represent the volume of the Rashomon set. Given the special points in the data sets we can compute $\alpha_1$ and $\alpha_2$ exactly: $\alpha_1 = 2\left(\arctan(1) - \arctan(1-2a)\right) = \frac{\pi}{2} - 2\arctan(1-2a)$ and $\alpha_2 = 2\left(\arctan(1) - \arctan\left(\frac{a}{1 - a}\right)\right) = \frac{\pi}{2} - 2\arctan\left(\frac{a}{1 - a}\right)$.  Then the Rashomon ratios difference is:
		
		\begin{equation*}
		\begin{split}
		|\hat{R}_{ratio_{S_1}}(\F, 0) &- \hat{R}_{ratio_{S_2}}(\F, 0)|  = \left|\frac{\alpha_1 - \alpha_2}{\pi/2}\right| \\
		& =\left| \frac{4}{\pi}\left(\arctan(1 - 2a) - \arctan\left(\frac{a}{1 -a}\right)\right)\right|\\
		&= \left| \frac{4}{\pi}\arctan\left(1 - \frac{4a-2}{2a^2-1}\right)\right|.
		\end{split}
		\end{equation*}
		
		Now if we choose $a \in (0,1)$ and such that $\left| \frac{4}{\pi}\arctan\left(1 - \frac{4a-2}{2a^2-1}\right)\right| > \lambda$, then the Rashomon ratio difference $|\hat{R}_{ratio_{S_1}}(\F, 0) - \hat{R}_{ratio_{S_2}}(\F, 0)|$ is at least $\lambda$.
		

	\end{proof}	
	
	
\subsection{Empirical Local Rademacher Complexity}\label{appendix:rademacher}
	The empirical Rademacher complexity is another complexity measure of the hypothesis space. Following \citet{bartlett2005local}, for binary classification we define it as follows.
	\begin{definition}[Empirical Rademacher complexity]	
	Given a data set $S$, and a hypothesis space $\F$ of real-valued functions, the empirical Rademacher complexity of $\F$ is defined as:
	\begin{equation*} \hat{R}_n^{S}(\F)={\frac {1}{n}}\mathbb {E} _{\sigma }\left[\sup _{f\in F}\sum _{i=1}^{n}\sigma _{i}f(z_{i})\right],\end{equation*}
	where $ \sigma _{1},\sigma _{2},\dots ,\sigma _{n}$ are independent random variables drawn from the Rademacher distribution i.e. $P(\sigma _{i}=+1)=P(\sigma _{i}=-1)=1/2$ for $ i=1,2,\dots ,n$.
	\end{definition}\label{def:rademacher}
	
	Since we are interested only in models that are inside the Rashomon set, we will consider local empirical Rademacher complexity \citep{bartlett2005local}, which is defined using the Rashomon set $\hat{R}_{set}(\F, \theta)$. In the following theorem, we provide a simple example to show the discrepancy between the two measures.
	
	

\begingroup
\def\thetheorem{\ref{th:ratio_rademacher}}
\begin{theorem}[Rashomon ratio is not the local Rademacher complexity]
\ThRatioRademacher
\end{theorem}
\addtocounter{theorem}{-1}
\endgroup
	
			\begin{figure*}[t]
		\centering
		\begin{subfigure}[t]{0.25\textwidth}
			\includegraphics[width=\textwidth]{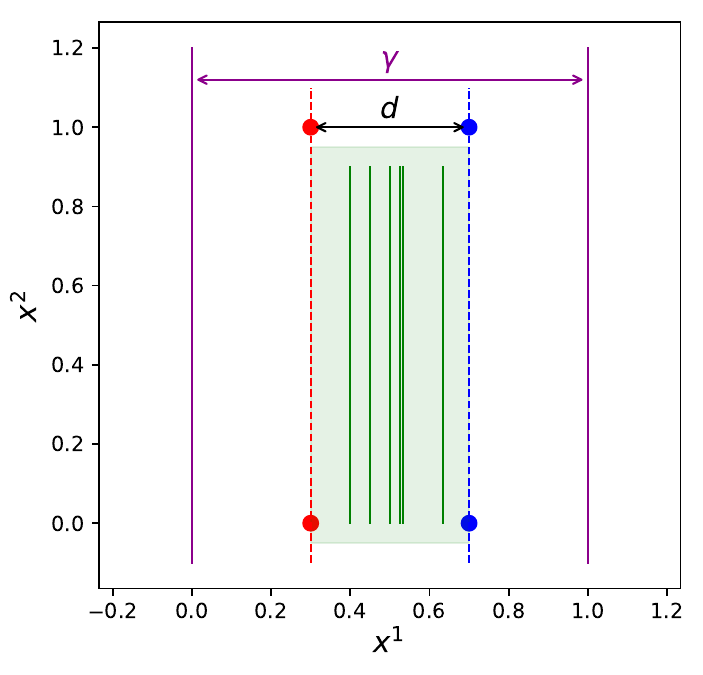}
			\caption{$\hat{R}_{ratio}(\F_{\Omega}, \theta)=\frac{d}{\gamma}$}
			\label{fig:rademacher1}
		\end{subfigure}
		\begin{subfigure}[t]{0.25\textwidth}
			\includegraphics[width=\textwidth]{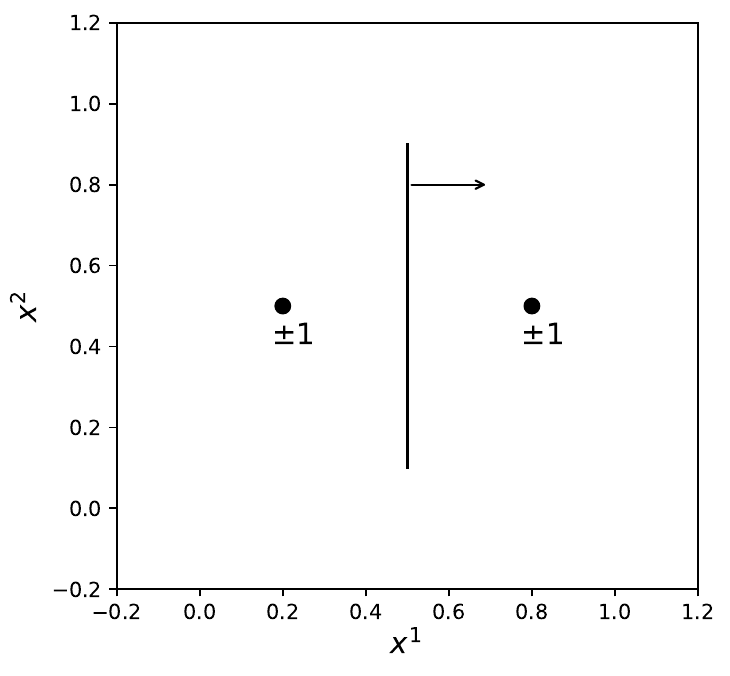}
			\caption{Toy data set}
			\label{fig:rademacher4}
		\end{subfigure}

		\begin{subfigure}[t]{0.25\textwidth}
			\includegraphics[width=\textwidth]{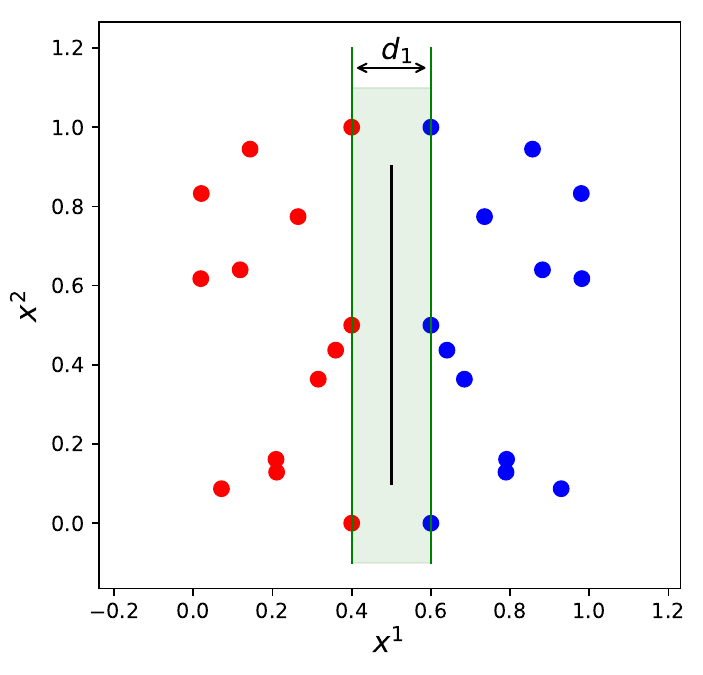}
			\caption{$\hat{R}_{{ratio}_{S_1}}(\F_{\Omega}, 0)=d_1$}
			\label{fig:rademacher2}
		\end{subfigure}
		\begin{subfigure}[t]{0.25\textwidth}
			\includegraphics[width=\textwidth]{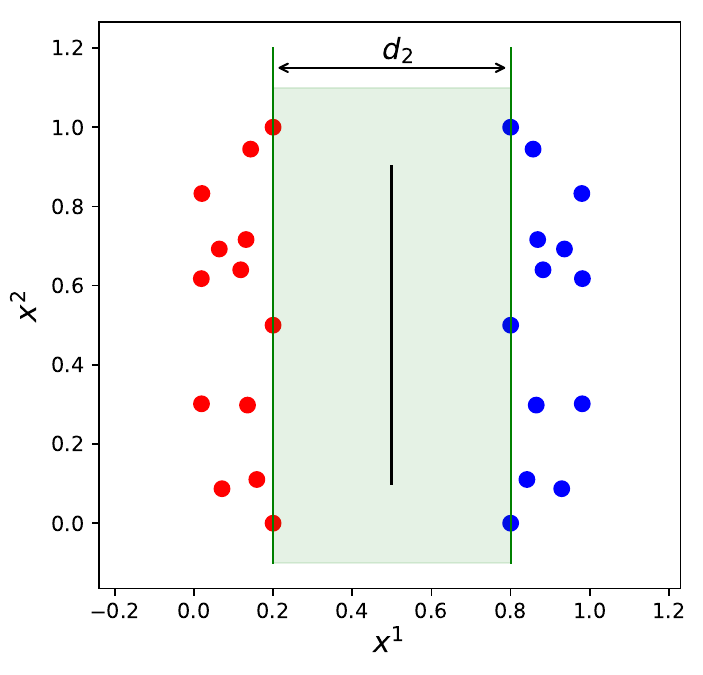}
			\caption{$\hat{R}_{{ratio}_{S_2}}(\F_{\Omega}, 0)=d_2$}
			\label{fig:rademacher3}
		\end{subfigure}
		
		\caption{An illustration of different Rashomon ratios with equivalent empirical local Rademacher complexities. Black line shows the optimal model, shaded region indicates the Rashomon set $\hat{R}_{set}(\F_{\Omega}, 0)$ with its models represented by green lines, the magenta color indicates boundaries of the hypothesis space. (a) The projected minimal distance $d$ is equivalent to the volume of the Rashomon set. (b) A toy data set that illustrates that the empirical local Rademacher complexity is zero for models in the Rashomon set. (c) data set $S_1$, and (d) data set $S_2$ illustrate symmetric separable data sets with different Rashomon ratios. Best seen in color.}\label{fig:rademacher}
	\end{figure*}
	
	    \begin{proof}
		
		Consider two-dimensional separable symmetric data, $\X \in [0,1]^2$, $\Y=\{0,1\}$, 0-1 loss $\phi_f(x,y) = \1_{[y = sign f(x)]}$ with empirical risk minimizer $\hat{f}$, and a hypothesis space $\F_{\Omega}$ of decision stumps based on the first feature, where for $f\in \F_{\Omega}$: $f=1$ if $x^1>\omega$, $\omega\in \R$, $f=0$ otherwise. We have a one-to-one correspondence between a function and its threshold parameter $\omega$. Therefore, if the Rashomon set is a single connected component, we can compute the volume of the Rashomon set in  parameter space by computing the difference between the largest and smallest threshold values of models within the Rashomon set, as illustrated in  Figure \ref{fig:rademacher1}. For $\theta = 0$, the difference between the largest and the smallest threshold values will be equivalent to the minimal distance between points of opposite classes projected onto the first feature $d = \min_{x_i, x_j: y_i\neq y_j}|PR_1(x_{i}) - PR_1(x_{j})|$, where $PR_1$ is the projection of point $x$ onto first feature.
		
		For the hypothesis space, we consider all decision stumps in the first dimension that are in the segment $[0,1]$, where data are populated. The difference in thresholds for the hypothesis space is $\beta = 1$ and therefore  $\V(\F_{\Omega}) = 1$. For $\theta = 0$, the volume of the Rashomon set will be equivalent to $d$---the projected minimal distance between points of opposite classes, and have that $\V(\hat{R}_{set}(\F_{\Omega}, 0)) = d$ and $\hat{R}_{ratio}(\hat{R}_{set}(\F_{\Omega}, 0)) = \frac{d}{1}=d$. Now consider any two separable symmetric data sets  $S_1$, $S_2$ with different projected minimal distances $d_1$ and $d_2$, such that $|d_1 - d_2| > \lambda$. (Please see  Figure \ref{fig:rademacher2} and \ref{fig:rademacher3} for details of the data sets $S_1$ and $S_2$.) Consequently we get that: 
		\[\left|\hat{R}_{ratio_{S_1}}(\F_{\Omega}, 0) - \hat{R}_{ratio_{S_2}}(\F_{\Omega}, 0)\right| = |d_1 - d_2| > \lambda.\]
		
		For a separable symmetric data $S$ and 0-1 loss function, the Rashomon set $\hat{R}_{set}(\F_{\Omega}, 0)$ contains all models that separate data in the same way. Therefore the Rademacher complexity of the Rashomon set is $\hat{R}_n^{S}\left(\hat{R}_{set}(\F_{\Omega}\right)$ is:
		\begin{equation*}
		    \begin{split}
		        \hat{R}_n^{S}\left(\hat{R}_{set}(\F_{\Omega}, 0)\right) &= {\frac {1}{n}}\mathbb {E} _{\sigma }\left[\sup _{f\in \hat{R}_{set}(\F_{\Omega}, 0)}\sum _{i=1}^{n}\sigma_{i}f(x_{i})\right] \\ 
		        &= {\frac {1}{n}}\mathbb {E} _{\sigma }\left[\sum _{i=1}^{n}\sigma_{i} \hat{f}(x_{i})\right] = 0,
		    \end{split}
		\end{equation*}
		where in the penultimate equality we have used the fact that, in the case of separable data and $\theta=0$, all models in the Rashomon set will perform identically on any permutation of the labels.

		Equality of the empirical Rademacher complexity of the optimal model to zero follows from the symmetric data considered and symmetrical patterns of all possible target assignments. For example, for the toy data set in Figure \ref{fig:rademacher4}:  
		$\hat{R}_n^{S}\Big( \hat{R}_{set}(\F_{\Omega}, 0) \Big) = \frac{1}{2}\frac{1}{4}\bigg[ \Big(\hat{f}(x_1) + \hat{f}(x_2)\Big) + \Big(\hat{f}(x_1) - \hat{f}(x_2)\Big) + \Big(-\hat{f}(x_1) + \hat{f}(x_2)\Big) + \Big(-\hat{f}(x_1) - \hat{f}(x_2)\Big) \Big) \bigg]= 0$.
		
		Since both $S_1$ and $S_2$ are separable and symmetric we get that:
		$$\hat{R}_n^{S_1}\left(\hat{R}_{set}(\F_{\Omega}, 0)\right) = 0= \hat{R}_n^{S_2}\left(\hat{R}_{set}(\F_{\Omega}, 0)\right).$$
		
		
		
	\end{proof}
	

\section{Proofs for Generalization Results}

\subsection{Proof of Theorem \ref{th:approximating_finite_1}}\label{appendix:proof_approximate_finite}

We recall and provide the proof of Theorem \ref{th:approximating_finite_1} after Proposition \ref{stmt:anc_rset_true_empirical} that we use to prove Theorem \ref{th:approximating_finite_1}.

    Given a parameter $\eta \geq 0$, we call the Rashomon set with restricted empirical risk an \textit{anchored Rashomon set}:
    	\[\hat{R}_{set}^{anc}(\F, \eta) := \{ f \in \F: \hat{L}(f) \leq \eta  \}.\]
    	We define also the \textit{true anchored Rashomon set} based on the true risk as follows:
    	\[R_{set}^{anc}(\F,\eta) := \{ f \in \F: L(f) \leq \eta  \}.\]

   \newcommand{\StmtAncRsetTrueEmpirical}
    {
    For a  loss $l$ bounded by $b$ and for any $\epsilon >0$, and for a fixed $f$, if $f \in R_{set}^{\anc}(\F, \eta)$ then with probability at least $1 - e^{-2n(\epsilon /b)^2}$ with respect to the random draw of training data, 
    \[f \in \hat{R}_{set}^{\anc}(\F, \eta+\epsilon).\]
    }
    
    \begin{proposition} [True  anchored Rashomon set is close to empirical]\label{stmt:anc_rset_true_empirical} 
    \StmtAncRsetTrueEmpirical
    \end{proposition}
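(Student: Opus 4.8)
The plan is to apply a one-sided Hoeffding bound to the single fixed function $f$, mirroring the argument behind Proposition \ref{stmt:anc_rset_empirical_true} but using the opposite tail. The crucial observation is that because $f$ is fixed prior to and independently of the sample, the empirical risk $\hat{L}(f) = \frac{1}{n}\sum_{i=1}^n l(f, z_i)$ is an average of $n$ i.i.d.\ random variables $l(f, z_i) \in [0, b]$ whose common expectation is exactly the true risk $L(f)$. Consequently no union bound over $\F$ is required, which is precisely why the resulting bound carries no complexity term and depends only on $n$, $b$, and $\epsilon$.

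First I would invoke Hoeffding's inequality applied to the upper tail of this bounded average, which gives
\[
P\big(\hat{L}(f) - L(f) \geq \epsilon\big) \leq e^{-2n(\epsilon/b)^2}.
\]
Equivalently, with probability at least $1 - e^{-2n(\epsilon/b)^2}$ over the random draw of the training data, the event $\hat{L}(f) < L(f) + \epsilon$ holds. Next I would use the hypothesis that $f \in R_{set}^{\anc}(\F, \gamma)$, which by definition of the true anchored Rashomon set means $L(f) \leq \gamma$. Chaining these two facts yields, on the same high-probability event, $\hat{L}(f) < L(f) + \epsilon \leq \gamma + \epsilon$, so that $f \in \hat{R}_{set}^{\anc}(\F, \gamma + \epsilon)$, as claimed.

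There is essentially no substantive obstacle here beyond orienting the Hoeffding tail correctly: Proposition \ref{stmt:anc_rset_empirical_true} controls $L(f) - \hat{L}(f)$ (bounding true risk by empirical risk), whereas this statement requires controlling $\hat{L}(f) - L(f)$ (bounding empirical risk by true risk), so the two propositions together use the two opposite tails of the same concentration inequality. The only subtlety worth recording explicitly is that $f$ must be a prespecified model, fixed before the data are drawn, for the i.i.d.\ averaging to be legitimate; the claim is a per-model guarantee rather than a uniform one over the whole set, so the argument deliberately avoids any dependence on the cardinality or complexity of $\F$.
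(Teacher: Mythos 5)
Your proposal matches the paper's proof essentially step for step: a one-sided Hoeffding bound on $\hat{L}(f) - L(f)$ for the single fixed $f$, followed by chaining with $L(f) \leq \gamma$ from the definition of $R_{set}^{\anc}(\F,\gamma)$ to conclude $f \in \hat{R}_{set}^{\anc}(\F,\gamma+\epsilon)$. Your added remarks---that no union bound is needed because $f$ is fixed before the draw, and that this proposition and Proposition \ref{stmt:anc_rset_empirical_true} use opposite tails of the same inequality---are correct and consistent with the paper.
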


    \begin{proof}
For a fixed $f \in R_{set}^{\anc}(\F, \eta)$ by Hoeffding's inequality:
\begin{equation*}
    \begin{split}
    P\left[   \hat{L}(f) - L(f) >\epsilon \right] &= P\left[ \frac{1}{n}\sum_{i=1}^n l(f, z_i) - \E\left[ l(f, z)\right]  > \epsilon \right] \\
& \leq e^{-2n(\epsilon /b)^2}.
    \end{split}
\end{equation*}
Therefore, with probability at least $1 - e^{-2n(\epsilon /b)^2}$ with respect to the random draw of data, $\hat{L}(f)  - L(f) \leq \epsilon$.

Since $f \in R_{set}^{\anc}(\F, \eta)$, then by definition of the Rashomon set, $L(f) \leq \eta$. Combining this with Hoeffding's inequality, we get that with probability at least $1 - e^{-2n(\epsilon /b)^2}$:
\[\hat{L}(f) \leq L(f) +  \epsilon \leq \eta + \epsilon,\]
therefore $f \in \hat{R}_{set}^{\anc}(\F, \eta+\epsilon).$
\end{proof}

    Proposition \ref{stmt:anc_rset_true_empirical} is based on the same intuition as Lemma 23 in the work of \citet{fisher2018model}, which is used to bound the probability with which a given model is not in the empirical Rashomon set; this is used in a proof of a bound for model class reliance. We use the proposition to indicate the probability with which the empirical anchored Rashomon set is as close as possible to the true anchored Rashomon set for a given model.



\begingroup
\def\thetheorem{\ref{th:approximating_finite_1}}
\begin{theorem}[The advantage of a true  Rashomon set]
 Consider finite hypothesis spaces $\F_1$ and $\F_2$, such that $\F_1\subset\F_2$. Let the loss $l$ be bounded by $b$, $l(f_2,z)\in[0,b] \;\;\forall f_2\in\F_2, \forall z\in\mathcal{Z}$.
        Define an optimal function $\fts\in \textrm{\rm argmin}_{f_2\in \F_2} L(f_2)$. Assume that the true Rashomon set includes a function from $\F_1$, so there exists a model $\tilde{f}_1\in\F_1$ such that $\tilde{f}_1\in R_{set}(\F_2,\gamma)$. (Note that we do not know $\tilde{f}_1$.)
        In that case, for any $\epsilon > 0$ with probability at least $1 - \epsilon$ with respect to the random draw of data:
        \begin{equation*}
        L(f_2^*) - b\sqrt{\frac{\log |\F_1| + \log 2/\epsilon}{2n}} \leq \hat{L}(\hat{f}_1) \leq L(f_2^*) + \gamma+b\sqrt{\frac{ \log 1/\epsilon}{2n}},
        \end{equation*}
        where $\hat{f}_1 \in \argmin_{f_1\in\F_1}\hat{L}(f_1)$. (Unlike $\tilde{f}_1$, we do know $\hat{f}_1$ because we can calculate it.)
\end{theorem}
\addtocounter{theorem}{-1}
\endgroup

\begin{proof}
 \textbf{Lower bound.} We apply the union bound and Hoeffding's inequality. The result is that with probability at least $1-\epsilon$ for every $f_1 \in \F_1$ we have, for finite hypothesis space $\F_1$:
\begin{equation}\label{eq:th41proof}
L(f_1) \leq \hat{L}(f_1) + b\sqrt{\frac{\log |\F_1| + \log 2/\epsilon}{2n}}.    
\end{equation}
Combining this Occam's razor bound with the definition of  $f_2^*\in\arg\min_{f\in\F_2}L(f)$ we get that, under the same conditions:
\[L(f_2^*) \leq L(\hat{f}_1) \leq \hat{L}(\hat{f}_1) + b\sqrt{\frac{\log |\F_1| + \log 2/\epsilon}{2n}}.\]

\textbf{Upper bound.} 
By the assumption of the theorem, we have that $L(\tilde{f}_1) \leq L(f_2^*) + \gamma$. Also, by the definition of an optimal model $f_1^*$, $L(f^*_1) \leq L(\tilde{f}_1)$. Combining these, we get that $L(f_1^*) \leq L(\tilde{f}_1)\leq L(f_2^*) + \gamma$. Thus $f_1^*$ is in the true Rashomon set of $\F_2$ with parameter $\gamma$. Alternatively, $f_1^*$ is in the true anchored Rashomon set of $\F_2$ with parameter $\eta = L(f_2^*) + \gamma$, $f_1^* \in R_{set}^{anc}(\F_2, \eta)$. Following Proposition \ref{stmt:anc_rset_true_empirical}, we have that for any $\epsilon_1 > 0$ with probability at least $1 - e^{-2n(\epsilon_1 /b)^2}$ with respect to the random draw of data, $f_1^*$ is in the slightly larger anchored Rashomon set $\hat{R}_{set}^{anc}(\F_2, \eta + \epsilon_1)$, and therefore, with high probability, $\hat{L}(f_1^*) \leq \eta + \epsilon_1$. Or alternatively, by setting $\epsilon = e^{-2n(\epsilon_1 /b)^2}$ we get that for any $\epsilon > 0$ with probability at least $1 - \epsilon$, we have $\hat{L}(f_1^*) \leq \eta + b\sqrt{\frac{\log 1/\epsilon}{2n}}$. Further, by definition of the empirical risk minimizer and given that $\eta = L(f_2^*) + \gamma$ we get:
\begin{equation*}\label{eq:helper1}
 \hat{L}(\hat{f}_1)\leq   \hat{L}(f_1^*) \leq L(f_2^*) + \gamma + b\sqrt{\frac{\log 1/\epsilon}{2n}}.
\end{equation*}
Combining the previous two equations yields the statement of the theorem.

\end{proof}

\subsection{Proof of Theorem \ref{th:approximating_finite_3} via Lemma \ref{lemma:true_ratio_size_estimation}}\label{appendix:lemma41}

 Theorem \ref{th:approximating_finite_3} follows directly from Lemma \ref{lemma:true_ratio_size_estimation} below and Theorem \ref{th:approximating_finite_1}, which guarantees that with high probability, the sampled space $\F_1$ will contain at least one model from the true anchored Rashomon set. 
 
\newcommand{\LemmaTrueRatioSizeEstimation}
    {
    For a finite hypothesis space $\F_2$ of size $|\F_2|$, we will draw $|\F_1|$ functions uniformly without replacement from $\F_2$ to form $\F_1$. 
    If the true Rashomon ratio of the hypothesis space $\F_2$ is at least  
    $$R_{ratio}(\F_2,\gamma)\geq 
    1 - \epsilon^{\frac{1}{|\F_1|}}$$ 
    then with probability at least $1-\epsilon$ with respect to the random draw of functions from $\F_2$ to form $\F_1$, the Rashomon set contains at least one model $\tilde{f}_1$ from $\F_1$.
    }

    \begin{lemma}\label{lemma:true_ratio_size_estimation}
    \LemmaTrueRatioSizeEstimation
    \end{lemma}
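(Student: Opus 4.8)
The plan is to bound the probability of the complementary (bad) event—that none of the $|\F_1|$ functions drawn from $\F_2$ lands in the true anchored Rashomon set—and to show this probability is at most $\epsilon$. Write $r := R^{\anc}_{ratio}(\F_2,\gamma) = |R^{\anc}_{set}(\F_2,\gamma)|/|\F_2|$, so the Rashomon set has $r|\F_2|$ members and its complement has $(1-r)|\F_2|$ members. Since we draw $|\F_1|$ functions uniformly without replacement, the bad event occurs exactly when all $|\F_1|$ draws fall in the complement. This is a hypergeometric event with probability
\[
p_{\text{bad}} = \frac{\binom{(1-r)|\F_2|}{|\F_1|}}{\binom{|\F_2|}{|\F_1|}} = \prod_{i=0}^{|\F_1|-1}\frac{(1-r)|\F_2| - i}{|\F_2|-i}.
\]

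Next I would carry out the key step flagged in the text: replacing sampling \emph{without} replacement by sampling \emph{with} replacement, which can only decrease the bad probability. Concretely, I claim each factor $\frac{(1-r)|\F_2|-i}{|\F_2|-i}$ is non-increasing in $i$: treating $i$ as continuous and differentiating $g(i) = \frac{a-i}{b-i}$ with $a=(1-r)|\F_2|$ and $b=|\F_2|$ gives $g'(i) = \frac{a-b}{(b-i)^2} = \frac{-r|\F_2|}{(b-i)^2}\le 0$. Hence every factor is at most its value at $i=0$, namely $a/b = 1-r$, and multiplying the $|\F_1|$ factors yields
\[
p_{\text{bad}} \le (1-r)^{|\F_1|},
\]
which is exactly the bad probability for $|\F_1|$ independent draws with replacement.

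Finally I would substitute the hypothesis. If $r \ge 1 - \epsilon^{1/|\F_1|}$, then $1-r \le \epsilon^{1/|\F_1|}$, so $p_{\text{bad}} \le (1-r)^{|\F_1|} \le \epsilon$. Taking complements, the Rashomon set contains at least one $\tilde{f}_1\in\F_1$ with probability at least $1-\epsilon$, as claimed.

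The argument is short; the step I would be most careful about—and the only real subtlety—is justifying the per-factor monotonicity bound $\frac{(1-r)|\F_2|-i}{|\F_2|-i}\le 1-r$, since this is what legitimately converts the without-replacement product into the clean with-replacement bound. I would also note two bookkeeping points: $r|\F_2|$ and $(1-r)|\F_2|$ are integers (as $r$ is a count divided by $|\F_2|$), so the binomial coefficients are well defined; and the hypergeometric expression requires $|\F_1|\le (1-r)|\F_2|$, since if $|\F_1| > (1-r)|\F_2|$ the bad event is impossible and the conclusion is immediate.
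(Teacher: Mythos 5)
Your proof is correct and takes essentially the same route as the paper's: both reduce sampling without replacement to sampling with replacement, bounding the miss probability by $(1-r)^{|\F_1|}$ and then invoking the hypothesis $r \geq 1-\epsilon^{1/|\F_1|}$. If anything, your version is more explicit than the paper's, which simply asserts $p_i \leq \left(1 - R^{\anc}_{ratio}(\F_2,\gamma)\right)^i$ for the without-replacement miss probability, whereas you derive the exact hypergeometric product and justify the per-factor bound $\frac{(1-r)|\F_2|-i}{|\F_2|-i}\leq 1-r$, along with the integrality and $|\F_1|>(1-r)|\F_2|$ edge cases.
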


\begin{proof}

  The probability of an individual sample from $\F_2$ missing the true Rashomon set is $1 - R_{ratio}(\F_2,\gamma)$. The probability if this happening $|\F_1|$ times independently is $(1 - R_{ratio}(\F_2,\gamma))^{|\F_1|}$. Thus, for any $\epsilon > 0$, if the Rashomon ratio is at least $R_{ratio}(\F_2,\gamma)\geq 1 - \epsilon^{\frac{1}{|\F_1|}}$, the probability $p_w$ of sampling, with replacement, at least one hypothesis from $R_{ratio}(\F_2,\gamma)$ is:
 \begin{equation*}
p_w  = 1- \left(1 - R_{ratio}(\F_2,\gamma)\right)^{|\F_1|}\geq 1 - \epsilon.
\end{equation*}
Let $p_i$ be the probability, under sampling without replacement, that samples $1 \ldots i$ have missed $R_{ratio}(\F_2,\gamma)$. $p_1 = 1 - R_{ratio}(\F_2,\gamma)$, and $p_i \leq (1 - R_{ratio}(\F_2,\gamma))^i$. The probability, under sampling without replacement, that at least one hypothesis from $R_{ratio}(\F_2,\gamma)$ in $\F_1$ is therefore $1-p_{|\F_1|} \geq p_w$. Thus the statement of the lemma holds with probability at least $1-\epsilon$.
\end{proof}

Let us recall Theorem \ref{th:approximating_finite_3}:
\begingroup
\def\thetheorem{\ref{th:approximating_finite_3}}
\begin{theorem}[Example of the advantage of a large true Rashomon set]
    \ThApproximatingFiniteThree
\end{theorem}
\addtocounter{theorem}{-1}
\endgroup
\begin{proof}   
According to the Lemma \ref{lemma:true_ratio_size_estimation}, for any $\epsilon > 0$ with probability at least $1 - \epsilon$ with respect to the random draw of functions, if the Rashomon set it at least $R_{ratio}(\F_2,\gamma)\geq 
    1 - \epsilon^{\frac{1}{|\F_1|}}$, then the Rashomon set contains at least one model $\tilde{f}$ from $\F_1$. In that case, according to Theorem \ref{th:approximating_finite_1} with probability at least $1 - \epsilon$ with respect to the random draw of data, the bound \eqref{eq:theorem43} holds. Therefore with probability at least $(1 - \epsilon)^2$ we get the statement of the theorem.
\end{proof}

\subsection{Proof of Theorem \ref{th:existence_multiple}}

\begingroup
\def\thetheorem{\ref{th:existence_multiple}}
\begin{theorem}[Existence of multiple simpler models]
    \ThExistenceMultiple
\end{theorem}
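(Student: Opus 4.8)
The plan is to lift the single-function existence argument of Theorem \ref{th:existence_single_2} into a counting argument by replacing the single reference point with a maximal packing of the Rashomon set. First I would fix a maximal $2\delta$-packing $\{\xi_1,\dots,\xi_B\}$ of $\hat{R}_{set}(\F_2,\theta)$, so that $B = \mathcal{B}(\hat{R}_{set}(\F_2,\theta),2\delta)$ and, by the packing definition, $\|\xi_i-\xi_j\|_p > 2\delta$ for all $i\neq j$. Since each $\xi_i$ lies in the Rashomon set, the approximation hypothesis supplies a model $\bar{f}_1^i \in \F_1$ with $\|\xi_i - \bar{f}_1^i\|_p \leq \delta$. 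This produces $B$ candidate simple models, one per packing point.

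Next I would verify the two listed conclusions. Membership in $\F_1$ is immediate from the construction. Membership in the Rashomon set follows exactly as in the proof of Theorem \ref{th:existence_single_2}: since each $\bar{f}_1^i$ sits inside the $\delta$-ball around a point of the Rashomon set, the Lipschitz/ball-containment reasoning places it in $\hat{R}_{set}(\F_2,\theta)$. The generalization bound in part (2) is inherited directly from the Bartlett--Mendelson bound already invoked in Theorems \ref{th:existence_single_1} and \ref{th:existence_single_2}: that bound holds simultaneously for \emph{every} $f_1 \in \F_1$ on a single event of probability at least $1-\epsilon$, so it applies to all $\bar{f}_1^i$ at once, with no additional union bound over the $B$ functions and hence no degradation of $\epsilon$.

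The genuinely new step is establishing that the $B$ models are \emph{distinct}, and this is where the packing radius $2\delta$ is essential. By the triangle inequality, for $i \neq j$ we have $\|\bar{f}_1^i - \bar{f}_1^j\|_p \geq \|\xi_i-\xi_j\|_p - \|\xi_i-\bar{f}_1^i\|_p - \|\xi_j-\bar{f}_1^j\|_p > 2\delta - \delta - \delta = 0$, so no two approximators coincide and we obtain exactly $B$ distinct functions. The main obstacle to watch is the Rashomon-membership step: as in Theorem \ref{th:existence_single_2}, one needs each packing point to be the center of a $\delta$-ball contained in the Rashomon set (i.e. the set must be locally ``large'' enough), so I would either restrict the packing to interior points whose $\delta$-balls lie inside $\hat{R}_{set}(\F_2,\theta)$, or absorb the $K\delta$ slack into the Rashomon parameter, while checking that the bookkeeping still returns $\mathcal{B}(\hat{R}_{set}(\F_2,\theta),2\delta)$ functions. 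Everything else is a direct reuse of the earlier lemmas.
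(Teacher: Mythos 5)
Your proposal follows essentially the same route as the paper's proof: fix a maximal $2\delta$-packing of $\hat{R}_{set}(\F_2,\theta)$, attach a $\delta$-close approximator from $\F_1$ to each packing center, deduce distinctness of the $B$ approximators via the triangle inequality ($\|\bar{f}_1^i-\bar{f}_1^j\|_p > 2\delta-\delta-\delta$), and invoke the Bartlett--Mendelson bound, which holds uniformly over $\F_1$, so no union bound over the $B$ functions is needed. If anything you are slightly more careful than the paper, which silently passes over the $K\delta$ membership slack you flag---read literally, its argument only places each $\bar{f}_1^i$ in $\hat{R}_{set}(\F_2,\theta+K\delta)$---so your suggested fixes (interior packing points or absorbing $K\delta$ into $\theta$) are apt.
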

\addtocounter{theorem}{-1}
\endgroup

\begin{proof}
Starting from the packing  number of the Rashomon set $\mathcal{B}(\hat{R}_{set}(\F_2, \theta), 2\delta)$, there exists a $2\delta$-packing $\Xi =\{\xi_1, ..., \xi_k | \xi_i \in \hat{R}_{set}(\F_2, \theta)\}$ such that $\|\xi_i - \xi_j\|_p > 2\delta$ for all $i\neq j$. On the other hand, for each $\xi_i \in \hat{R}_{set}(\F_2,\theta)$ there exists $\bar{f}_1^i \in \F_1$ such that $\|\xi_i - \bar{f}_1^i\|_p \leq \delta$ (this is the assumption that $\F_1$ serves as a good cover for $\F_2$). Therefore for each ball center $\xi_i$ in the packing  there is a distinct model $\bar{f}_1^i$ from the simpler hypothesis space $\F_1$. Thus, the Rashomon set contains at least $B = \mathcal{B}(\hat{R}_{set}(\F_2, \theta), 2\delta)$ models from  $\F_1$. 

The generalization bound follows \citet{bartlett2002rademacher}.
\end{proof}

\subsection{Examples of function approximation in different hypothesis spaces}\label{appendix:approximatingtable}

Table \ref{table:approximation} shows examples of function classes where good approximating sets occur. More specifically, Table \ref{table:approximation} describes classes of functions $\F_2$ that can be approximated with functions from classes $\F_1$ within $\delta$ using a specified norm. 

	\begin{table*}[t]	
	\caption {Examples of function approximation in different hypothesis spaces: a function from space $\F_1$ approximates a function in space $\F_2$ with given guarantee $\delta$. }
		\label{table:approximation}
		\centering
		\small
			\begin{tabular}{ |p{0.25\textwidth}|p{0.25\textwidth}|p{0.3\textwidth}|p{0.1\textwidth}| } 
				\hline
				$\F_2$ & $\F_1$ &  $\delta$ (depends on parameters in bounds below)& Source\\\hline 
				$f \in L_{\infty}(\Omega)$,  \newline$\|f\|_{\infty} \in [m, M]$ & $s_N \in S(\Omega)$,  \newline $s_N$---piecewise constant, \newline $N$---number of constants&  $\|f - s_N\|_{\infty} \leq \frac{M - m}{2N}$& \citet{devore1998nonlinear, davydov2011algorithms} \\\hline 
				$f \in W_p^1(\Omega)$, $1 \leq p \leq \infty$, \newline where $W_p^1$ is a Sobolev space & $s_{\Delta}(f) \in S(\Omega) $, \newline  $s_{\Delta}$---piecewise constant,\newline $\Delta$---fixed partition, \newline $\Omega=(0,1)^d$, \newline $N$---number of constants  &$\|f - s_{\Delta}(f)\|_{p} \leq C N^{-1/d}|f|_{W_p^1{\Omega}}$ &\citet{davydov2011algorithms}\\\hline
				$f \in\{ x^k$, $k \in N$\} & $P(n)$---polynomials of degree \newline at most $n \in N$ & $\|f - P(n)\|_{\infty} \leq \frac{1}{2^{k-1}} \sum_{j > (n+k)/2} \binom{k}{j}$&\citet{newman1976approximation}\\\hline
				$f\in C[0,1]$ is a non-constant symmetric
				boolean function  on $x_1$,..,$x_n$ &$P(d)$---algebraic polynomials 
				of \newline degree $d$& $\|f - P(d) \|_{\infty} \leq \mathcal{O}(\sqrt{n(n - \Gamma(f))})$&\citet{paturi1992degree}\\\hline
				$f\in Lip_{M}(\alpha)$,  $f$ is Lipschitz \newline continuous with constant $M$&$N_n:[a,b]\rightarrow \R$ is a feedforward neural network with one layer and bounded, monotone and odd \newline defined activation function, $n\in \mathbb{N}$&$\sup_{x\in[a,b]}|f(x)-N_n(x)|\leq$ $\frac{5M}{2}\left(\frac{b-a}{n}\right)^{\alpha}$&\citet{cao2008estimate}\\\hline
				$f\in L_p(I)$, where $I\subset \mathbb{R}^d$ is a cube in $\mathbb{R}^d$, $\|\cdot\|_{W^{r}(L_p(I))}$---Sobolev semi norm &$P_r$---space of polynomials of  order $r$ in $d$, constant $C$ depends on $r$ &$\inf_{p\in P_r}\|f-p\|_{L_p(I)}\leq C|I|^{r/d}|f|_{W^{r}(L_p(I))}$&\citet{devore1998nonlinear}\\\hline
			\end{tabular}

	\end{table*}

\section{Data Set Descriptions}\label{apendix:more_data}
	
We provide a description of the data sets used in our experiments in Table \ref{table:datasets}. All of them were downloaded from the UCI Machine Learning Repository \citep{Dua:2017}. We show the number of features in each data set, sizes of the data set and any preprocessing steps that we used mainly to convert data to binary classification. For each data set, we performed cross-validation over ten folds for data sets with more than 200 points and over five folds for data sets with less than 200 points. We reserve one fold for testing, one for validation (e.g., hyper-parameter optimization) and the rest for training. All of the real-valued data sets were normalized to fit the unit-cube, and we did not standardize the data. During data processing, we omitted data records with missing values. We also omitted non-numerical features (e.g., date or text) when there was not a natural way to convert them to categorical features.

	\begin{table*}[t]
	\caption{Classification data sets description and processing notes.}
		\label{table:datasets}
		
		\centering
		\small
			
		\resizebox{\textwidth}{!}{%
			\begin{tabular} { p{0.19\textwidth}  p{0.1\textwidth} p{0.12\textwidth} p{0.12\textwidth} p{0.4\textwidth}} 
				\hline
Data Set Name           &Type of Features&Number of Features&Number of Data Points &Processing notes                               \\ \hline
\href{https://archive.ics.uci.edu/ml/datasets/MONK\%27s+Problems}{Monks-1}                &Binary       &15&556&                                               \\
\href{https://archive.ics.uci.edu/ml/datasets/MONK\%27s+Problems}{Monks-2}            &Binary       &15&601&                                               \\
\href{https://archive.ics.uci.edu/ml/datasets/MONK\%27s+Problems}{Monks-3}              &Binary       &15&554&                                               \\
\href{https://archive.ics.uci.edu/ml/datasets/congressional+voting+records}{Voting}                 &Binary       &16&232&                                               \\
\href{https://archive.ics.uci.edu/ml/datasets/SPECT+Heart}{SPECT}                  &Binary       &22&267&                                               \\
\href{https://archive.ics.uci.edu/ml/datasets/Tic-Tac-Toe+Endgame}{Tic-tac-toe}            &Binary       &27&958&                                               \\
\href{https://archive.ics.uci.edu/ml/datasets/Hayes-Roth}{Hayes-Roth}             &Binary       &12&160&Considered class 1 versus classes 2 and 3 \\
\href{https://archive.ics.uci.edu/ml/datasets/Nursery}{Nursery-1}              &Binary       &27&8586&Considered classes not\_recom and priority\\
\href{https://archive.ics.uci.edu/ml/datasets/Nursery}{Nursery-2}              &Binary       &27&8310&Considered classes priority and spec\_prior\\
\href{https://archive.ics.uci.edu/ml/datasets/mushroom}{Mushroom}               &Binary       &117&8124&                                               \\
\href{https://archive.ics.uci.edu/ml/datasets/Breast\%2BCancer}{Breast Cancer}          &Binary       &43&286&                                               \\
\href{https://archive.ics.uci.edu/ml/datasets/car+evaluation}{Car Evaluation}         &Binary       &21&1728&     Converted to one vs all problem: class 1  versus all others                                  \\                   
\href{https://archive.ics.uci.edu/ml/datasets/primary+tumor}{Primary Tumor}          &Binary       &31&336&Converted to binary classification by considering classes 1, 2, 3, 4, 22, 10 versus all others\\
\href{http://archive.ics.uci.edu/ml/datasets/mammographic+mass}{Mammographic Masses}    &Binary       &25&830&                                               \\
\href{https://archive.ics.uci.edu/ml/datasets/Website+Phishing}{Phishing}          &Binary       &23&1353&    Considered classes 0 and 1 versus class -1                                           \\
\href{http://archive.ics.uci.edu/ml/datasets/balance+scale}{Balance}          &Binary       &20&576&Considered classes L and R \\
\href{https://archive.ics.uci.edu/ml/datasets/wine}{Wine}                   &Real         &13&130&     Considered classes 0 and 1                                          \\
\href{https://archive.ics.uci.edu/ml/datasets/iris}{Iris}                   &Real         &4&100&   Considered classes versicolour and viginica                                             \\
\href{https://archive.ics.uci.edu/ml/datasets/Breast+Cancer+Wisconsin+(Diagnostic)}{Breast Cancer Wisconsin}&Real         &30&569&                                               \\
\href{https://archive.ics.uci.edu/ml/datasets/Breast+Cancer+Coimbra}{Breast Cancer Coimbra}  &Real         &9&116&                                               \\
\href{http://archive.ics.uci.edu/ml/datasets/Optical+Recognition+of+Handwritten+Digits}{Digits 0-4}             &Real         &64&363&        Classes 0 and 4 considered only                                                    \\
\href{http://archive.ics.uci.edu/ml/datasets/Optical+Recognition+of+Handwritten+Digits}{Digits 6-8}             &Real         &64&355&          Classes 6 and 8 considered only                                                  \\
\href{https://stats.idre.ucla.edu/stat/data/binary.csv}{Student}                &Real         &3 & 400&                                               \\
\href{https://archive.ics.uci.edu/ml/datasets/banknote+authentication}{Banknote}               &Real         &4&1372&                                               \\
\href{https://archive.ics.uci.edu/ml/datasets/Crowdsourced+Mapping}{Mapping}                &Real         &28&10545& Converted to one vs all problem: class forest versus all others                                  \\
\href{https://archive.ics.uci.edu/ml/datasets/Wireless+Indoor+Localization}{Wifi Localization}      &Real         &7&1000&   Considered classes that represent rooms 2 and 3                                            \\
\href{http://archive.ics.uci.edu/ml/datasets/vertebral+column}{Column 2C}                &Real         &6&310&                                               \\
\href{https://archive.ics.uci.edu/ml/datasets/default+of+credit+card+clients}{Credit Card}            &Real         &23&30000&                                               \\
\href{https://archive.ics.uci.edu/ml/datasets/Planning+Relax}{Planing Relax}     &Real         &12&182&                                               \\
\href{https://archive.ics.uci.edu/ml/datasets/Diabetic+Retinopathy+Debrecen+Data+Set}{Diabetic Retinopathy}   &Real         &19&1151&                                               \\
\href{https://archive.ics.uci.edu/ml/datasets/Haberman\%27s+Survival}{Survival}               &Real         &3&306&                                               \\
\href{https://archive.ics.uci.edu/ml/datasets/Skin+Segmentation}{Skin Segmentation}      &Real         &3&245057&                                               \\
\href{https://archive.ics.uci.edu/ml/datasets/HTRU2}{HTRU\_2}                &Real         &8&17898&                                               \\
\href{https://archive.ics.uci.edu/ml/datasets/MAGIC+Gamma+Telescope}{Magic}             &Real         &10&19020&                                               \\
\href{https://archive.ics.uci.edu/ml/datasets/seeds}{Seeds}                  &Real         &7&140&   Considered classes 1 and 2 \\
\href{https://archive.ics.uci.edu/ml/datasets/EEG+Eye+State}{Eye State}          &Real         &14&14980\\
\href{http://yann.lecun.com/exdb/mnist/}{MNIST 0-1}              &Real         &784&13738& Considered classes 0 and 1                                              \\
\href{http://yann.lecun.com/exdb/mnist/}{MNIST 4-9}              &Real         &784&12752&   Considered classes 4 and 9   		\\\hline		                        	
			\end{tabular}
		}
		
	\end{table*}		

Additionally, we performed experiments on twelve synthetic binary classification data sets. These data sets have two real features and represent different geometrical concepts for two-dimensional classification (e.g., large and small margins, concentric circles, half moons, etc.) as in Figure \ref{fig:app_syn_datasets}. 
Results and implications for synthetic data sets are consistent with those on the UCI data sets. 


	\begin{figure*}[t]
		\centering
		\begin{tabular}{cc}
			\includegraphics[width=0.22\textwidth]{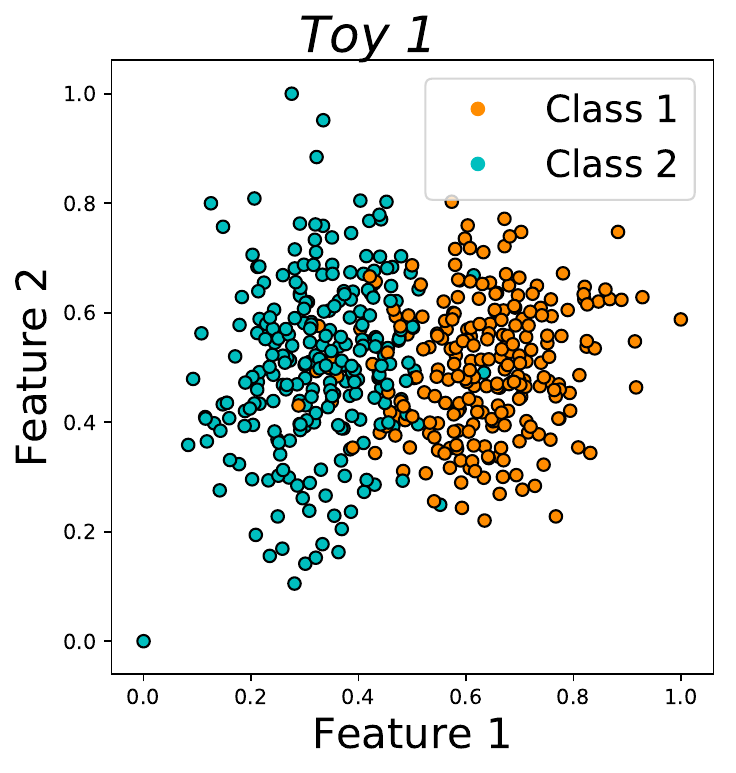}\quad	
			\includegraphics[width=0.22\textwidth]{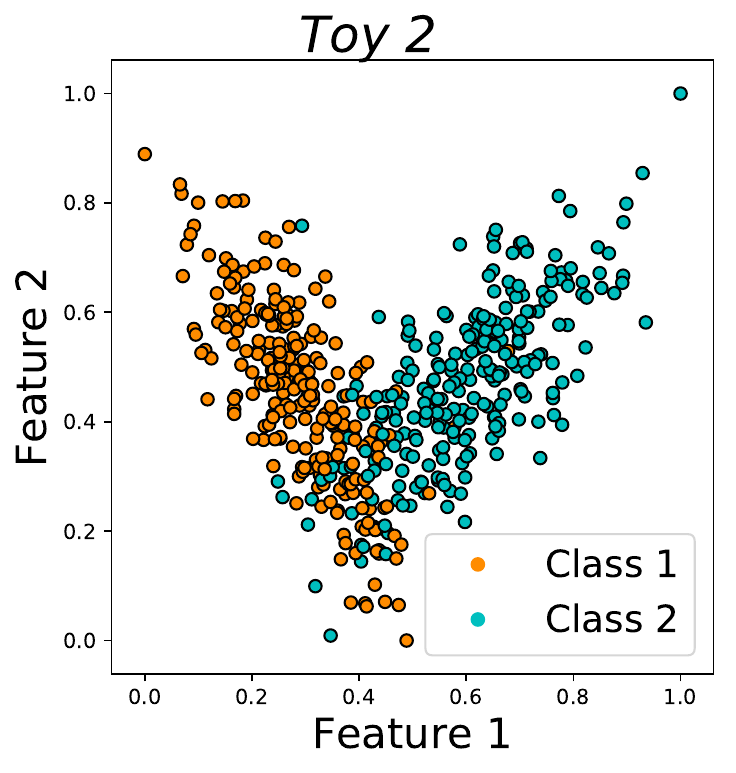}\quad
			\includegraphics[width=0.22\textwidth]{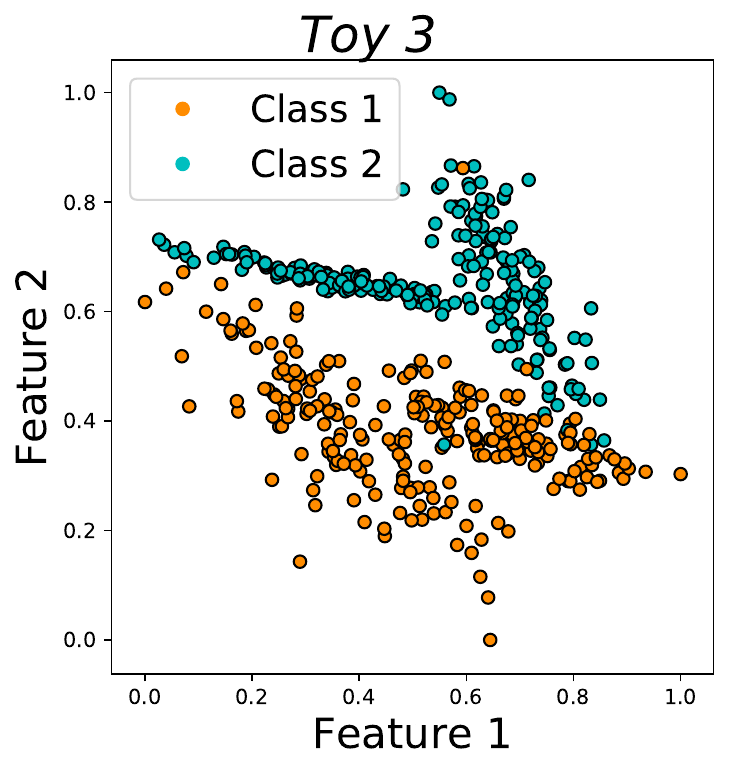}\quad
			\includegraphics[width=0.22\textwidth]{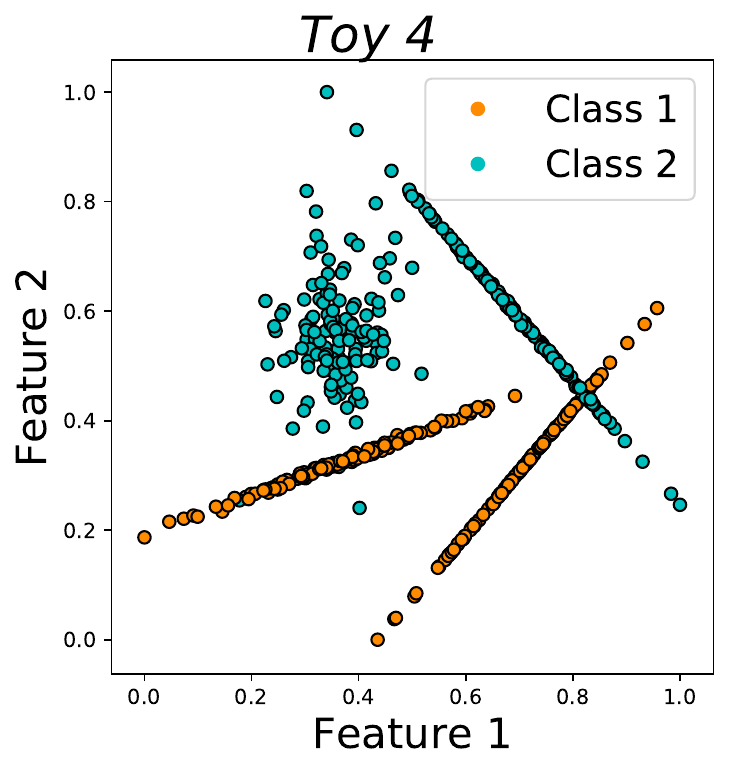}\\
			\includegraphics[width=0.22\textwidth]{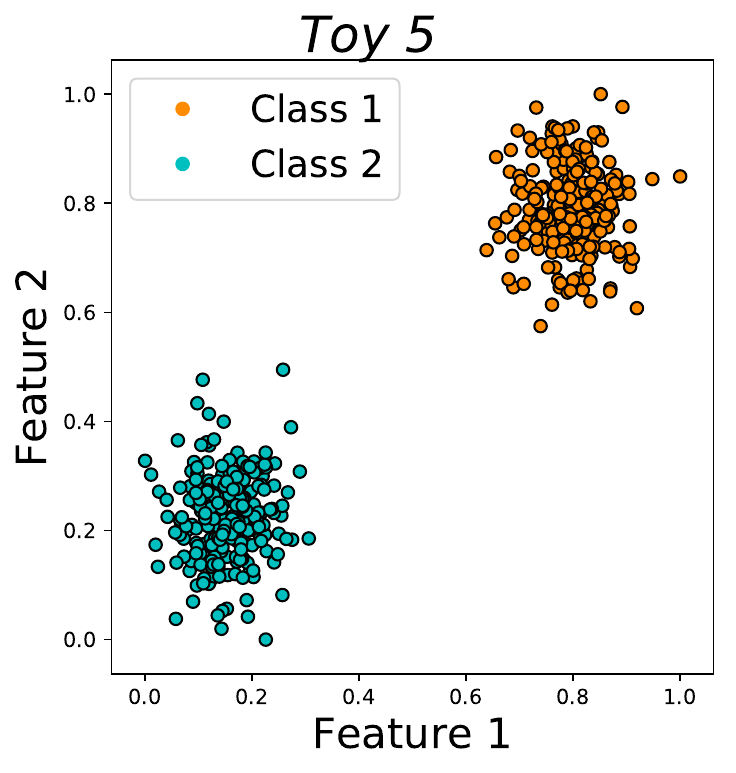}\quad
			\includegraphics[width=0.22\textwidth]{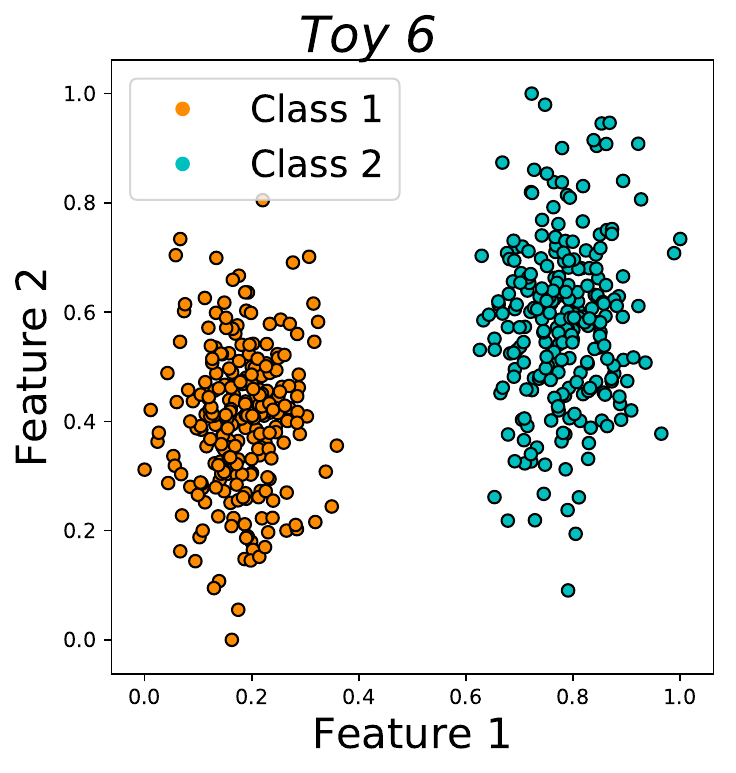}\quad
			\includegraphics[width=0.22\textwidth]{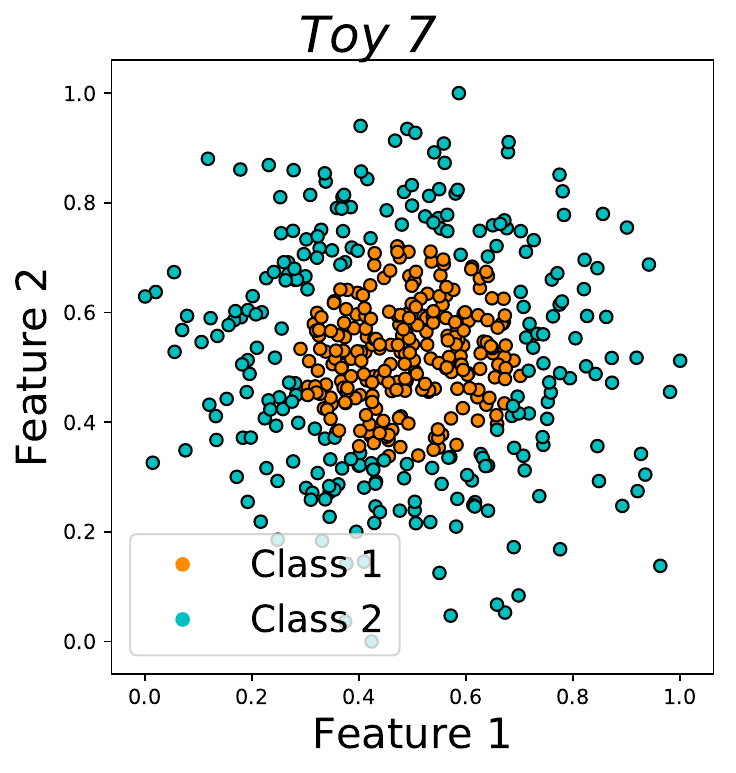}\quad
			\includegraphics[width=0.22\textwidth]{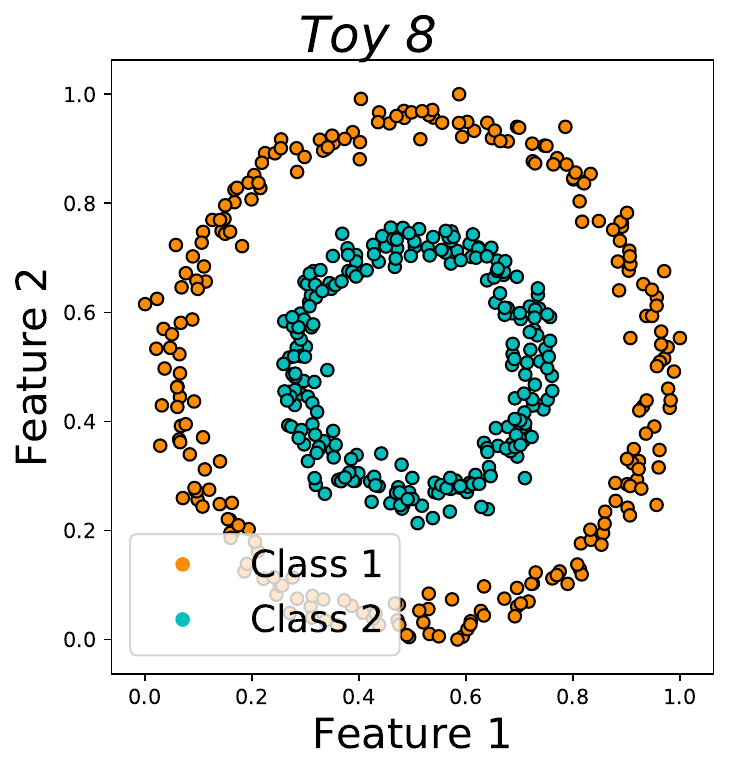}\\
			\includegraphics[width=0.22\textwidth]{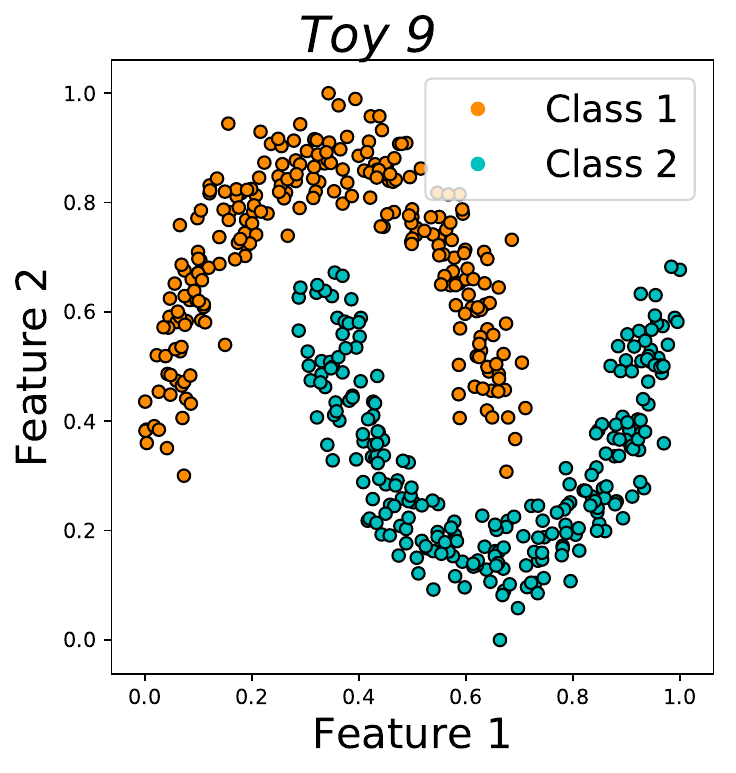}\quad	
			\includegraphics[width=0.22\textwidth]{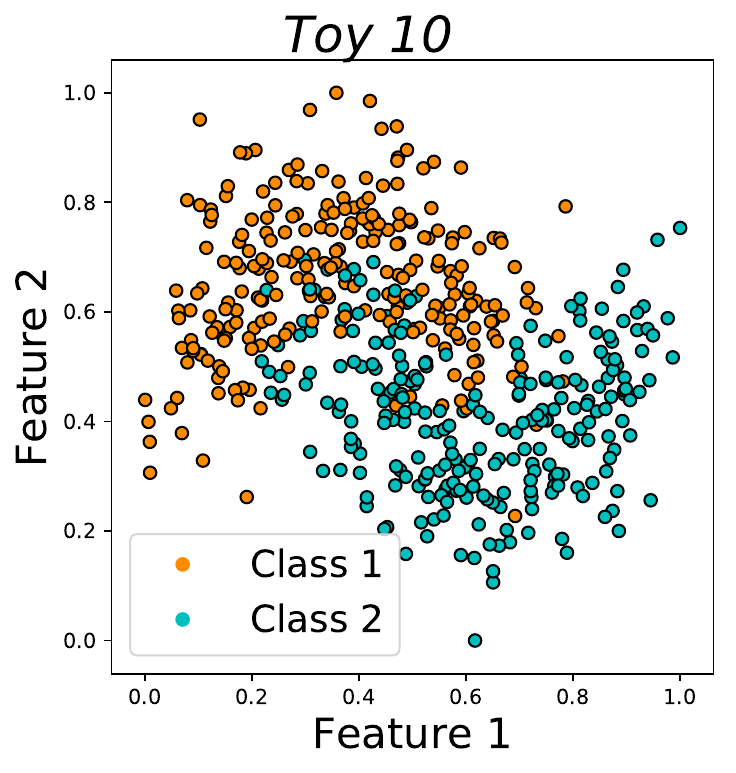}\quad
			\includegraphics[width=0.22\textwidth]{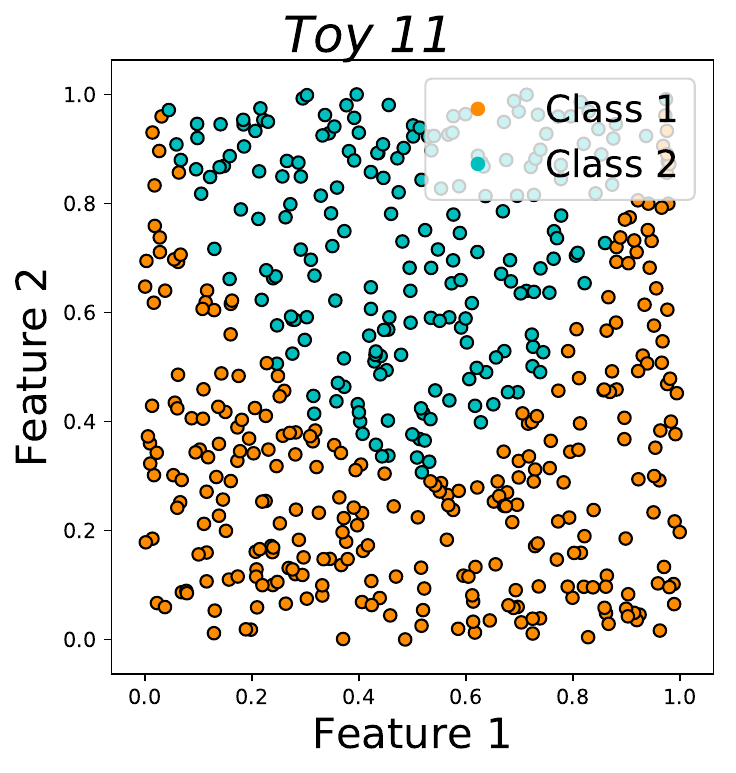}\quad
			\includegraphics[width=0.22\textwidth]{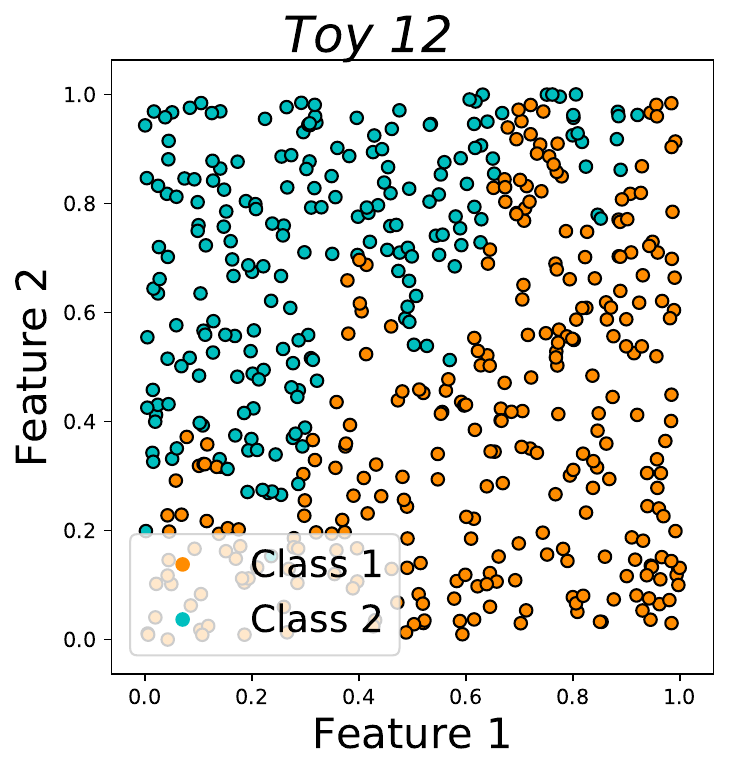}\\
			\\	
		\end{tabular}
		\caption{Synthetic two-dimensional data sets that we used in the experiments.}
		\label{fig:app_syn_datasets}
	\end{figure*}
	

				

\section{Performance of Different Machine Learning Algorithms and Rashomon Ratio}\label{appendix:perfomance_figures}

 Figure \ref{fig:bar_plots_1} and  Figure \ref{fig:bar_plots_2} show a performance comparison of different machine learning algorithms with regularization for the categorical and real-valued data sets. Data sets shown in  Figures \ref{fig:bar_plots_1} and \ref{fig:bar_plots_2}  are shown in decreasing order of the Rashomon ratio, from the highest in  Figure \ref{fig:bar_plots_1} to the Rashomon ratios that were so small that we were not able to measure them.

Regularization limits the hypothesis space and thus changes the nature of the Rashomon set's measurements. Each value of the regularization parameter corresponds to a soft constraint on the hypothesis space, which in turn can be realized as a hard constraint on this space. 
The Rashomon ratio in the regularized case will typically be larger or equal to the Rashomon ratio in the unregularized case. There are two reasons for this, explained below. 

First, regularization reduces the hypothesis space. Hypotheses that were available when learning without regularization may be excluded when learning with regularization. 
As a result, the size of the hypothesis space decreases, which increases the Rashomon ratio. 

Second, the empirical risk minimizer changes between the regularized and unregularized hypothesis sets, which means the criterion for falling into the Rashomon set changes as well. Recall that the Rashomon set is defined based on the best performing model on the training set.
The regularized hypothesis space is less likely to contain overfitted models than the unregularized space. This means the regularized hypothesis space's empirical risk minimizer typically has higher empirical risk than that of the unregularized hypothesis space. Then, if the Rashomon parameter $\theta$ is fixed when comparing the two hypothesis spaces, there may be more models in the Rashomon set for the regularized case. Thus, in the regularized case, the size of the Rashomon set would be larger, and, therefore, the Rashomon ratio would be larger too.


Figure \ref{fig:bar_plots_nr_1} and  Figure \ref{fig:bar_plots_nr_2} show a comparison of the performance of different machine learning algorithms without regularization for the categorical and real-valued data sets. 
 Figure~\ref{fig:bar_plots_3} and  Figure \ref{fig:bar_plots_nr_3} shows a performance comparison of different machine learning algorithms for the synthetic data sets with and without regularization.

As we mentioned before, we estimate the Rashomon ratio with importance sampling. For the proposal distribution, we generate a tree of depth $D$ by randomly splitting on features. We assign labels to all $2^{D}$ leaves using the training data. If a leaf contains no training points, it acquires its label from the nearest ancestor that any training data pass through. 
The probability of sampling any tree from the proposal distribution is $p_p = p_f \times \prod_{i=1}^{2^D} 1$, where $p_f$ is the probability of randomly sampling all of the features that comprise the splits of the tree. Our target distribution is a randomly sampled decision tree (both features and leaves) of depth $D$. Therefore, the probability of sampling a given tree from the target distribution is $p_t = p_f \times \prod_{i=1}^{2^D} \frac{1}{2}$, since we have two classification classes, where, as before, $p_f$ is the probability of randomly sampling all features used within splits of the tree. Thus, for one tree of depth seven, its importance weight will be $\frac{p_t}{p_p} = \left(\frac{1}{2}\right)^{2^{7}} \approx 3 \times 10^{-39}$. The importance weight clearly dictates the order of magnitude of the Rashomon ratio in our experiments. The smallest possible non-zero Rashomon ratio ($\approx 1.175 \times 10^{-42}\%$) arises when we sample one model that is in the Rashomon set among 250,000 total models that were sampled. Therefore, we consider the Rashomon ratios of order $10^{-37}\%$ and $10^{-38}\%$ to be large, and Rashomon ratios of order $10^{-40}\%$, $10^{-41}\%$, etc$.$, to be small. Note that if there are more than two classes in the data set, the importance weight will be even smaller, as the probability of sampling a random tree from a target distribution decreases. Thus, trees built on a data set with three classes will have lower probability than trees built on a data set with two classes. That is why we considered binary classification only and modified data as described in Table \ref{table:datasets}, as it is essential for us to compare ratios over different data sets.

If we choose another importance sampling method (for example with data assignment for only half of the leaves or with the guidance of both features and leaves) the Rashomon ratio may have different importance weights and therefore might have a different estimated size as well. This issue would be resolved if we sample a huge number of trees, which is hard to do in practice. Therefore, since our goal is to compare the Rashomon ratios across data sets and feature spaces, we use a consistent method of leaf-based importance sampling across all data sets and sample a manageable number of trees (250,000 in our case).

\section{Large Rashomon Ratios May Appear Artificially Small} \label{sec:sm_rsetmeasure}
Even when the Rashomon ratio is a good driver of generalization performance, it may appear artificially small because of a poor representation of data or poor choice of hypothesis space. For instance, if the features are highly correlated, this artificially deflates the size of the Rashomon ratio as discussed in Appendix \ref{appendix:ratios_and_features}. 
	Moreover, if the hypothesis space is poorly designed to include an overly large number of models, then the Rashomon ratio may appear artificially small.
	The issues with measuring the Rashomon ratio may be a possible explanation for some of the results in Figure \ref{fig:exp_bar}(b), which includes some data sets with high-performing algorithms, yet (by the way we measured it) a small Rashomon ratio. In any case, small Rashomon ratios are not our main interest; here we are interested in what we would observe under large Rashomon ratios. 

\section{Quality of the Features}\label{appendix:ratios_and_features}

		\begin{figure*}[t]
		\centering
		\begin{tabular}{ccc}
		\begin{subfigure}[b]{0.3\textwidth}
		\includegraphics[width=\textwidth]{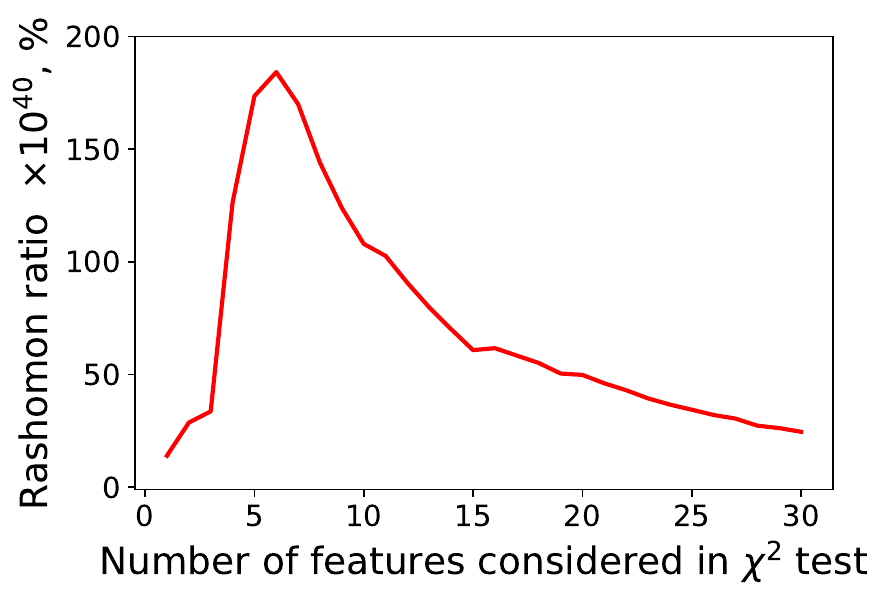}\quad	
		\caption{Reducing features}
		\end{subfigure}
		\begin{subfigure}[b]{0.3\textwidth}
		\includegraphics[width=\textwidth]{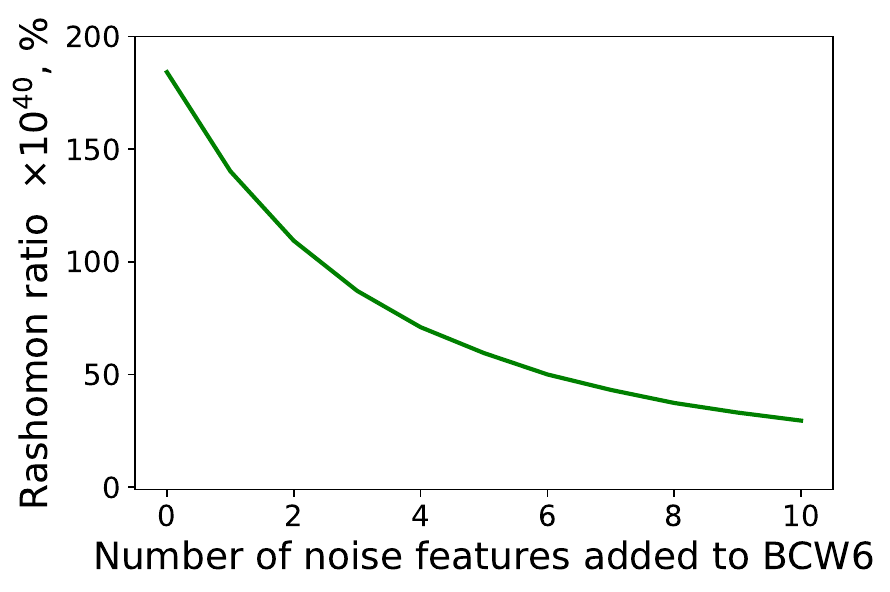}\quad	
		\caption{Adding noise}
		\end{subfigure}
		\begin{subfigure}[b]{0.3\textwidth}
		\includegraphics[width=\textwidth]{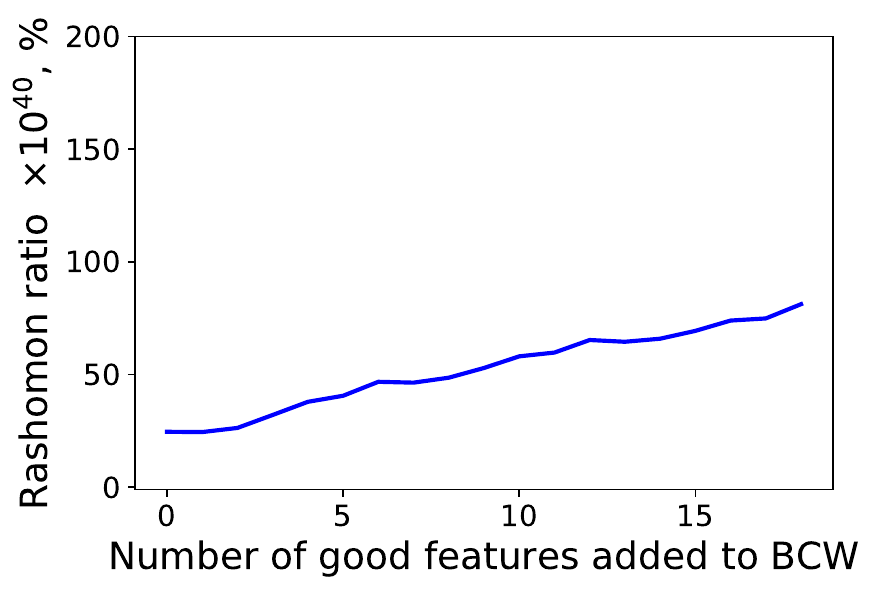}\quad	
		\caption{Adding good features}
		\end{subfigure}
		\end{tabular}
		\caption{An illustration of the influence of feature quality on the Rashomon ratio for the Breast Cancer Wisconsin data set (BCW). (a) shows the Rashomon ratio for the data set with different numbers of significant features according to a $\chi^2$ test. Denote the BCW with the six most significant features as BCW6. (b) depicts the correspondence between the Rashomon ratio and different numbers of noisy features added to the BCW6 data set. The  noise features are sampled from normal distribution $\mathcal{N}(0,1)$ and then standardized to be in a hypercube of volume one. (c) shows the change in the Rashomon ratio as we add more redundant features to the BCW data set. We iteratively add one out of six features from the BCW6 data set at a time. Rashomon ratios in (a)--(c) are averaged over ten folds. The Rashomon parameter $\theta$ is set to 0.05. Rashomon ratios are computed with respect to the best sampled model across all variations of the data set.\label{fig:experiments_features_bcw}}
	\end{figure*}

In our experiments, we observed a connection between the quality of the features and Rashomon ratios. The Rashomon ratio
in its simplest form, under uniform prior on the hypothesis space, is the 
fraction of models that are inside the Rashomon set compared to the models in the hypothesis space. When a data set is augmented with additional features, the size of the hypothesis space grows. If the added features are completely irrelevant (consisting, for instance, of noise) then adding these features increases the size of the hypothesis space but does not increase the size of the Rashomon set. Thus, we might predict that the Rashomon ratio could decrease as irrelevant features are added to a data set. 

Additionally, if we augment a data set with features that are highly correlated or identical to features that improve performance, then not only is the size of the hypothesis space increased, but also the size of the Rashomon set is likely to increase, as there exist more relevant models (even if the set becomes redundant with models that predict equivalently). Thus, we might predict that the Rashomon ratio increases as we add copies of relevant features.

In general, these two examples of irrelevant and redundant features are corner cases, however, they do occur to a lesser degree in real world data sets, and we are interested in whether these cases have potentially influenced our experimental results in Section \ref{section:experiments} in our observed Rashomon ratios. To investigate this, we augmented a data set with noise features, and separately, augmented the same data set with copies of useful features to see whether irrelevant or correlated features may have influenced our findings on the measurement of the Rashomon ratio. We used the Breast Cancer Wisconsin (Diagnostic) data set (shortly, BCW), which has approximately six important features. The results are shown in Figure \ref{fig:experiments_features_bcw}. As before, our hypothesis space is decision trees of depth seven.

\paragraph{Irrelevant features} If the data set contains a lot of irrelevant or noisy features, we expect the Rashomon set to be relatively small compared to the hypothesis space. Figure  \ref{fig:experiments_features_bcw}(a) shows how the Rashomon ratio changes as we iteratively decrease the number of features in the Wine data set, eliminating the least relevant features first, leaving the most significant ones (where relevance is determined according to a $\chi^2$ test with the label). The Rashomon ratio grows as we first remove non-significant features, and after reaching a peak at around six features, it starts to decrease as we remove relevant features, and as models lose accuracy. Similarly, Figure \ref{fig:experiments_features_bcw}(b) shows the influence of noisy features on the Rashomon ratio. Particularly, as we add more noisy irrelevant features, the Rashomon ratio starts to decrease. This is due to the same fact, that we artificially enlarge the hypothesis space while keeping the Rashomon set approximately the same. The noise features do not help improve the empirical risk, they only increase the size of the reasonable set.

\paragraph{Redundant features} As a contrast to how we increased the hypothesis space in the previous experiment, we can increase the Rashomon set by adding more redundant, good features. Figure \ref{fig:experiments_features_bcw}(c) shows how the Rashomon ratio changes for the BCW data set as we add more copies of the six most significant features. We observe that the Rashomon ratio increases. By adding copies of relevant features, we increased the number of trees at a given depth that could be good enough to be in the Rashomon set.

Our findings show a possible connection between the Rashomon ratio and feature analysis. In particular, in the case where different algorithms perform similarly, but the Rashomon ratio is observed to be small, it could be due to the reason that the data set contains noisy or irrelevant features. In that case, it may be possible to iteratively remove features to find those that produce the largest Rashomon ratio without changes to the empirical risk. The other extreme is less likely to be observed in practice, which is when the Rashomon ratio is extremely large due to redundant features. In that case, one could remove redundant (highly correlated) features before measuring the Rashomon ratio. The data sets with smaller numbers of features induce easier learning/optimization problems in general. As we discussed earlier, the Rashomon ratio would generally not be measured in practice, and would be inferred in other ways. Thus, these results mainly pertain to an understanding of the experiments we did in Section \ref{section:experiments} to provide a possible explanation for cases of small observed Rashomon ratios but where all methods perform the same and all functions generalize.

\section{Proof of Theorem \ref{th:noise_labels}}

\begingroup
\def\thetheorem{\ref{th:noise_labels}}
\begin{theorem}[Expected size of the true Rashomon set cannot decrease under random classification noise]
    \ThNoiseLabels
\end{theorem}
\addtocounter{theorem}{-1}
\endgroup

    \begin{proof}
   We are given that the probability with which each label is flipped is $\rho$, $P(\tilde{y}_i\neq y_i) = \rho$, where $\tilde{y}_i$ is a flipped label of $y_i$. Recall that the true risk $L_{\mathcal{D}}(f) = \E_{(x,y)\sim \mathcal{D}}[\phi(f(x), y)] = \E_{(x,y)\sim \mathcal{D}}[\mathbb{1}_{[f(x)\neq y]}]$, 
   and $f^*$ is an optimal function, meaning that $f^* \in \textrm{\rm argmin}_{f\in \F} L_{\mathcal{D}}(f)$. 
   Given the noisy distribution $\mathcal{D}_{\rho}$, denote $f_{\rho}^*\in \textrm{\rm argmin}_{f\in \F} L_{\mathcal{D}_{\rho}}(f)$.
    For any model from the true Rashomon set $f \in R_{{set}_{\mathcal{D}}}(\F,\gamma) = \{f: \mbox{ }L_{\mathcal{D}}(f) \leq L_{\mathcal{D}}(f^*) + \gamma\}$, we have:
\begin{equation*}
    \begin{split}
        L_{\mathcal{D}_{\rho}}(f) &= \E_{(x,\tilde{y})\sim \mathcal{D}}\left[\mathbb{1}_{[f(x)\neq \tilde{y}]}\right] \\
        &= \E_{(x,\tilde{y})\sim \mathcal{D}}\left[\mathbb{1}_{[f(x)\neq \tilde{y}]} | \tilde{y} = y\right]P(\tilde{y}=y) \\
        &+ \E_{(x,\tilde{y})\sim \mathcal{D}}\left[\mathbb{1}_{[f(x)\neq \tilde{y}]} | \tilde{y} \neq y\right]P(\tilde{y}\neq y)\\
        & = \E_{(x,y)\sim \mathcal{D}}\left[\mathbb{1}_{[f(x)\neq y]}\right]P(\tilde{y}= y) \\
        &+ \E_{(x,y)\sim \mathcal{D}}\left[\mathbb{1}_{[f(x)= y]}\right]P(\tilde{y}\neq y)\\
        & = \E_{(x,y)\sim \mathcal{D}}\left[\mathbb{1}_{[f(x)\neq y]}\right] \left( 1- P(\tilde{y}\neq y)\right) \\
        &+ \left( 1- \E_{(x,y)\sim \mathcal{D}}\left[\mathbb{1}_{[f(x)\neq y]}\right]\right)P(\tilde{y}\neq y)\\
        & = \E_{(x,y)\sim \mathcal{D}}\left[\mathbb{1}_{[f(x)\neq y]}\right] \left( 1- 2 P(\tilde{y}\neq y)\right) + P(\tilde{y}\neq y)\\
        &= \E_{(x,y)\sim \mathcal{D}}\left[\mathbb{1}_{[f(x)\neq y]}\right] \left( 1- 2 \rho\right) + \rho \\ 
        & =L_{\mathcal{D}}(f)(1 - 2\rho) + \rho.
    \end{split}
\end{equation*}

Since $L_{\mathcal{D}}(f^*) \leq L_{\mathcal{D}}(f_{\rho}^*)$ as $f^*$ is an optimal model over $\mathcal{D}$, $0 < \rho < 0.5$ by assumption, and, again, $f$ is in the true Rashomon set $R_{{set}_{\mathcal{D}}}(\F,\gamma)$, we have:
\begin{equation}
    \begin{split}
        L_{\mathcal{D}_{\rho}}(f)  - L_{\mathcal{D}_{\rho}}(f_{\rho}^*) & = L_{\mathcal{D}}(f)(1 - 2\rho) + \rho - L_{\mathcal{D}}(f_{\rho}^*)(1 - 2\rho) - \rho \\
        & =(L_{\mathcal{D}}(f)  - L_{\mathcal{D}}(f_{\rho}^*) ) (1 - 2\rho)  \\
        &\leq (L_{\mathcal{D}}(f)  - L_{\mathcal{D}}(f^*) ) (1 - 2\rho) \\
        & \leq \gamma (1 - 2\rho) \leq \gamma.
    \end{split}
\end{equation}
Therefore, $f$ is in the true Rashomon set $R_{{set}_{\mathcal{D}_{\rho}}}(\F,\gamma)$. As this calculation holds for every model $f$ from the true Rashomon set $R_{{set}_{\mathcal{D}}}(\F,\gamma)$, then in expectation  $R_{{set}_{\mathcal{D}}}(\F,\gamma)\subseteq R_{{set}_{\mathcal{D}_{\rho}}}(\F,\gamma)$.


This proof uses similar technique as Theorem 1 in the work of \citep{manwani2013noise}, where the purpose is to show noise tolerance of the optimal model.
\end{proof}

\section{Evidence for Conjecture \ref{th:noise_gaussian}}\label{appendix:gaus_proof}

While there is not yet a proof for the conjecture, there is substantial evidence, which we present in this section. Please see Figure \ref{fig:gaussetup} for illustration and details that will help with the intuition for these conjectures.

\begin{figure*}[t]
	\centering
		\includegraphics[width=0.5\textwidth]{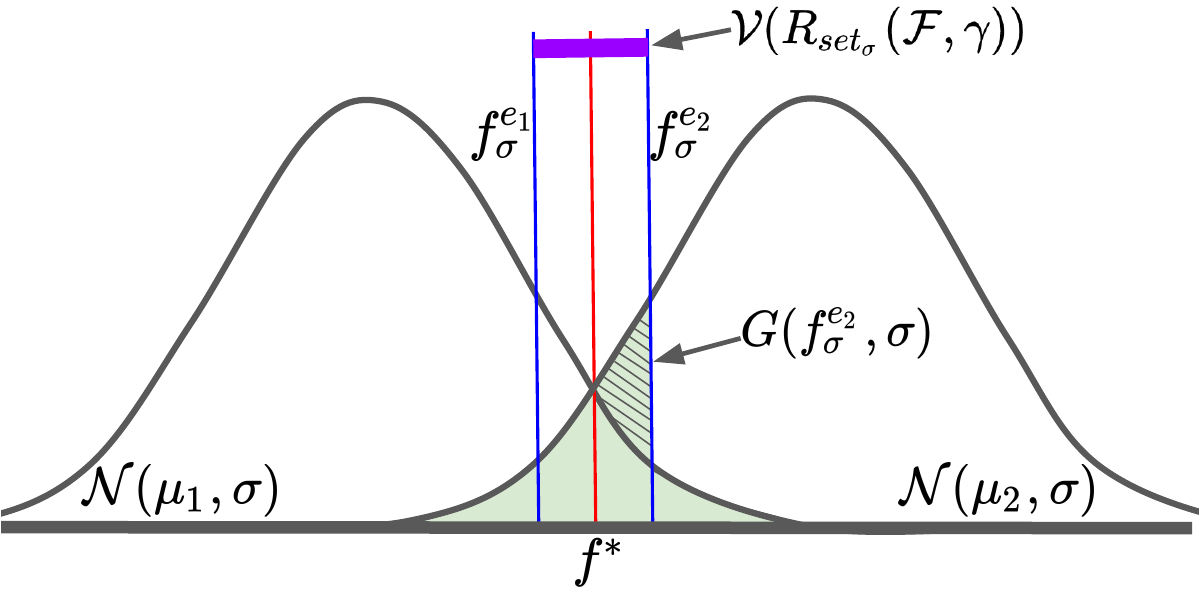}
		\caption{The setup for Conjecture  \ref{th:noise_gaussian}. We show two Gaussians $\mathcal{N}(\mu_1, \sigma)$ and $\mathcal{N}(\mu2, \sigma)$, with optimal model $f^* = \frac{\mu_1 + \mu_2}{2}$ (shown in \textcolor{red}{red}). Models $f_{\sigma}^{e_1}$ and $f_{\sigma}^{e_2}$ (shown in \textcolor{blue}{blue}) correspond to the left and right edges of the Rashomon set (shown in \textcolor{purple}{purple}). The loss of model $f_{\sigma}^{e_2}$ is computed as the sum of two Gaussian tails and is equal to the area of the \textcolor{MyLightGreen}{green} region. The objective $G(f_{\sigma}^{e_2},\sigma)=L(f_{\sigma}^{e_2})-L(f^*)$ corresponds to the area of the shaded region to the left of function $f_{\sigma}^{e_2}$.}
		\label{fig:gaussetup}
	\end{figure*}

\begingroup
\def\theconjecture{\ref{th:noise_gaussian}}
\begin{conjecture}[The Rashomon set can increase with feature noise]

Consider data distribution $\mathcal{D} = \mathcal{X}\times \mathcal{Y}$, where, $\mathcal{X}\in\R$, $\mathcal{Y}\in\{-1,1\}$, and classes are balanced $P(Y=-1)=P(Y=1)$ and generated by Gaussian distributions $P(X|Y=-1)=\mathcal{N}(\mu_1,\sigma^2)$,  $P(X|Y=1)=\mathcal{N}(\mu_2,\sigma^2)$, where $0 \leq \mu_1 <\mu_2$. For the hypothesis space $\F = \{f: f\in (\beta_1, \beta_2) \}$, where $ (\mu_1, \mu_2) \subset (\beta_1, \beta_2)$,  $\beta_1 \ll \mu_1$, and $\mu_2 \ll \beta_2$, 
    and the Rashomon parameter $\gamma > 0$:
    \begin{enumerate}
        \item[(I)] The volume of the Rashomon set is $\V(R_{{set}_{\sigma}}(\F, \gamma)) = |f^{e_1}_\sigma- f^{e_2}_\sigma|$, where $f^{e_1}_\sigma$ and $f^{e_2}_\sigma$ are the two solutions to Eqn$.$ \eqref{eq:gaus_obg}, where $\Phi$ is the CDF of the standard normal:
    \begin{equation}
        2\Phi \left(\frac{\mu_2-\mu_1}{2\sigma}\right) - \Phi \left(\frac{\mu_2-f}{\sigma}\right) - \Phi \left(\frac{f-\mu_1}{\sigma}\right) = \gamma.\tag{\ref{eq:gaus_obg}}
    \end{equation}
    \item[(II)] We conjecture that for $\F = \{f: f\in (\mu_1, \mu_2) \}$, as we add feature noise to the data set by increasing the standard deviation $\sigma$, for all $\sigma$ such that $\sigma > \tilde{\sigma} = \frac{\mu_2-\mu_1}{2\sqrt{2}}$, the volume of the Rashomon set 
    increases as a function of $\sigma$.
    \item[(III)] Consider the setting where $\sigma = 1$ for both Gaussians, and we add or remove noise by moving the means $\mu_1$ and $\mu_2$ of the Gaussians towards or away from each other. For any $\gamma>0$, the volume of the Rashomon set is minimized when $\mu_2\approx \mu_1+2$. Moving the Gaussians either away from or towards each other increases the volume of the Rashomon set.
    \end{enumerate}
\end{conjecture}
\addtocounter{theorem}{-1}
\endgroup

    \begin{proof}
     Without loss of generality, we will assume that $\mu_1 = 0$ and $\mu_2 = \mu>0$. To get results for the original values $\mu_1$ and $\mu_2$, we can simply add $\mu_1$ to $\mu$ and $f$. 
     
     Let us show the first point of the conjecture and show how to compute the volume of the Rashomon set.
     
     \paragraph{Evidence for Part (I)}  
     Denote $\phi_1$ and $\phi_2$ as probability density functions (PDF) and $\Phi_1$ and $\Phi_2$ as cumulative distribution functions (CDF) of classes $Y=-1$ and $Y=1$ correspondingly. For a given model $f \in \mathcal{F}$, the loss can be computed as the sum of areas under the PDF corresponding to misclassification errors:
    \begin{equation}\label{eq:noise_loss}
        \begin{split}
            L(f)& = P(X > f | Y = -1) + P(X \leq f | Y = 1)\\
            &= \int_{f}^{\infty}\phi_1(t) d t + \int_{-\infty}^{f}\phi_2(t) d t = 1 - \Phi_1(f) + \Phi_2(f)\\
            &= 1  - \Phi\left(\frac{f-0}{\sigma}\right) + \Phi\left(\frac{f-\mu}{\sigma}\right)\\ 
            &=2  - \Phi\left(\frac{f}{\sigma}\right) - \Phi\left(\frac{\mu-f}{\sigma}\right),
        \end{split}
    \end{equation}
where $\Phi$ is the CDF of the normal distribution $\mathcal{N}(0,1)$.

The optimal model, $f^*$, can be obtained when $P(X|Y=-1) = P(X|Y=1)$, meaning that $\frac{1}{\sigma \sqrt{2\pi}} \exp\left(\frac{-(x-0)^2}{2\sigma^2}\right)=\frac{1}{\sigma \sqrt{2\pi}} \exp\left(\frac{-(x-\mu)^2}{2\sigma^2}\right)$, which leads to $f^*= \frac{\mu}{2}$. The loss of the optimal model is:
\[L(f^*) = 2 - 2\Phi\left(\frac{\mu}{2\sigma}\right).\]

Denote $G(f, \sigma)$ as the difference in loss between a model $f$ from the Rashomon set and optimal model $f^*$, $$G(f, \sigma) := L(f)-L(f^*) = 2\Phi \left(\frac{\mu}{2\sigma}\right) - \Phi \left(\frac{\mu-f}{\sigma}\right) - \Phi \left(\frac{f}{\sigma}\right).$$ 

To find a model on the edge of the true Rashomon set, we set $G(f, \sigma) = \gamma$ and obtain Equation \eqref{eq:gaus_obg}.
As the problem is symmetric with respect to $f^*$ and $\gamma > 0$, solutions $f^{e_1}_\sigma$ and $f^{e_2}_\sigma$ to Equation \eqref{eq:gaus_obg} correspond to the left and right edges of the true Rashomon set. Thus, for a given $\gamma$ and $\sigma$, we can estimate the volume of the Rashomon set as  $\mathcal{V}(R_{{set}_{\sigma}}(\F, \gamma)) = |f^{e_1}_\sigma- f^{e_2}_\sigma|$. 

Now we will show evidence for the second point of the conjecture that the volume of the Rashomon set increases when $\sigma$ increases.

\paragraph{Evidence for Part (II)}
Let $\tilde{\sigma} = \frac{\mu}{2\sqrt{2}}$. Let's focus on the right edge of the true Rashomon set where $f\in (f^*, \mu)$. We will show that:
\begin{enumerate}
    \item[(i)] For any fixed $\sigma$, and for $f \in (f^*,\mu)$, $G(f, \sigma)$ is monotonically increasing in $f$.
    \item[(ii)] For any fixed $f \in (f^*,\mu)$ and $\sigma > \tilde{\sigma}$, $G(f, \sigma)$ is monotonically decreasing in $\sigma$.
    \item[(iii)] If (i) and (ii) hold, then for any $\sigma_1$ and $\sigma_2$ such that $\tilde{\sigma} < \sigma_1 < \sigma_2$, we have that $\mathcal{V}(R_{{set}_{\sigma_1}}(\F, \gamma)) < \mathcal{V}(R_{{set}_{\sigma_2}}(\F, \gamma))$.
\end{enumerate}
 We will prove (i) and (iii), and conjecture that (ii) is true, based on numerical experiments.
 
 Let's focus on (iii) first. Consider $\sigma_1$ and $\sigma_2$ such that $\tilde{\sigma} <\sigma_1 < \sigma_2$. Let $f^{e}_{\sigma_1}, f^{e}_{\sigma_2} \in (f^*,\mu)$ be the right edge models of the corresponding true Rashomon sets $G(f^{e}_{\sigma_1}, \sigma_1) = \gamma$ and $G(f^{e}_{\sigma_2}, \sigma_2) = \gamma$. 
 The volume of the Rashomon set is $\mathcal{V}(R_{{set}_{\sigma}}(\F, \gamma)) =|f^{e_1}_\sigma- f^{e_2}_\sigma| = 2|f^{e_2}_\sigma - f^*| = 2|f^{e}_\sigma- f^*|$.
 Using monotonicity of $G(f, \sigma)$ with respect to $\sigma$ given fixed $f$ (the result of part (ii)), we get that:
\[G(f^{e}_{\sigma_1}, \sigma_2) < G(f^{e}_{\sigma_1}, \sigma_1) = \gamma = G(f^{e}_{\sigma_2}, \sigma_2),\]
which means that $f^{e}_{\sigma_1} < f^{e}_{\sigma_2}$ due to monotonicity of $G(f, \sigma)$ with respect to $f$ given fixed $\sigma$ (the result of part (i)). Therefore, as we increase the standard deviation from $\sigma_1$ to $\sigma_2$, the Rashomon set increases as well:
\[\mathcal{V}(R_{{set}_{\sigma_1}}(\F, \gamma)) = 2|f^{e}_{\sigma_1}- f^*| <  2|f^{e}_{\sigma_2}- f^*| = \mathcal{V}(R_{{set}_{\sigma_2}}(\F, \gamma)).\]
Thus, we have proved part (iii).

Now, let us prove (i). Given fixed $\sigma$, the derivative of the objective $G(f, \sigma)$ with respect to $f$ is:
\begin{equation*}
\begin{split}
    G_f'(f, \sigma) &= \frac{1}{\sigma}\Phi_f'\left(\frac{\mu-f}{\sigma}\right) - \frac{1}{\sigma}\Phi_f'\left(\frac{f}{\sigma}\right) \\
    &= \frac{1}{ \sqrt{2\pi}\sigma}\left( e^{-\frac{(\mu-f)^2}{2\sigma^2}} - e^{-\frac{(f)^2}{2\sigma^2}}\right) > 0,
\end{split}
\end{equation*}
since $\mu-f < f$ since $f > f^* = \frac{\mu}{2}$. Since the derivative $G_f'(f, \sigma)>0$, the objective $G(f, \sigma)$ monotonically increases in $f$.

Finally, let's show (ii). Given fixed $f \in (f^*,\mu)$ and $\sigma > \tilde{\sigma}$, the  derivative of the objective with respect to $\sigma$ is:
\begin{equation*}
\begin{split}
    G_{\sigma}'(f, \sigma) &= \frac{-2(\mu)}{2\sqrt{2\pi}\sigma^2}\Phi_{\sigma}' \left(\frac{\mu}{2\sigma}\right) +\frac{\mu-f}{\sqrt{2\pi}\sigma^2}\Phi_{\sigma}' \left(\frac{\mu-f}{\sigma}\right) +\frac{f}{\sqrt{2\pi}\sigma^2}\Phi_{\sigma}' \left(\frac{f}{\sigma}\right)\\
    &= \frac{1}{ \sqrt{2\pi}\sigma^2} \left[ - \mu e^{-\frac{\mu^2}{8\sigma^2}} + (\mu-f)e^{-\frac{(\mu-f)^2}{2\sigma^2}}+f e^{-\frac{f^2}{2\sigma^2}}\right]\\
    &= \frac{f}{ \sqrt{2\pi}\sigma^2} \left[ - \frac{\mu}{f} \left(e^{\frac{f^2}{2\sigma^2}}\right)^{\frac{-(\mu/f)^2}{4}} + \left(\frac{\mu}{f}-1\right)\left(e^{\frac{f^2}{2\sigma^2}}\right)^{-((\mu/f)-1)^2}\right.\\
    &\left.+ \left(e^{\frac{f^2}{2\sigma^2}}\right)^{-1}\right].
\end{split}
\end{equation*}

Denote $u = \frac{\mu}{f}$. Since, $\frac{\mu}{2} < f < \mu$, then $u\in(1,2)$. Denote $a = e^{\frac{f^2}{2\sigma^2}}$, then $\sigma^2 = \frac{f^2}{2 \log(a)}$. Note that $a > 1$, and since $\sigma > \tilde{\sigma} = \frac{\mu}{2\sqrt{2}}$, $a < e^{\frac{4f^2}{\mu^2}} = e^{\frac{4}{u^2}} = s(u)$. Then $G_{\sigma}'(f, \sigma)$ can be expressed in terms of parameter $u$ and variable $a$:

\begin{equation*}
\begin{split}
    G_{\sigma}'(u, a) &= \frac{2\log(a)}{ \sqrt{2\pi}f} \left[ - u a^{\frac{-u^2}{4}} + \left(u-1\right)a^{-(u-1)^2}+ a^{-1}\right] \\
    &= \frac{2\log(a)}{ \sqrt{2\pi}f} D(u,a).
\end{split}
\end{equation*}

As $a > 1$ and $f >\frac{\mu}{2} > 0$, $\frac{2\log(a)}{ \sqrt{2\pi}f}$ > 0. To show that $D(u,a) < 0$ for any $u\in(1,2)$ and any $a\in(1, s(u))$, we perform exhaustive numerical calculations spanning the possible values of $u\in(1,2)$ and $a\in(1, s(u))$ in Figure \ref{fig:derivative}. Indeed, $D(u,a) = 0$ when $u = 2$ or $a = 1$, and for all other values of $u$ and $a$, $D(u,a) < 0$. Therefore, $G_{\sigma}'(f, \sigma) = G_{\sigma}'(u, a) < 0$. Since the derivative is negative, when $\sigma > \tilde{\sigma}$ the objective $G(f, \sigma)$ monotonically decreases in $\sigma$, which concludes our evidence for (ii), and thus part (II), of the conjecture.

	\begin{figure*}[t]
	\centering
		\includegraphics[width=0.4\textwidth]{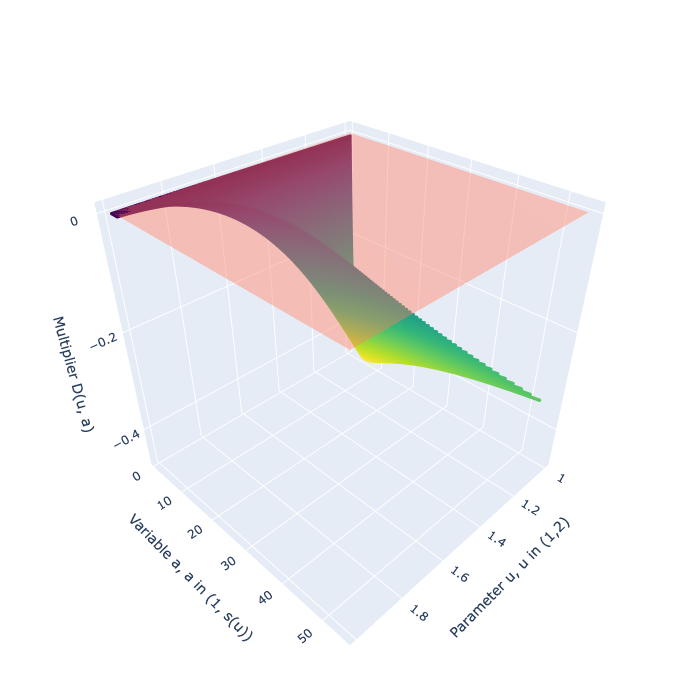}
		\includegraphics[width=0.4\textwidth]{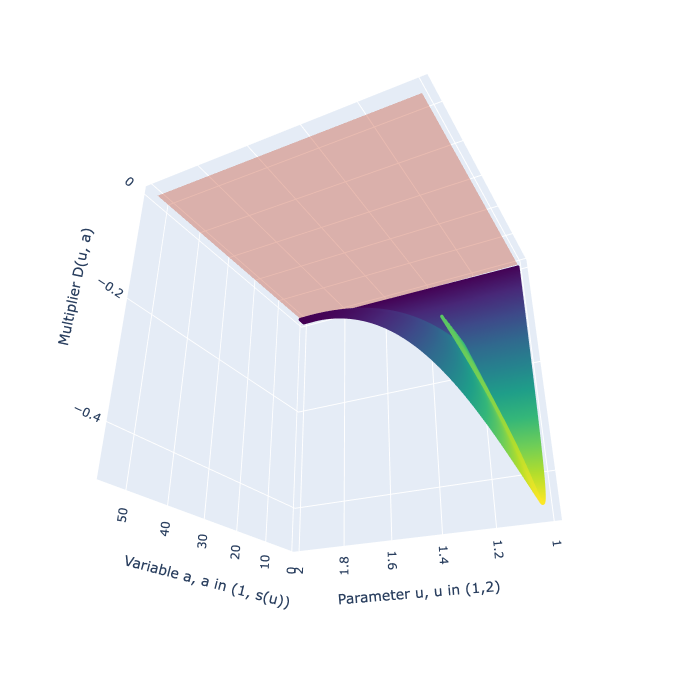}\\
		\includegraphics[width=0.4\textwidth]{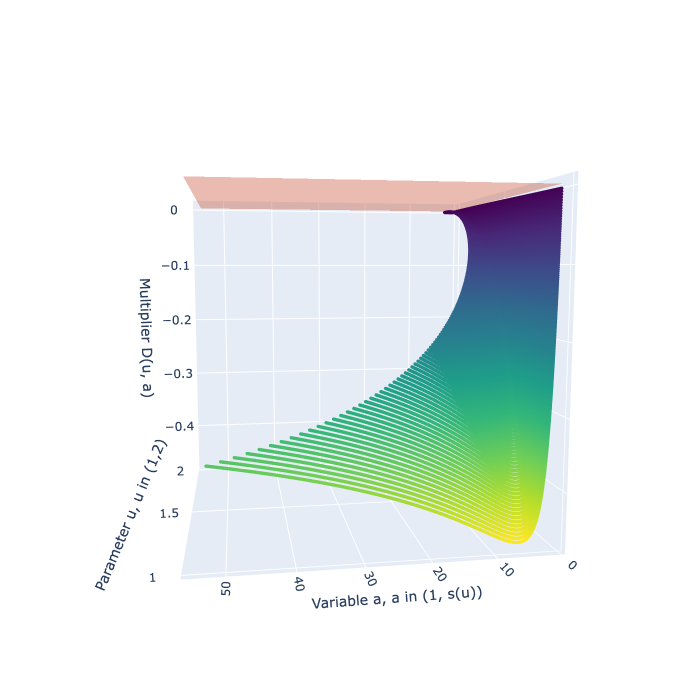}
		\includegraphics[width=0.4\textwidth]{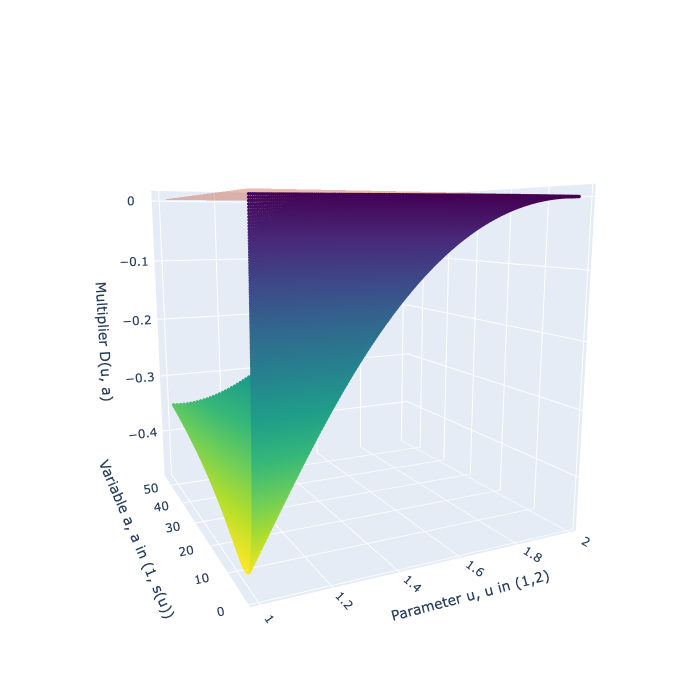}\\
		\includegraphics[width=0.4\textwidth]{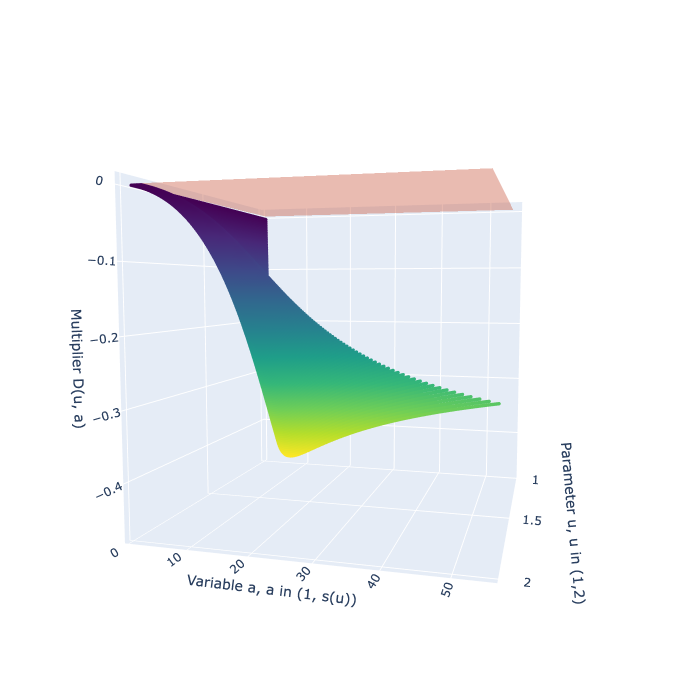}
		\caption{Numerically showing that $D(u,a) < 0$ for any $u\in(1,2)$, and $a\in(1,e^{\frac{4}{u^2}})$. Red plane corresponds to the value 0.}
		\label{fig:derivative}
	\end{figure*}
	
	\begin{figure*}[t]
	\centering
		\includegraphics[width=0.7\textwidth]{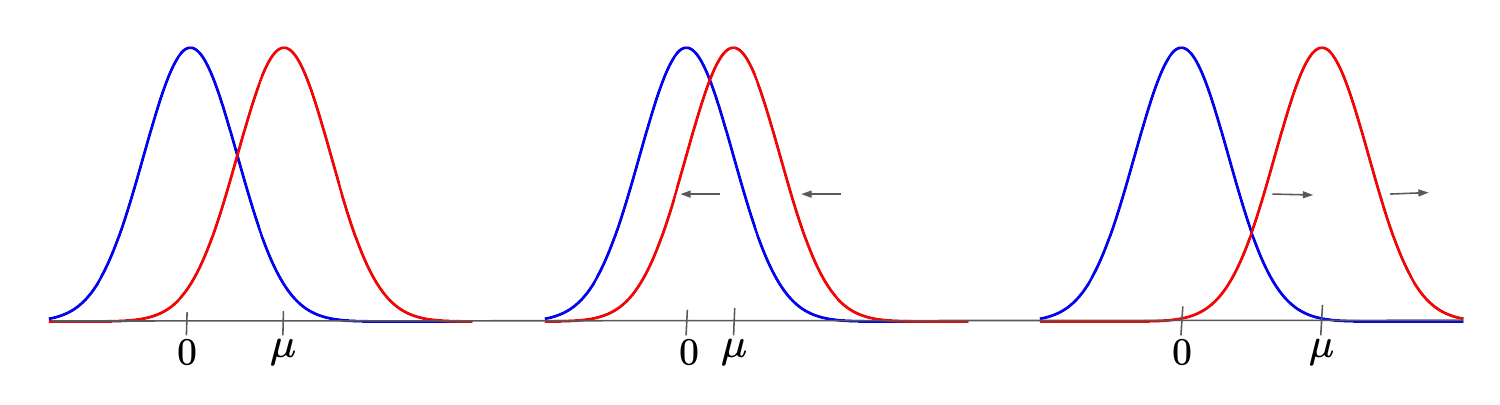}
		\caption{In part (III), we add or remove noise from the data by moving the mean of the right Gaussian. Both Gaussians have standard deviation 1. 
		}
		\label{fig:moving_gaussian}
	\end{figure*}
	
\paragraph{Evidence for Part (III)}
Now we add or remove noise from the data set by moving the two means closer together or further apart (see Figure \ref{fig:moving_gaussian}), for a fixed $\sigma = 1$.
Recall that without a loss of generality, we take  $\mu_1 = 0$ and denote $\mu = \mu_2$. The optimal model $f^*$ is $f^* = \frac{\mu}{2}$. Given $\sigma = 1$ and $\gamma > 0$, we are interested in finding $\mu > 0$ and edge model $f \in (0, \mu/2)$ such that the volume of the Rashomon set is minimal. Therefore we have the following optimization problem:
\begin{equation}\label{eq:minrsetoptimization}
\begin{gathered}
    \min_{\mu} \frac{\mu}{2} - f_{\mu} \text{ s.t. } f_{\mu} \textrm{ is defined by}\\
     2\Phi\left(\frac{\mu}{2}\right) - \Phi(\mu-f_{\mu}) - \Phi(f_{\mu}) = \gamma.
\end{gathered}
\end{equation}

We cannot solve optimization problem \eqref{eq:minrsetoptimization} directly for the best $\mu$ for each $\gamma$, but we provide numerical solutions in Figure \ref{fig:minrsetfigure} for a range of values of $\gamma$. We observe that, regardless of the value of $\gamma > 0$, as we minimize the volume of the Rashomon set to find $\mu$, the $\mu$ corresponding to the optimal solution is always approximately equal to $2$. The edge model $f^e\in(0,1)$ is then the one that satisfies the constraint: $2\Phi(1) - \Phi(2 - f^e) - \Phi(f^e) = \gamma$. (The edge model must vary with $\gamma$ since the optimal $\mu$ does not.)

The value $\mu = 2$ might seem surprising as a solution for {\em any} $\gamma > 0$. However, when $\mu_1 = 0$ and $\mu_2 = 2$, $f^* = 1$, which is one standard deviation away from each mean and therefore corresponds to a value where the inflection points of the two Gaussians coincide (normal distribution $\mathcal{N}(\mu,\sigma)$ has two inflection points $\mu \pm \sigma$). Given fixed $\gamma$, as we move either of the means of the Gaussians so that the inflection points no longer coincide, $f^e$ moves outward to compensate so that the constraint in optimization problem \eqref{eq:minrsetoptimization} is satisfied. Therefore, the volume of the Rashomon set grows as we increase or decrease feature noise by moving $\mu$ away from 1 (in either direction) as shown in Figure \ref{fig:gaus_rset_growth}.


\begin{figure*}[t]
	\centering
		\includegraphics[width=0.4\textwidth]{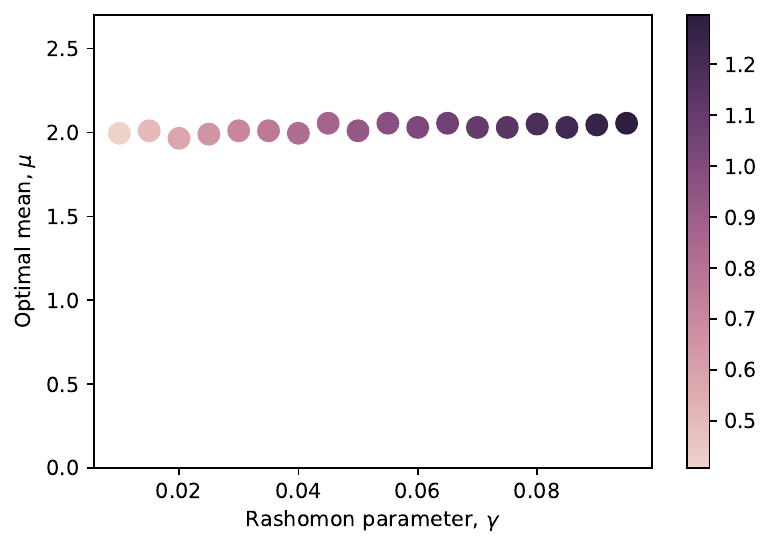}
		\includegraphics[width=0.4\textwidth]{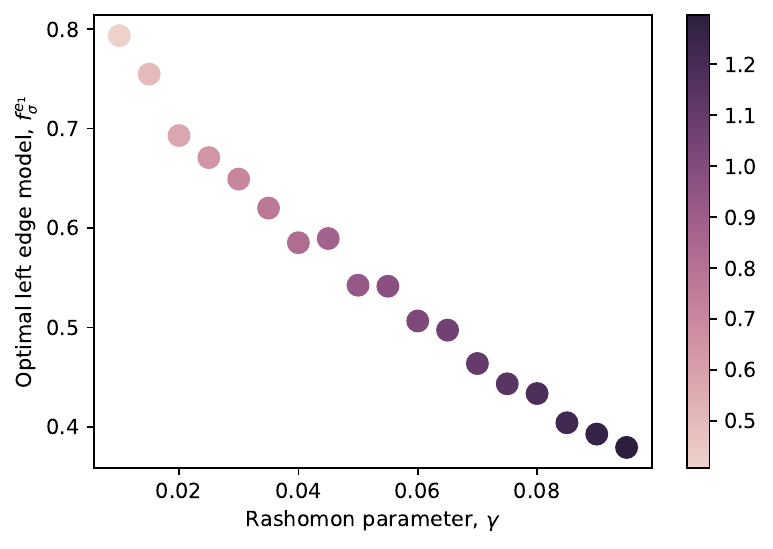}
		\caption{Numerical solution to the optimization problem \eqref{eq:minrsetoptimization}. For each fixed $\gamma$, we plot $f_{\sigma}^{e_1} = f^e \in(0, \mu/2)$, such that the volume of the Rashomon set is minimized. The color of the scatter plot points correspond to the value of the volume of the Rashomon set, where more intense color means higher value.}
		\label{fig:minrsetfigure}
	\end{figure*}

	\begin{figure*}[t]
	\centering
		\includegraphics[width=0.9\textwidth]{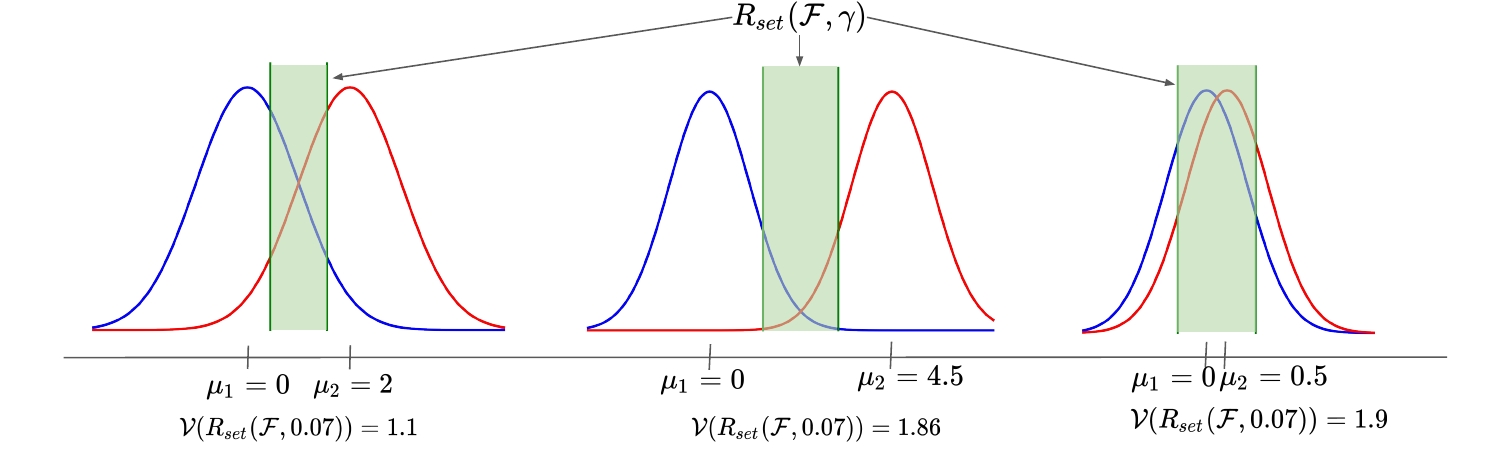}
		\caption{An example that shows that in part (III) as we move the right Gaussian away from a mean of 2 in either direction, the volume of the Rashomon set increases.
		}
		\label{fig:gaus_rset_growth}
	\end{figure*}

\end{proof}

	\begin{figure*}[t]
		\centering
		\begin{tabular}{cc}
			\includegraphics[width=0.22\textwidth]{Figures/bar/35.pdf}\quad	
			\includegraphics[width=0.22\textwidth]{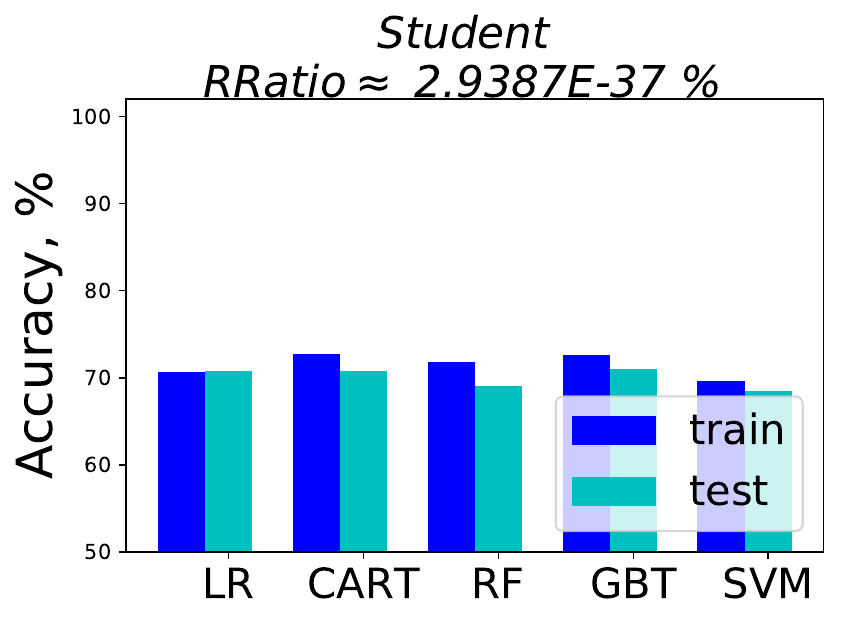}\quad
			\includegraphics[width=0.22\textwidth]{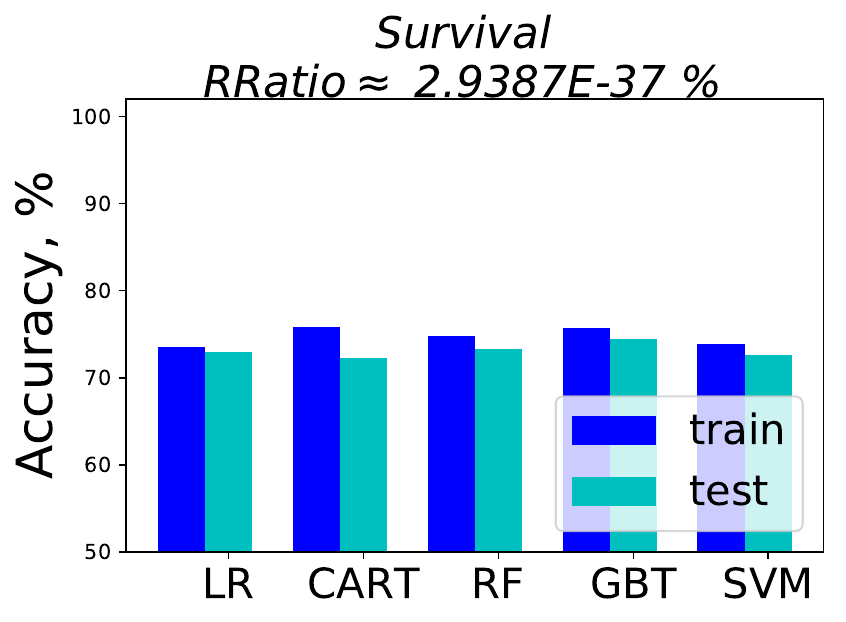}\quad
			\includegraphics[width=0.22\textwidth]{Figures/bar/3.pdf}\\
			\includegraphics[width=0.22\textwidth]{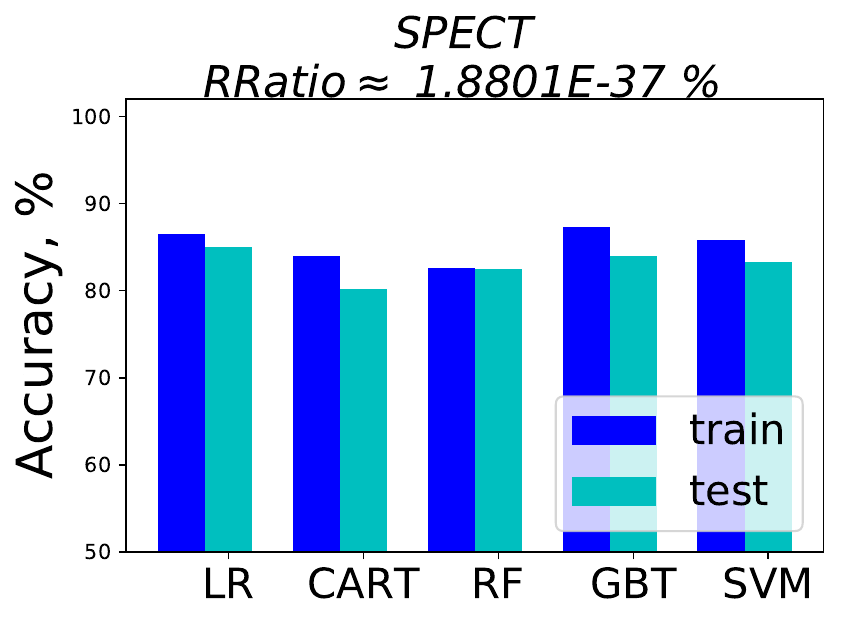}\quad
			\includegraphics[width=0.22\textwidth]{Figures/bar/33.pdf}\quad
			\includegraphics[width=0.22\textwidth]{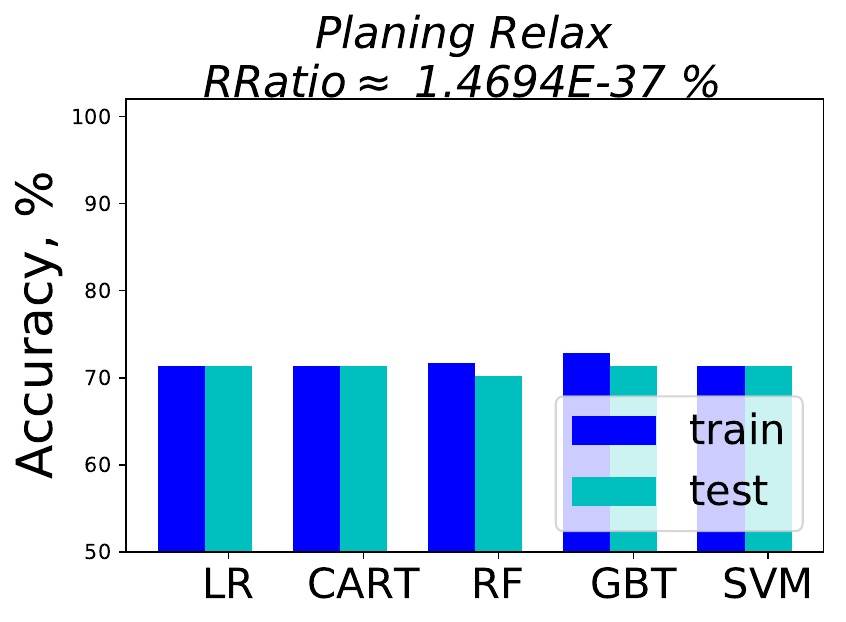}\quad
			\includegraphics[width=0.22\textwidth]{Figures/bar/9.pdf}\\
			\includegraphics[width=0.22\textwidth]{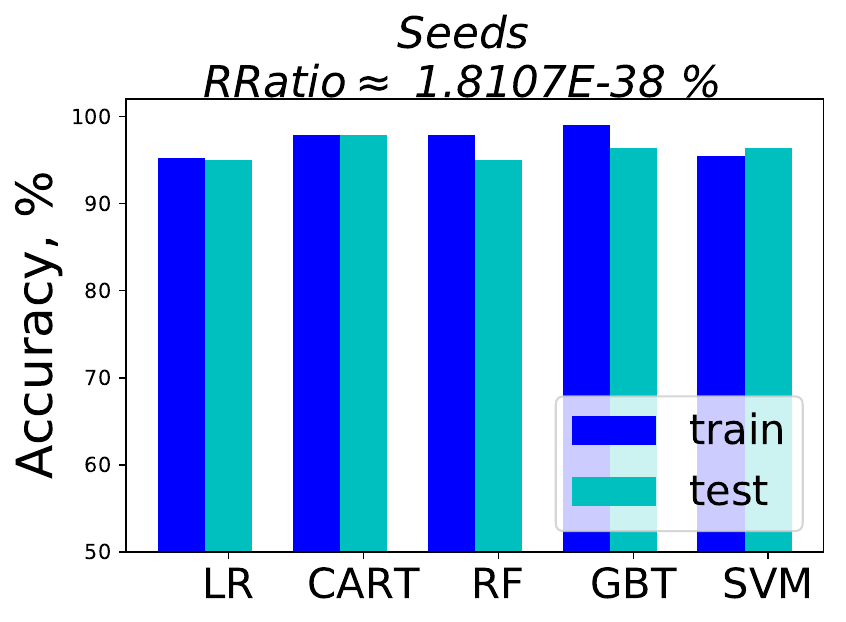}\quad
			\includegraphics[width=0.22\textwidth]{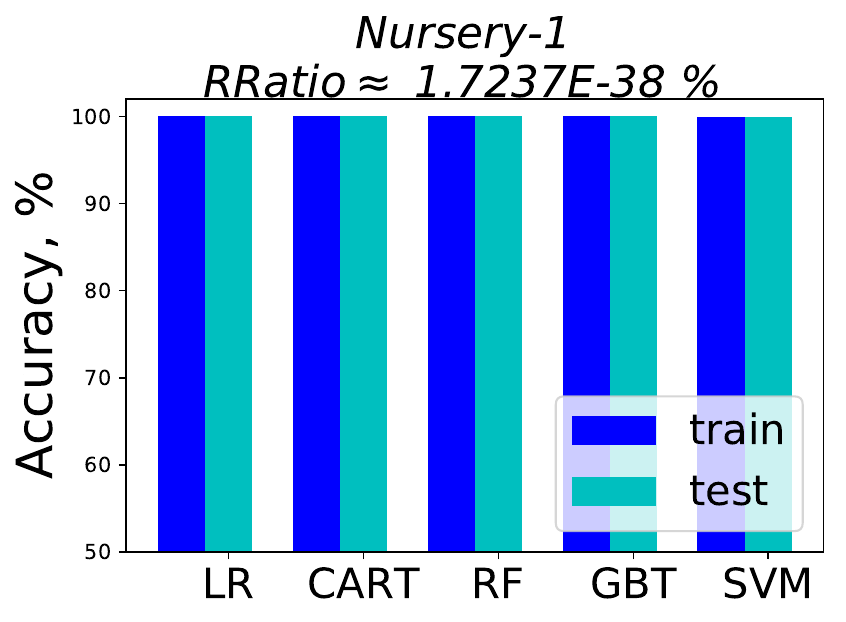}\quad
			\includegraphics[width=0.22\textwidth]{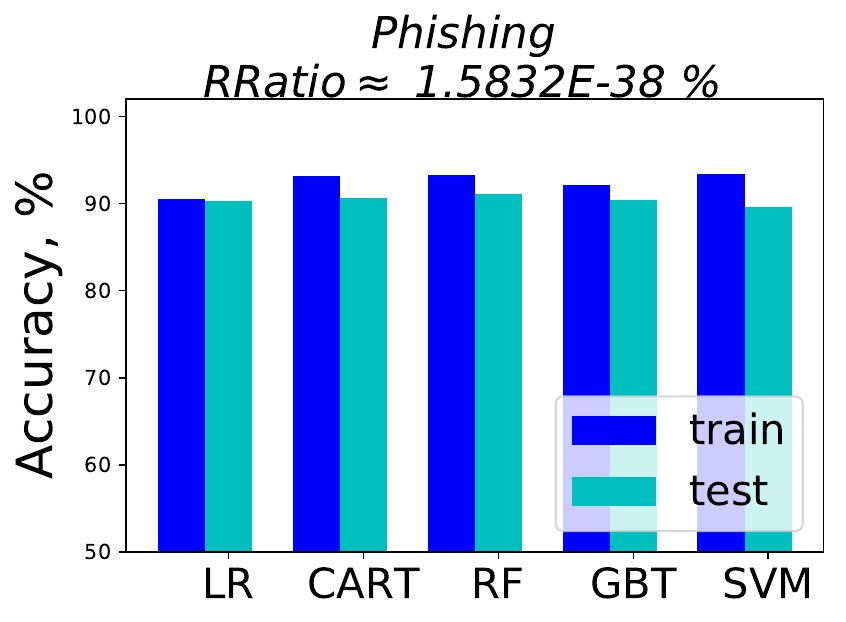}\quad	
			\includegraphics[width=0.22\textwidth]{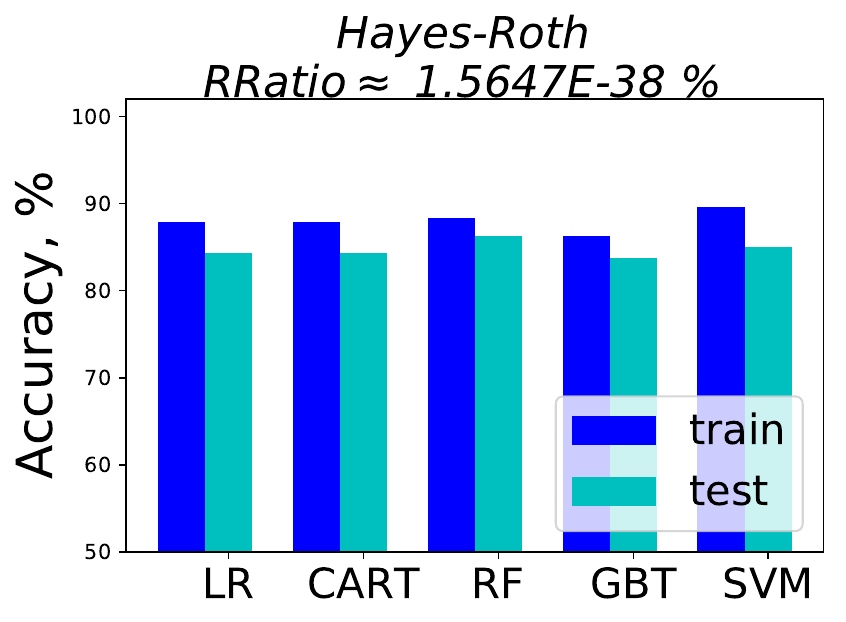}\\
			\includegraphics[width=0.22\textwidth]{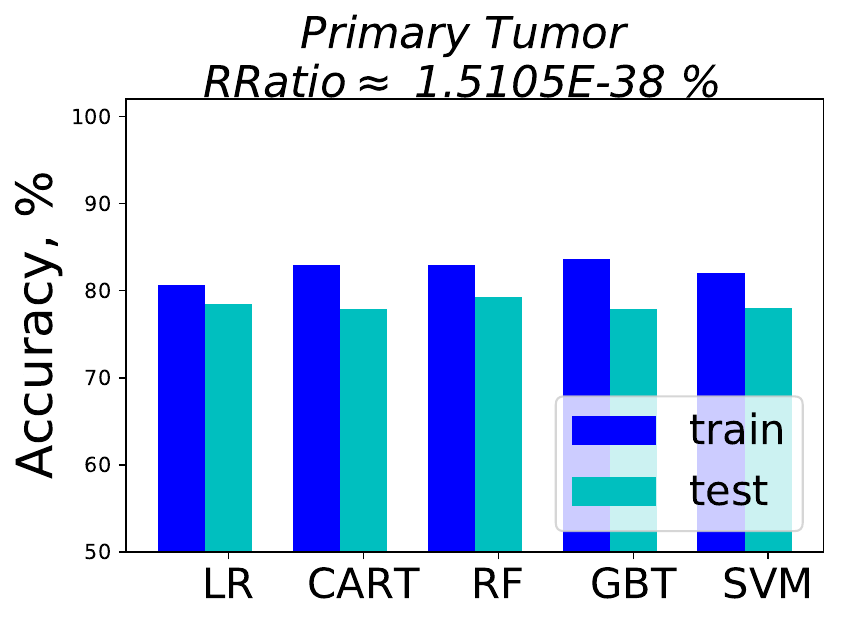}\quad
			\includegraphics[width=0.22\textwidth]{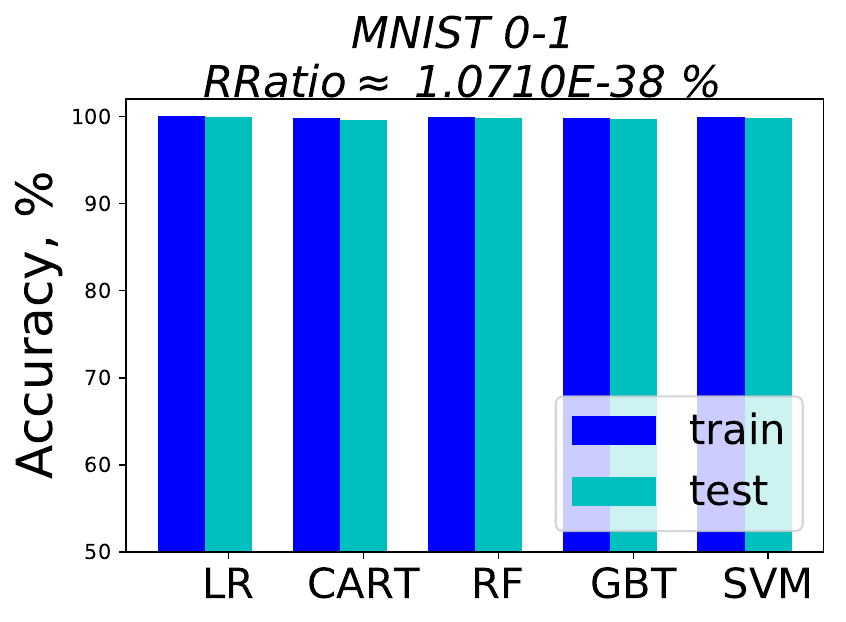}\quad
			\includegraphics[width=0.22\textwidth]{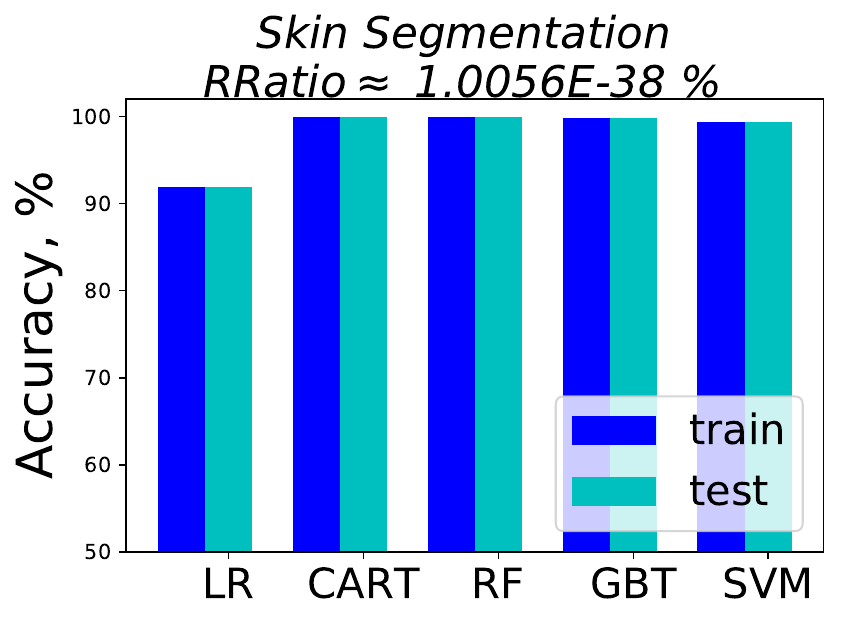}\quad
			\includegraphics[width=0.22\textwidth]{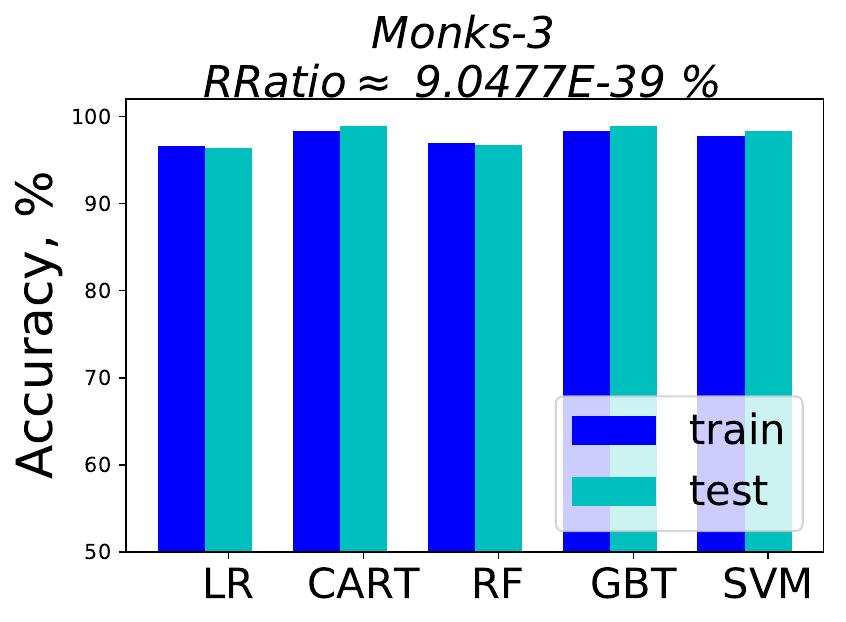}\\
			\includegraphics[width=0.22\textwidth]{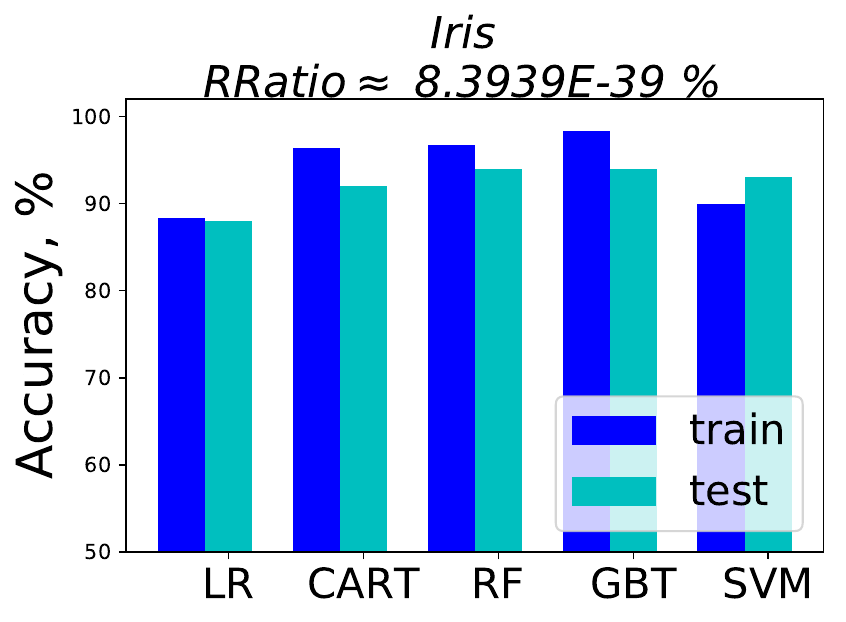}\quad
			\includegraphics[width=0.22\textwidth]{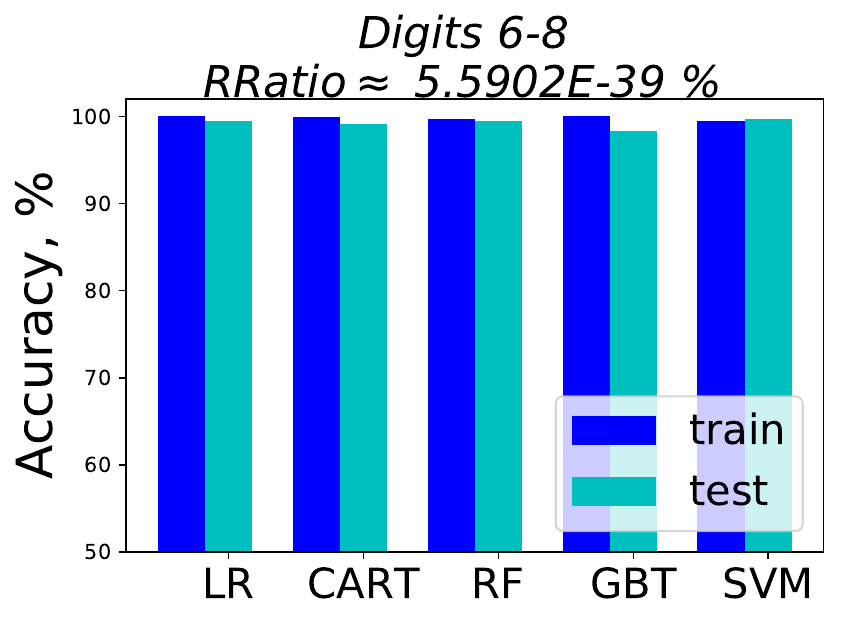}\quad
			\includegraphics[width=0.22\textwidth]{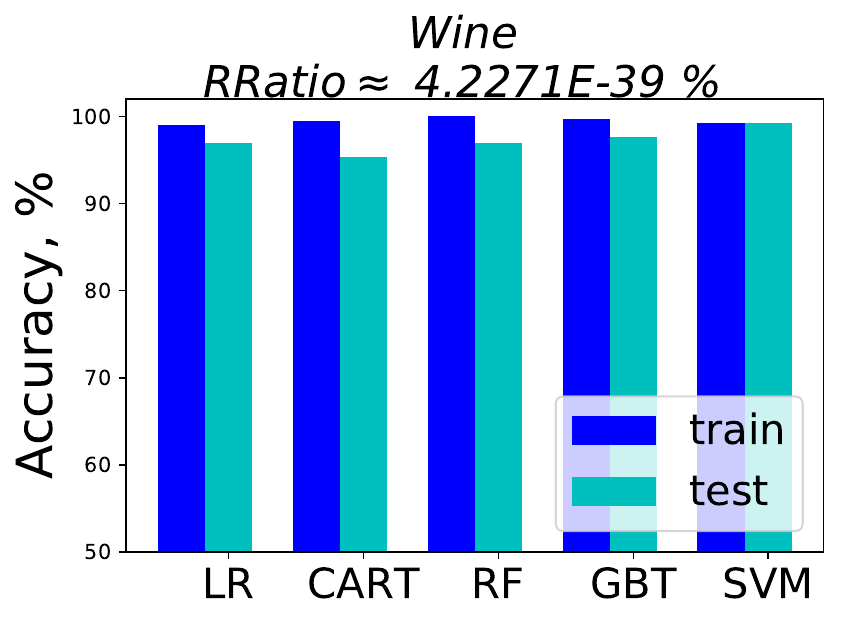}\quad
			\includegraphics[width=0.22\textwidth]{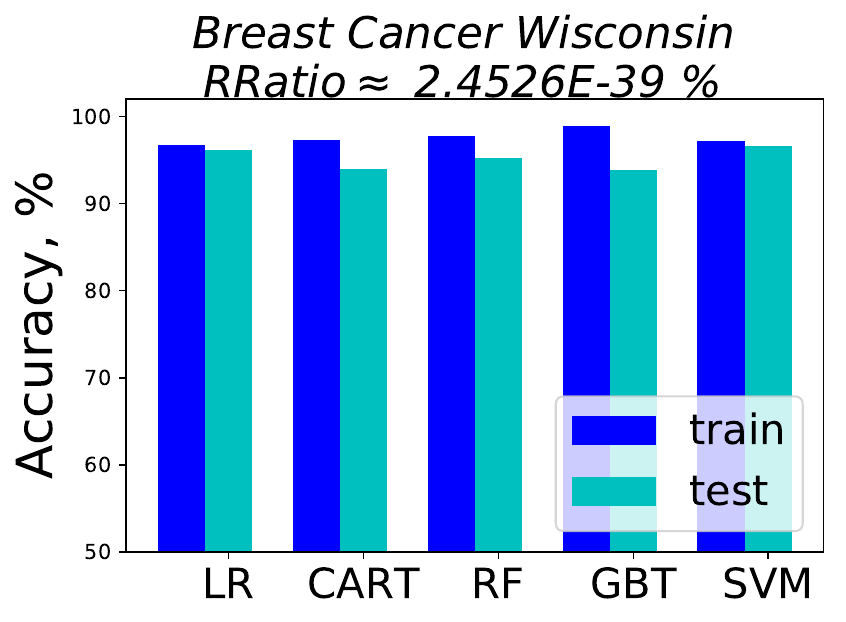}\\

		\end{tabular}
		\caption{Performance of five machine learning algorithms with regularization for the UCI classification data sets. Data sets are listed in decreasing order of Rashomon ratio. Rashomon ratios, train and test accuracies are averaged over ten folds for data sets with more than 200 points and over five folds for data sets with less than 200 points. These plots continue in  Figure \ref{fig:bar_plots_2}. In these cases, test performance seems to be similar across algorithms. This will not be true in all cases as the Rashomon set becomes smaller, in  Figure \ref{fig:bar_plots_2}.
		\label{fig:bar_plots_1}}
	\end{figure*}

	\begin{figure*}[t]
		\centering
		\begin{tabular}{cc}
		    \quad
			\includegraphics[width=0.22\textwidth]{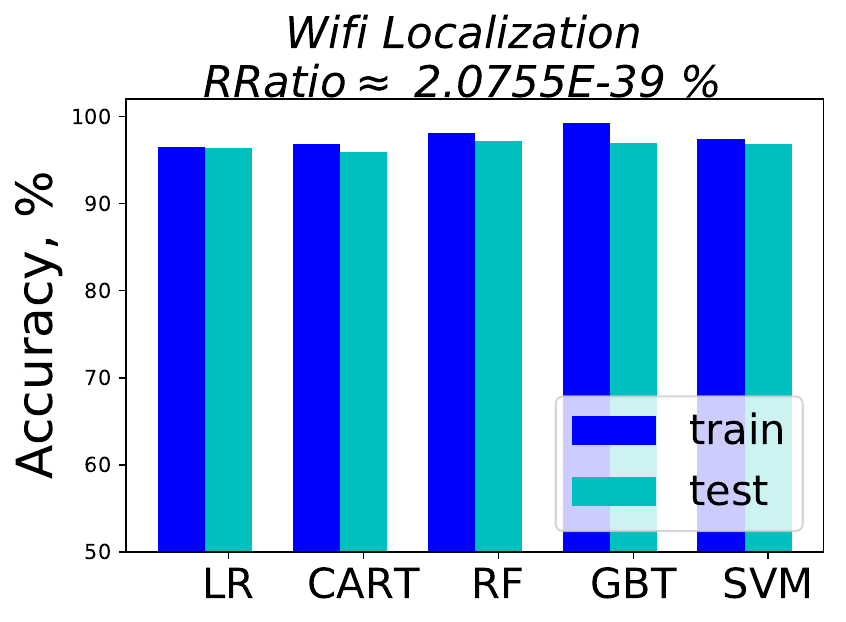}\quad
			\includegraphics[width=0.22\textwidth]{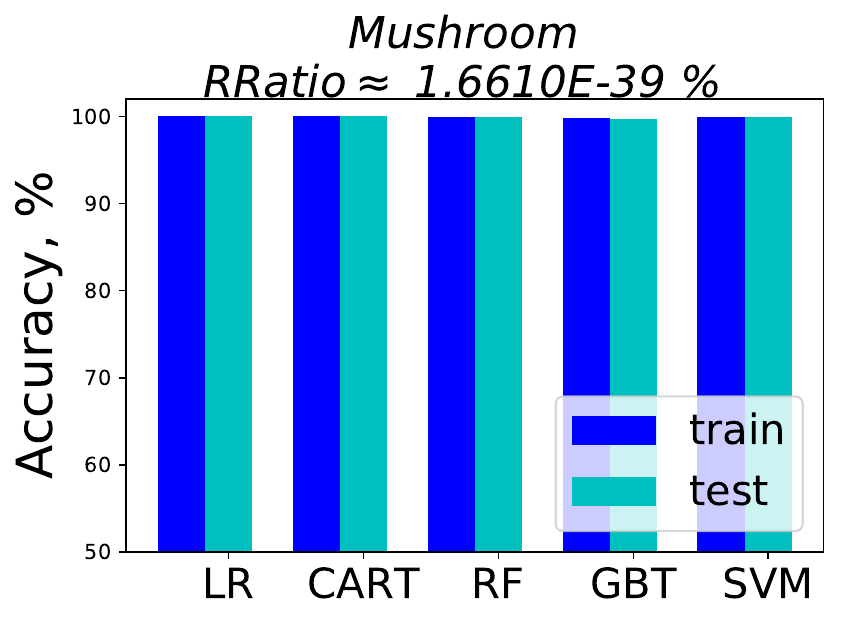}\quad	
			\includegraphics[width=0.22\textwidth]{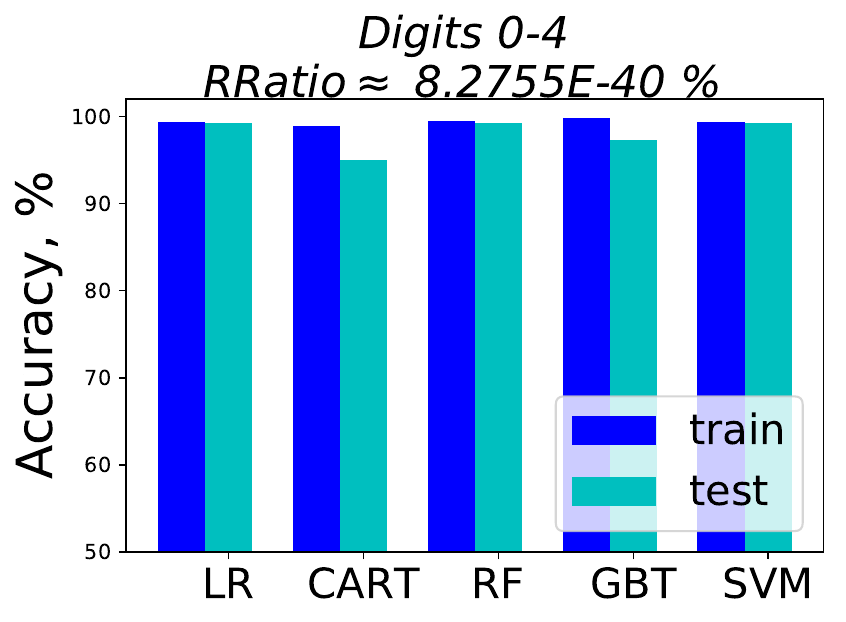}\quad
			\includegraphics[width=0.22\textwidth]{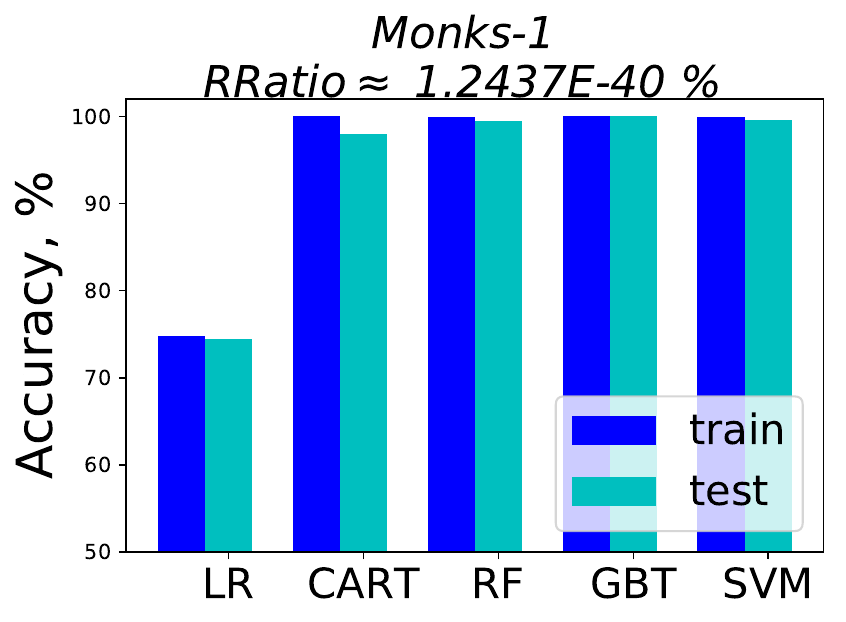}\\
			\includegraphics[width=0.22\textwidth]{Figures/bar/21.pdf}\quad
			\includegraphics[width=0.22\textwidth]{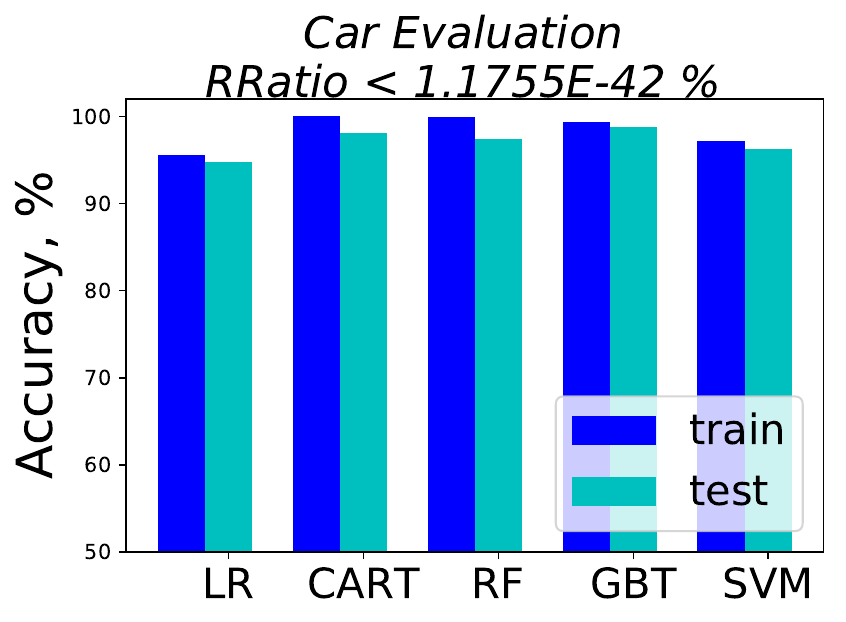}\quad
			\includegraphics[width=0.22\textwidth]{Figures/bar/1.pdf}\quad
			\includegraphics[width=0.22\textwidth]{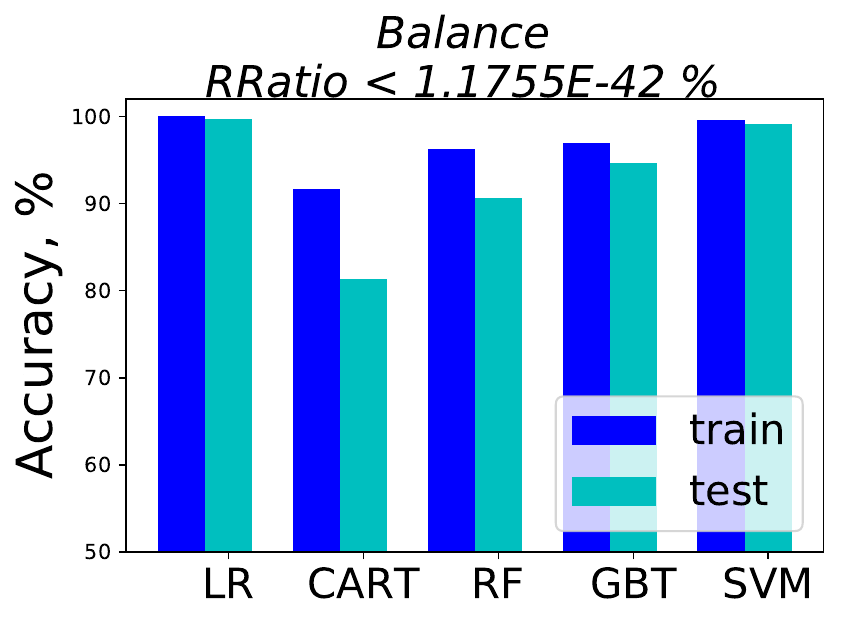}\\
		    \includegraphics[width=0.22\textwidth]{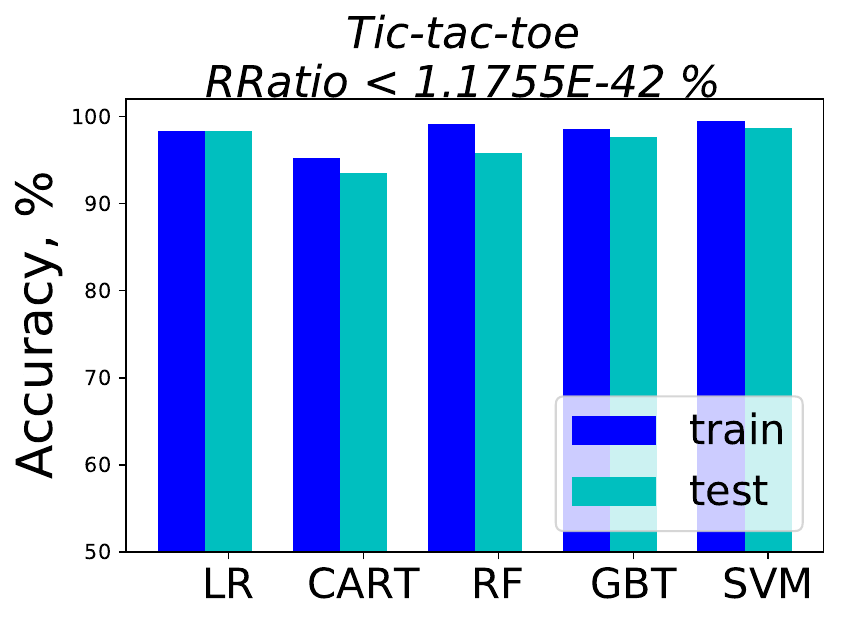}\quad	
			\includegraphics[width=0.22\textwidth]{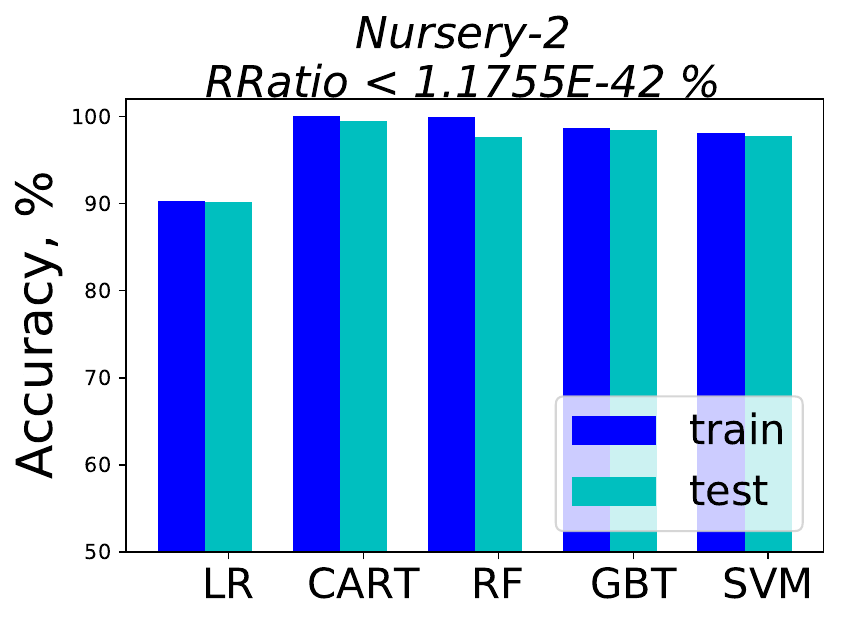}\quad
			\includegraphics[width=0.22\textwidth]{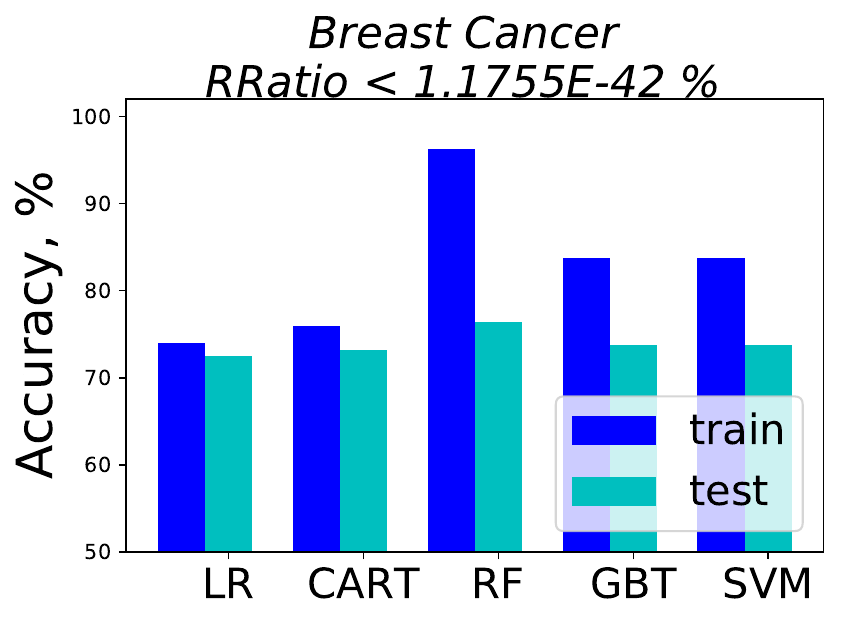}\quad
			\includegraphics[width=0.22\textwidth]{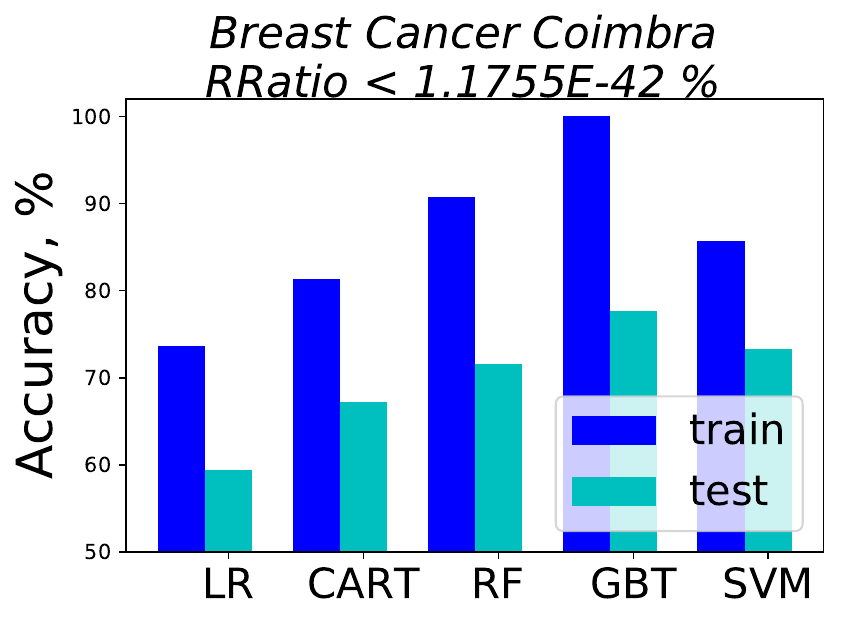}\\
			\includegraphics[width=0.22\textwidth]{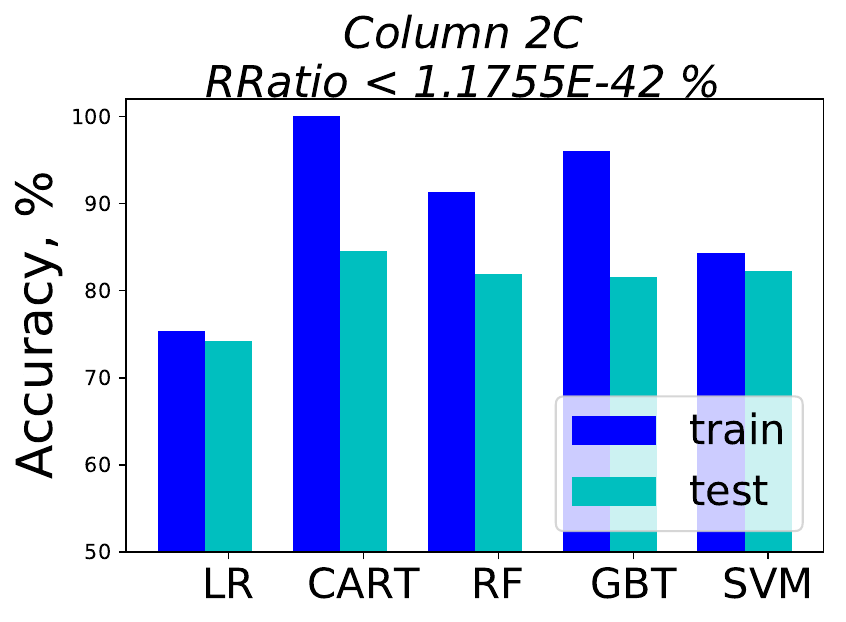}\quad
			\includegraphics[width=0.22\textwidth]{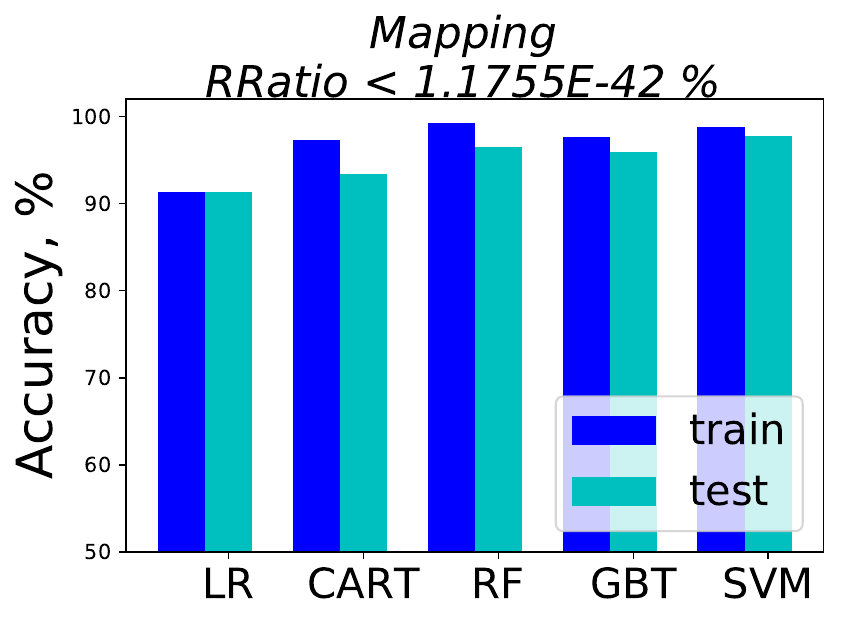}\quad	
			\includegraphics[width=0.22\textwidth]{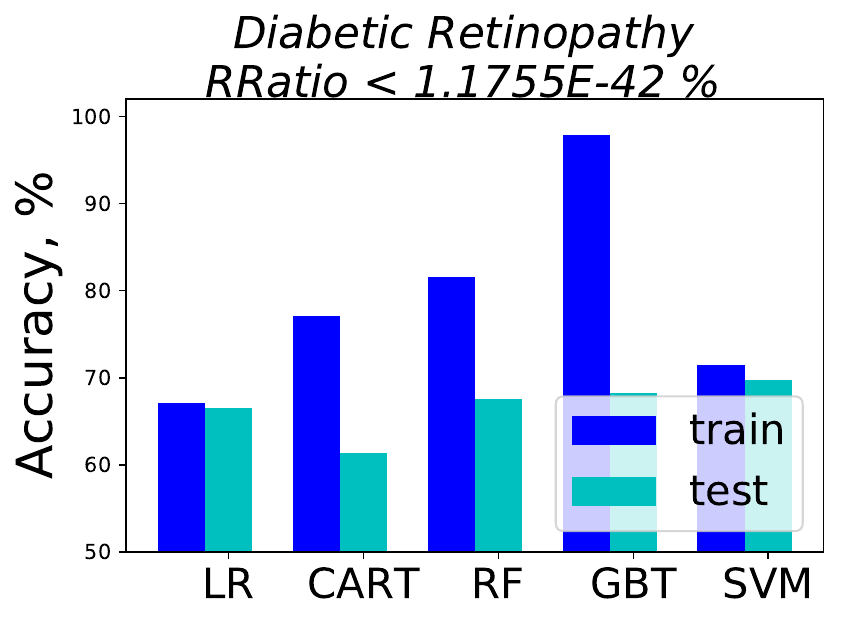}\quad
			\includegraphics[width=0.22\textwidth]{Figures/bar/32.pdf}\\
			\includegraphics[width=0.22\textwidth]{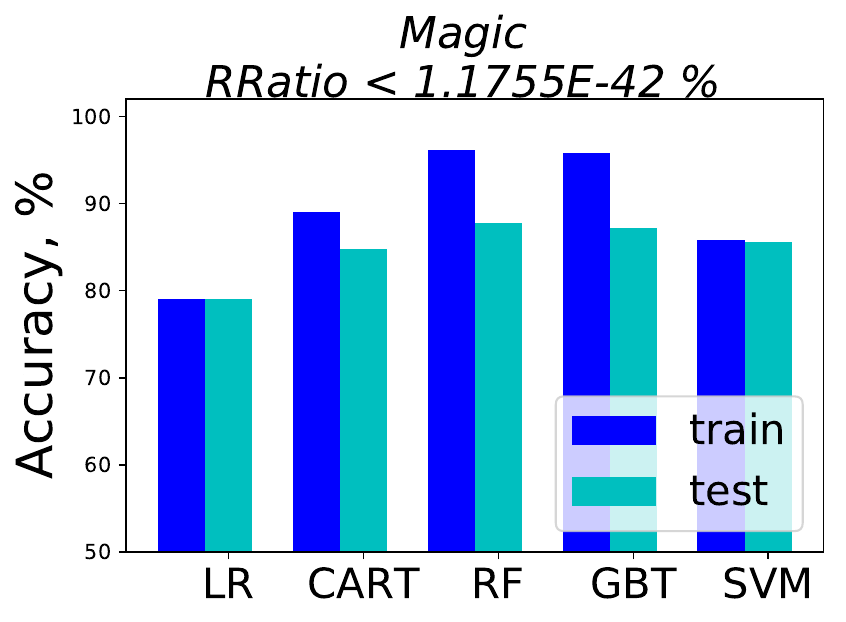}\quad	
			\includegraphics[width=0.22\textwidth]{Figures/bar/37.pdf}\\	
		\end{tabular}
		\caption{Performance of five machine learning algorithms with regularization for the UCI classification data sets. Data sets are listed in decreasing order of the Rashomon ratio, continued from  Figure \ref{fig:bar_plots_1}. Rashomon ratios, training accuracies, and test accuracies are averaged over ten folds for data sets with more than 200 points and over five folds for data sets with less than 200 points. Test performance sometimes varies across algorithms.}
		\label{fig:bar_plots_2}
	\end{figure*}

		\begin{figure*}[t]
		\centering
		\begin{tabular}{cc}
			\includegraphics[width=0.22\textwidth]{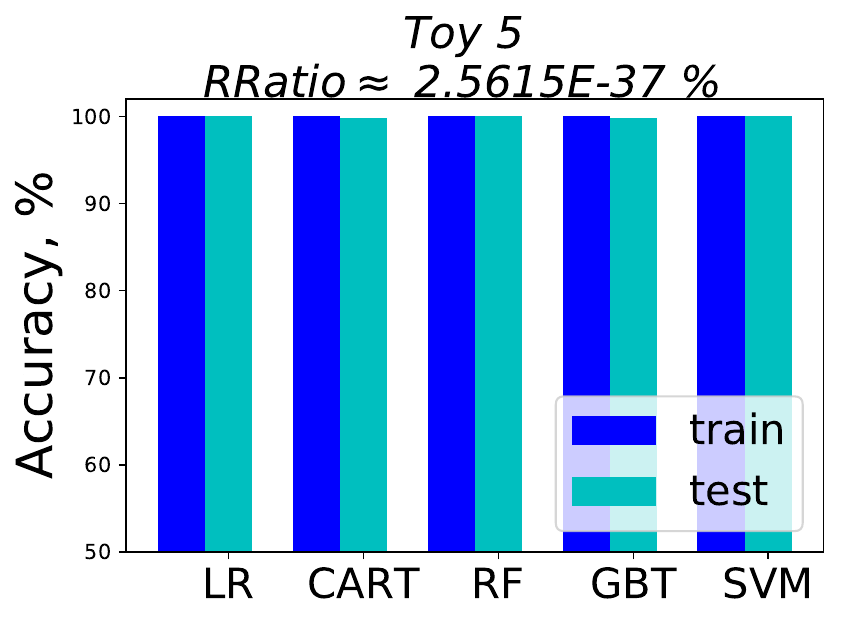}\quad	
			\includegraphics[width=0.22\textwidth]{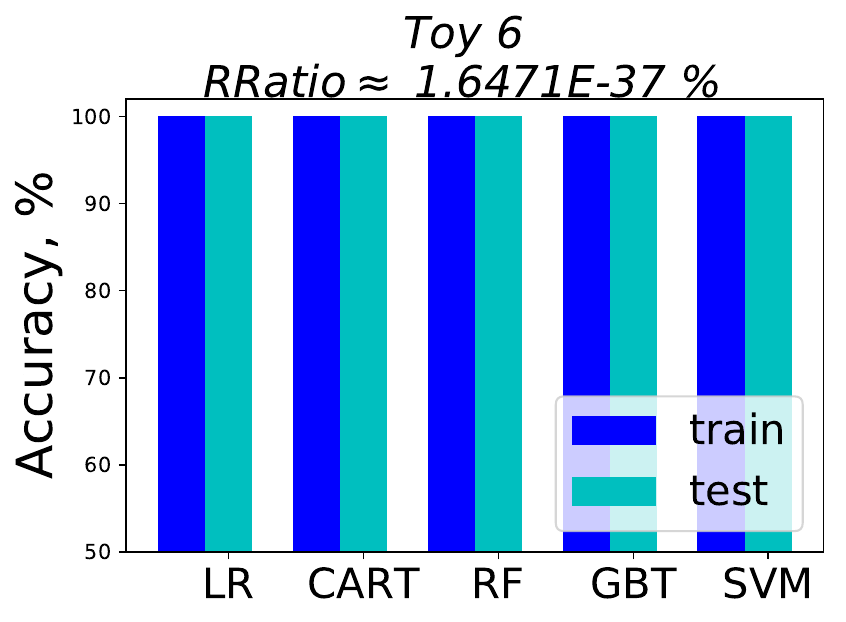}\quad
			\includegraphics[width=0.22\textwidth]{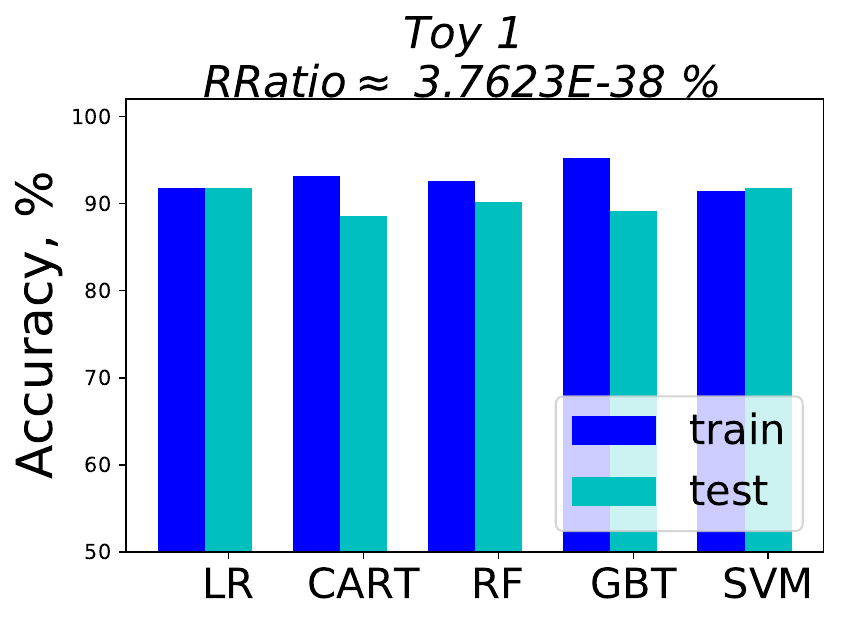}\quad
			\includegraphics[width=0.22\textwidth]{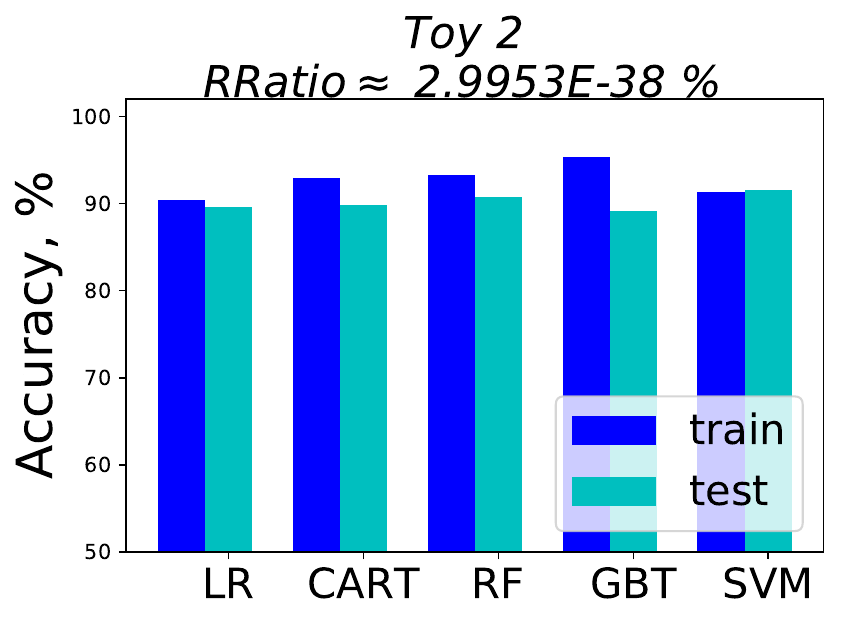}\\
			\includegraphics[width=0.22\textwidth]{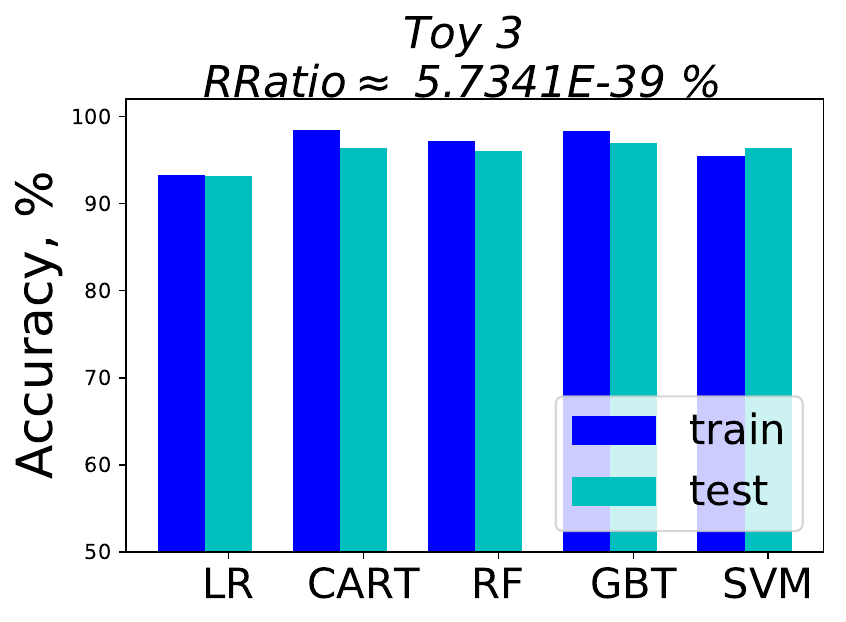}\quad	
			\includegraphics[width=0.22\textwidth]{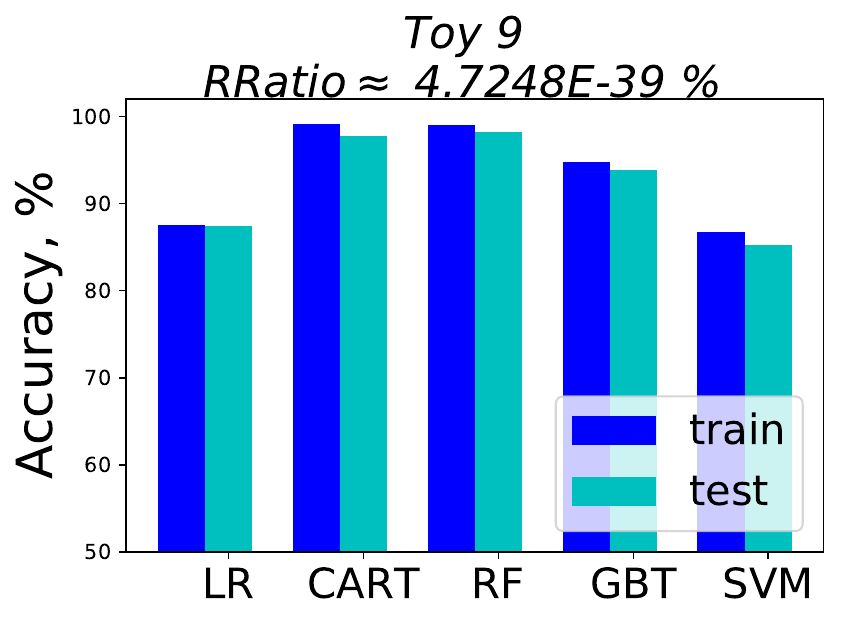}\quad
			\includegraphics[width=0.22\textwidth]{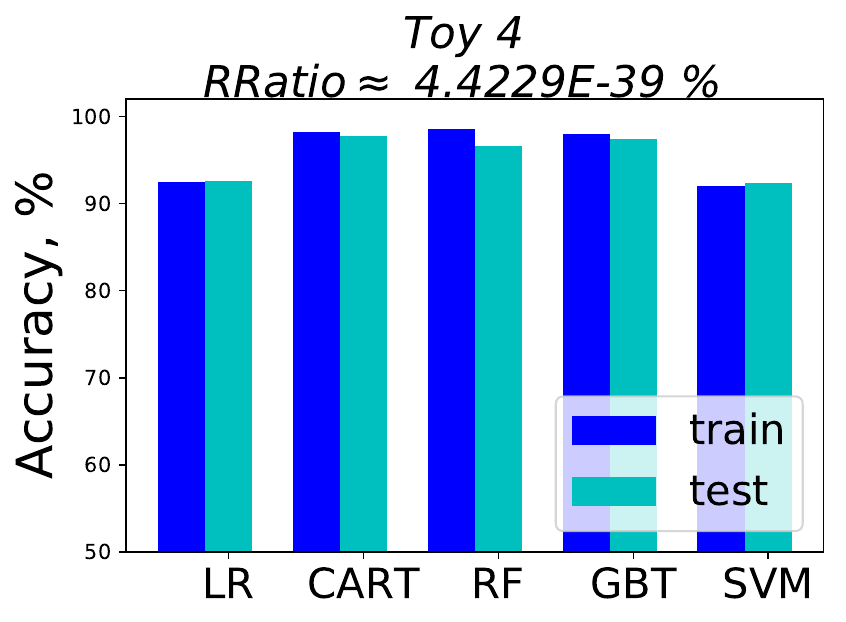}\quad
			\includegraphics[width=0.22\textwidth]{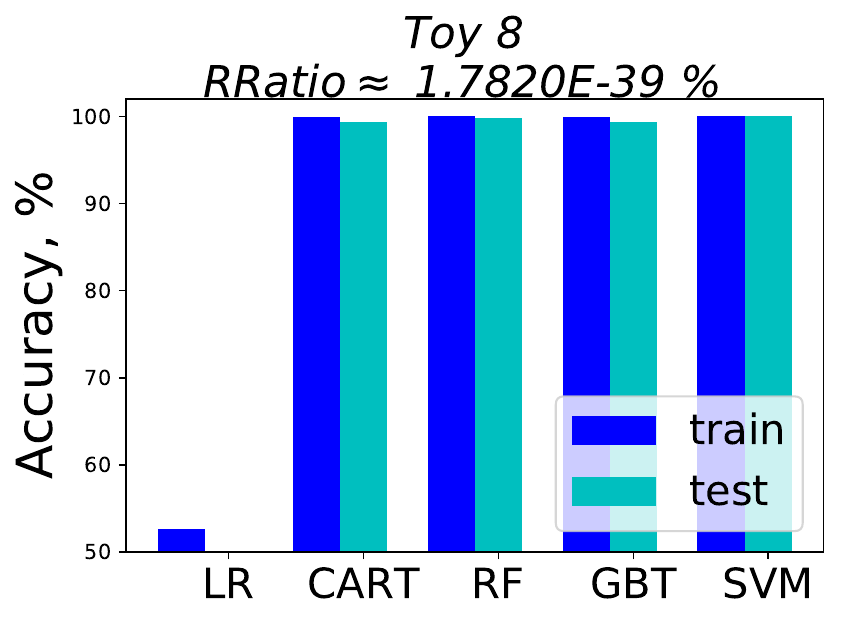}\\
		    \includegraphics[width=0.22\textwidth]{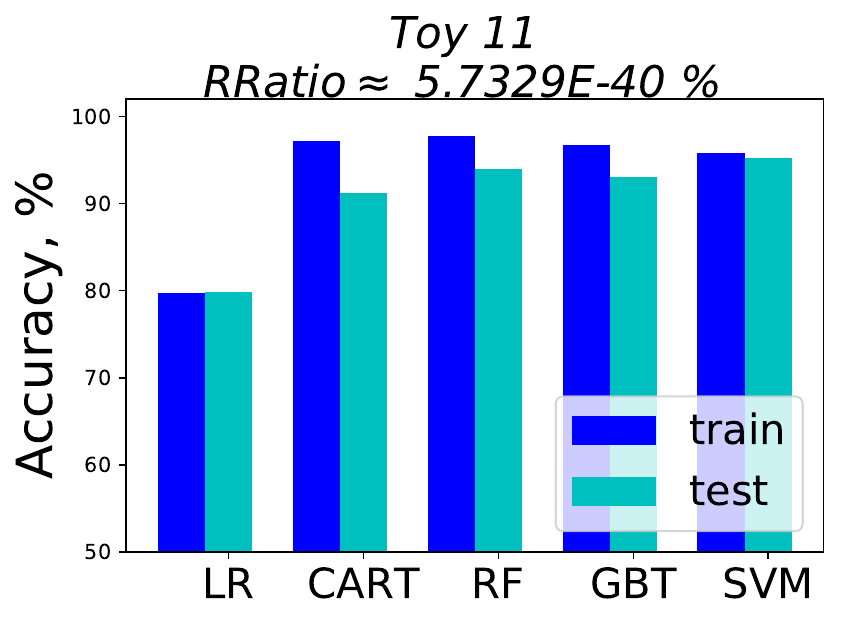}\quad	
			\includegraphics[width=0.22\textwidth]{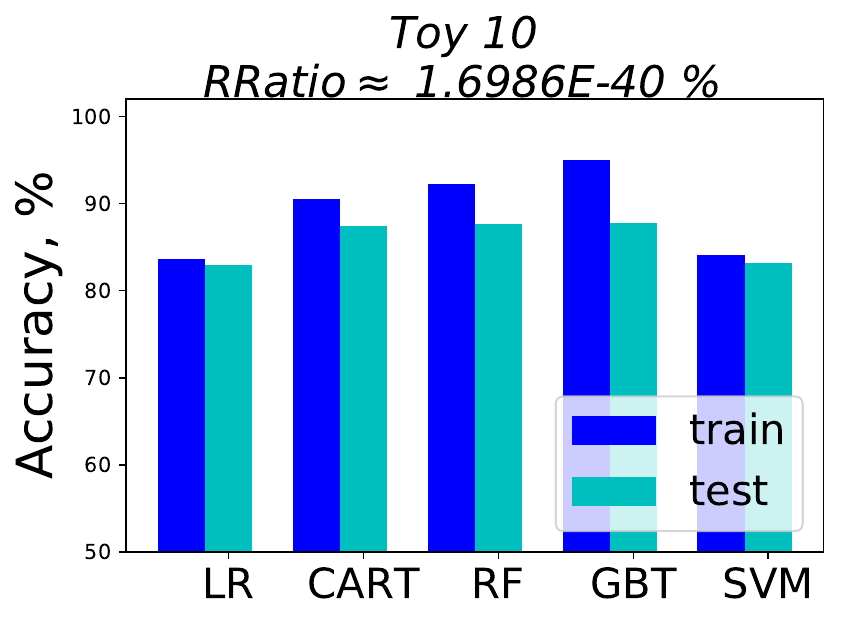}\quad
			\includegraphics[width=0.22\textwidth]{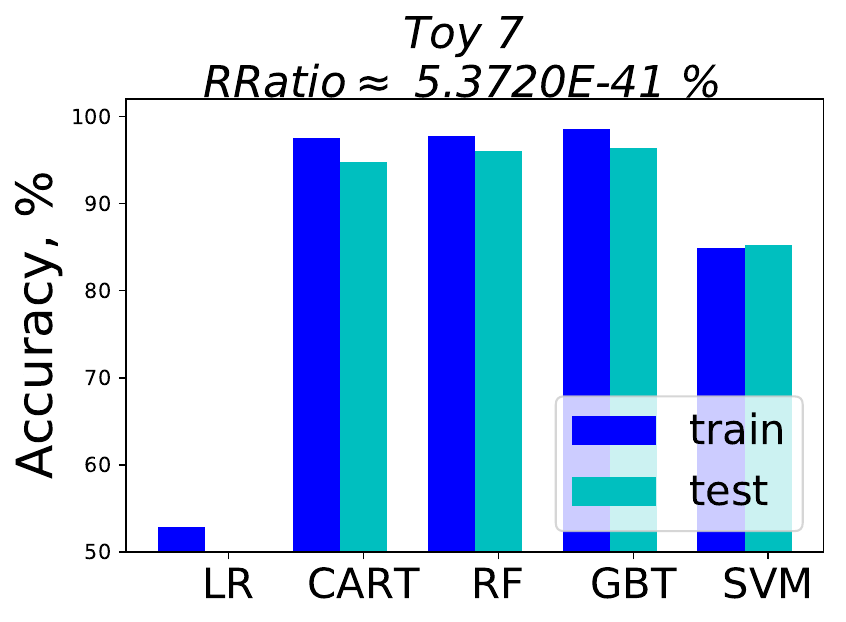}\quad
			\includegraphics[width=0.22\textwidth]{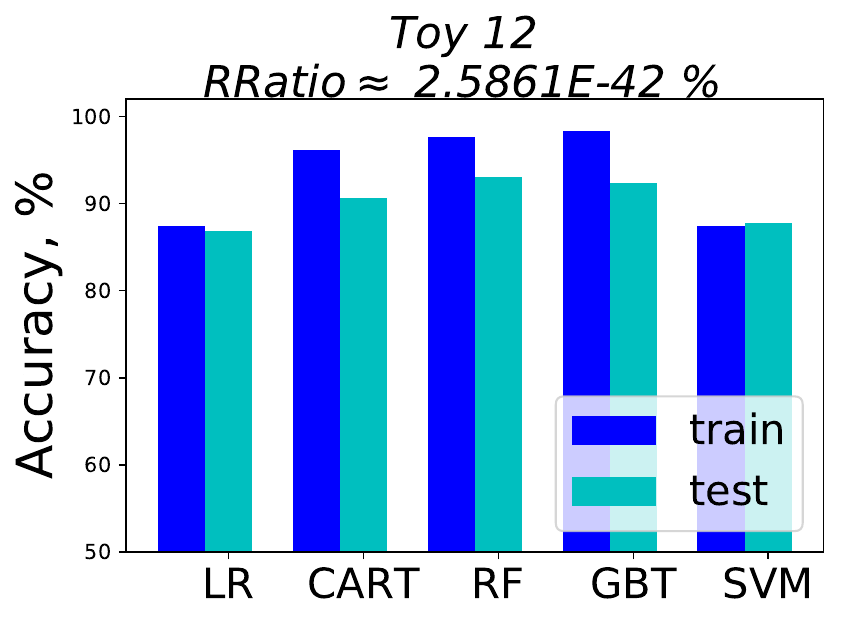}\\
		\end{tabular}
		\caption{Performance of five machine learning algorithms with regularization for the synthetic data sets with real-valued features. Data sets are listed in decreasing order of the Rashomon ratio. Rashomon ratios, train and test accuracies are averaged over ten folds. }
		\label{fig:bar_plots_3}
	\end{figure*}

	\begin{figure*}[t]
		\centering
		\begin{tabular}{cc}
			\includegraphics[width=0.22\textwidth]{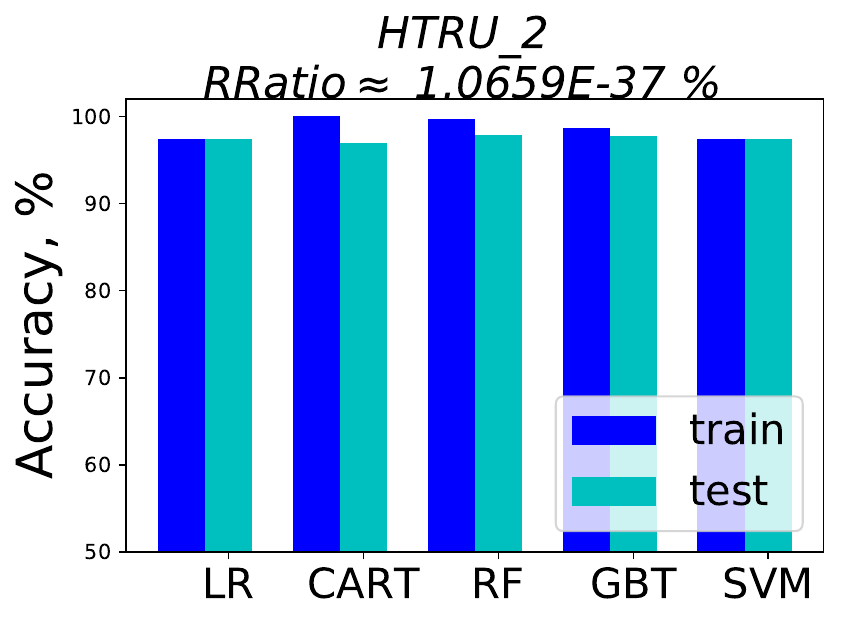}\quad	
			\includegraphics[width=0.22\textwidth]{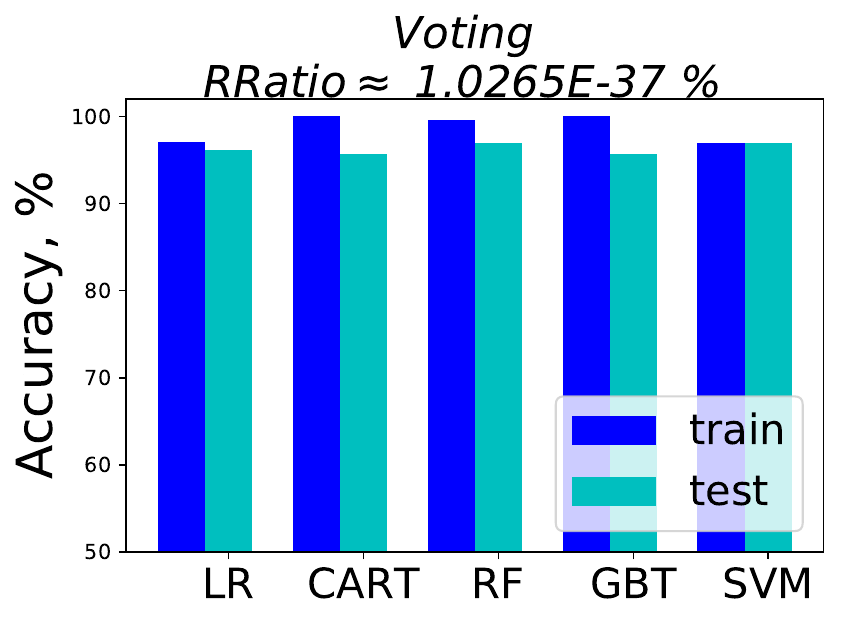}\quad
			\includegraphics[width=0.22\textwidth]{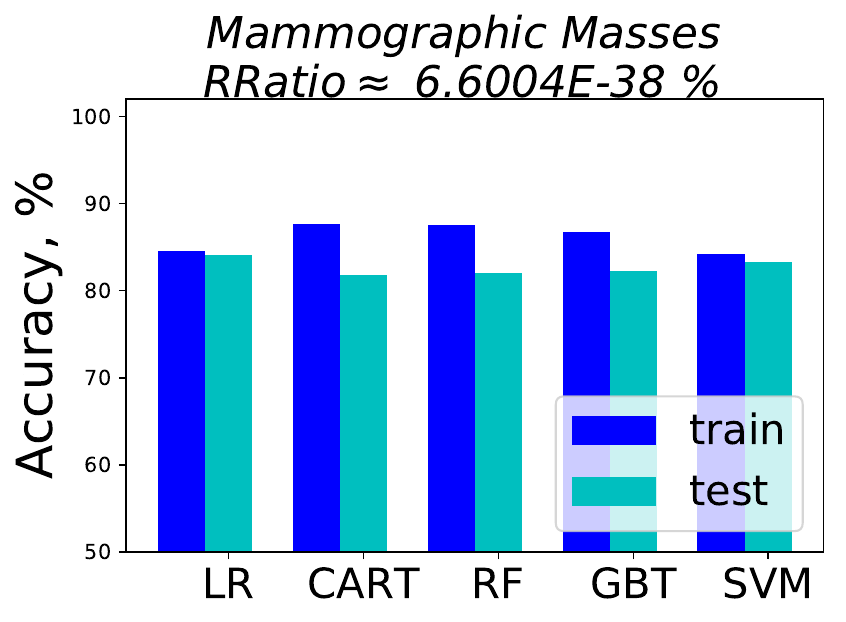}\quad
			\includegraphics[width=0.22\textwidth]{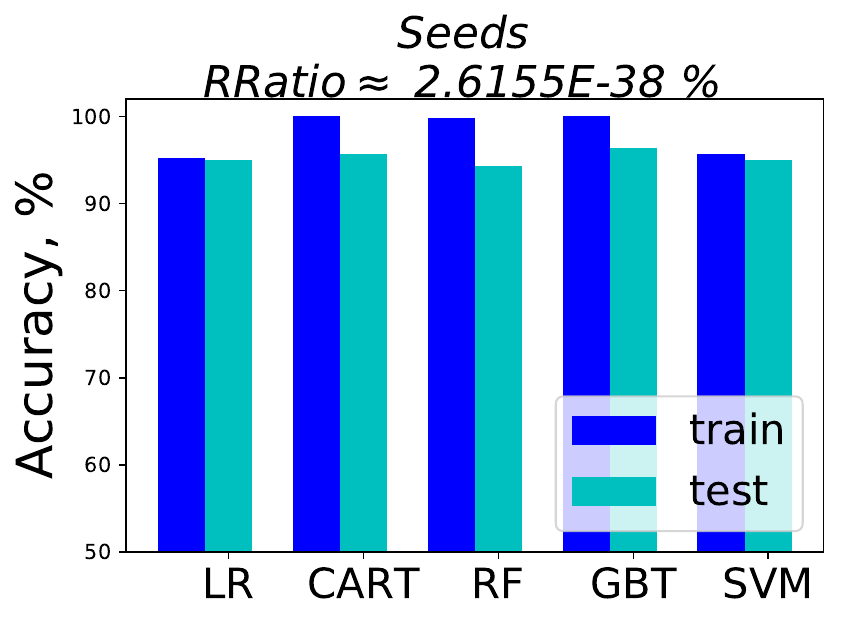}\\
			\includegraphics[width=0.22\textwidth]{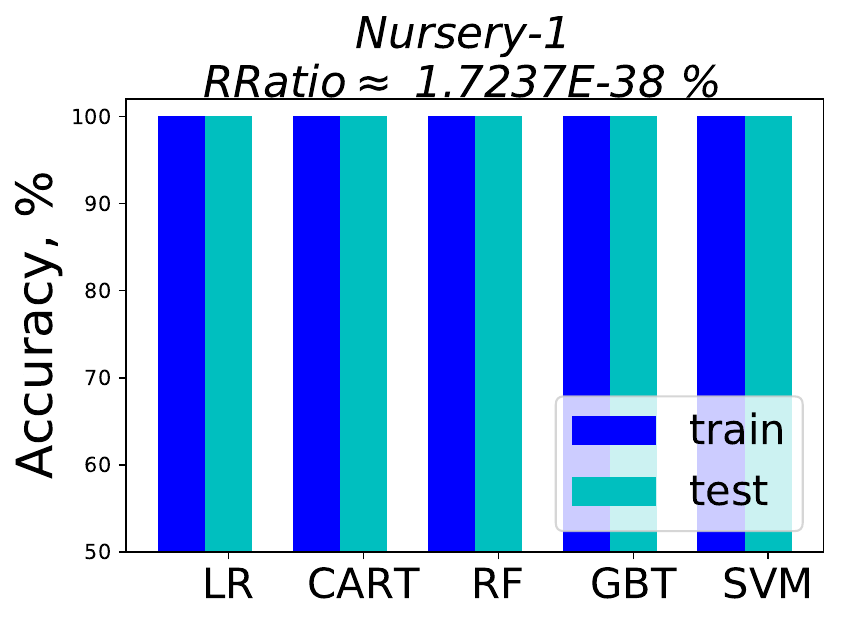}\quad
			\includegraphics[width=0.22\textwidth]{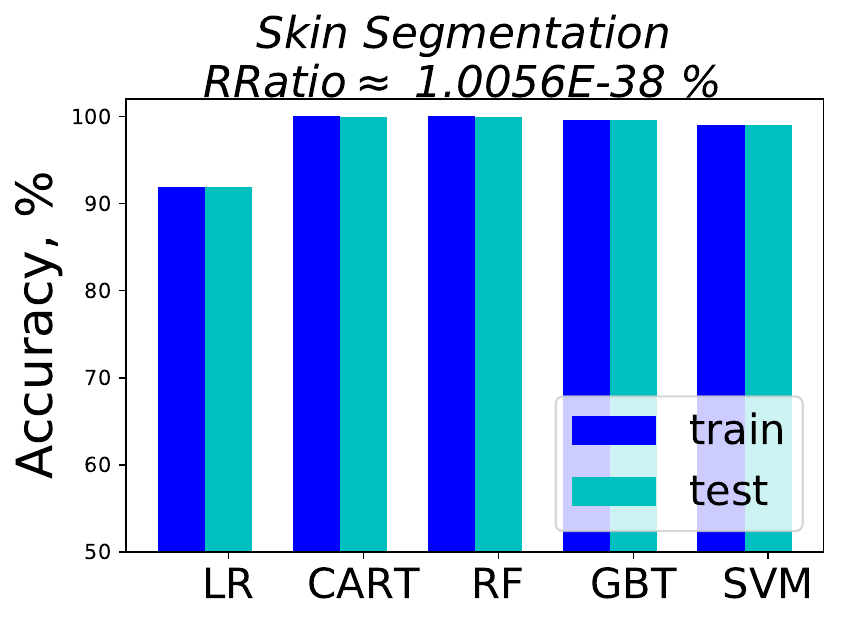}\quad
			\includegraphics[width=0.22\textwidth]{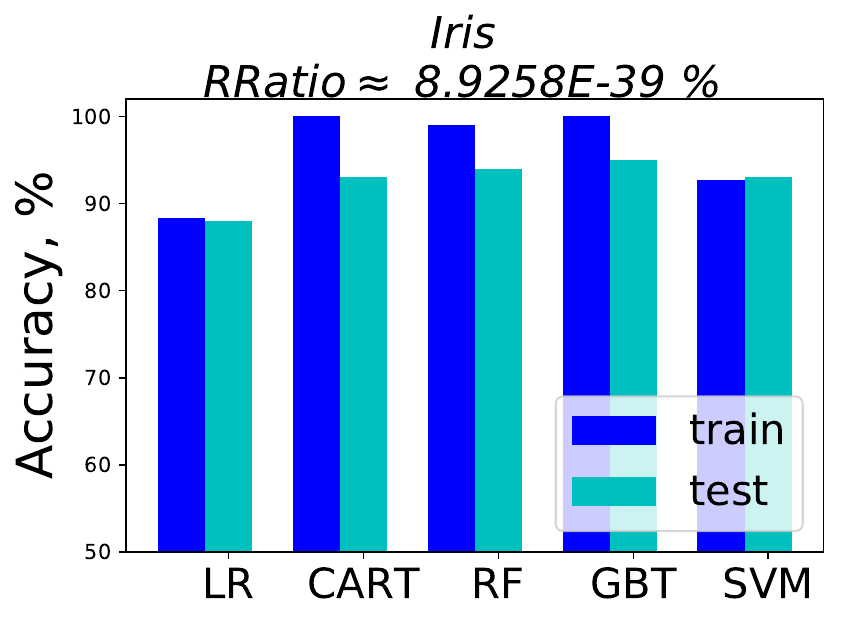}\quad
			\includegraphics[width=0.22\textwidth]{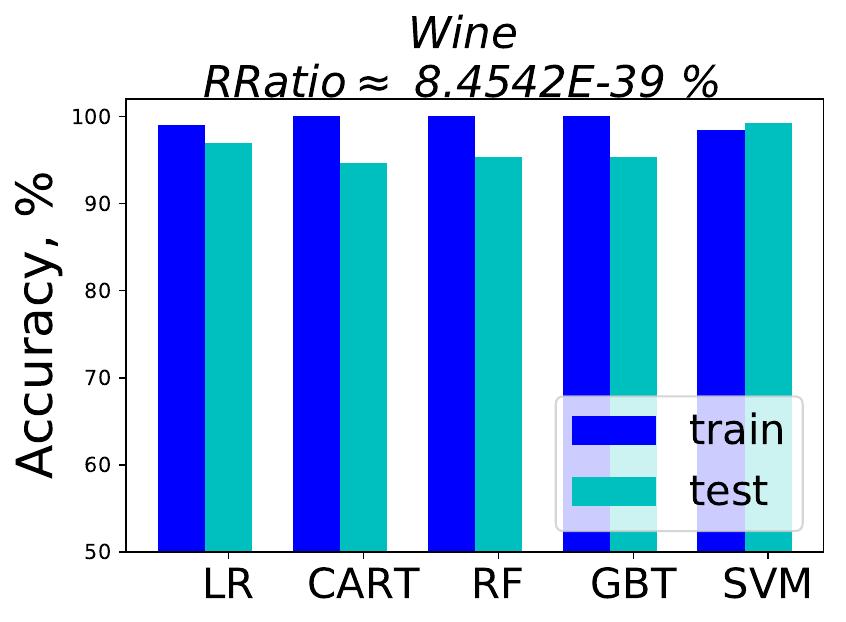}\\
			\includegraphics[width=0.22\textwidth]{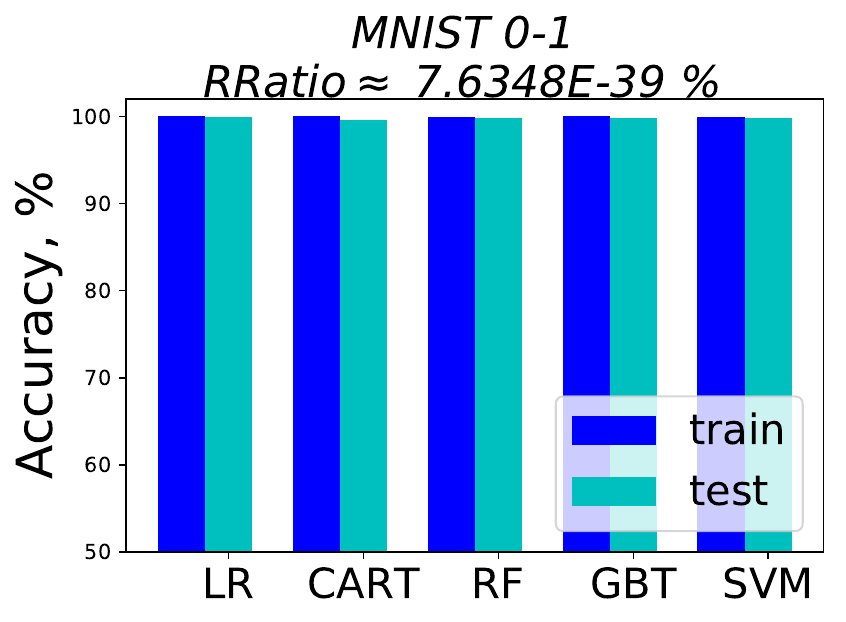}\quad	
			\includegraphics[width=0.22\textwidth]{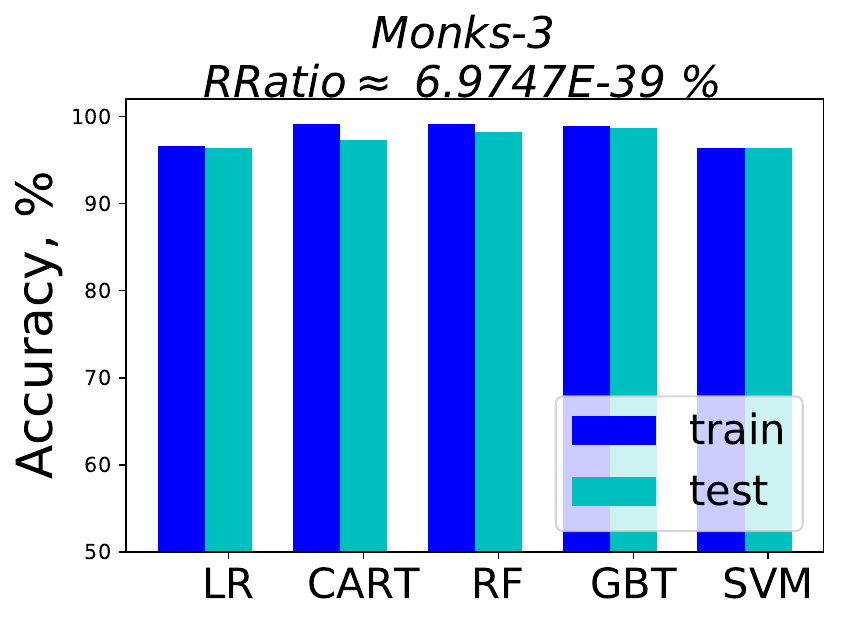}\quad
			\includegraphics[width=0.22\textwidth]{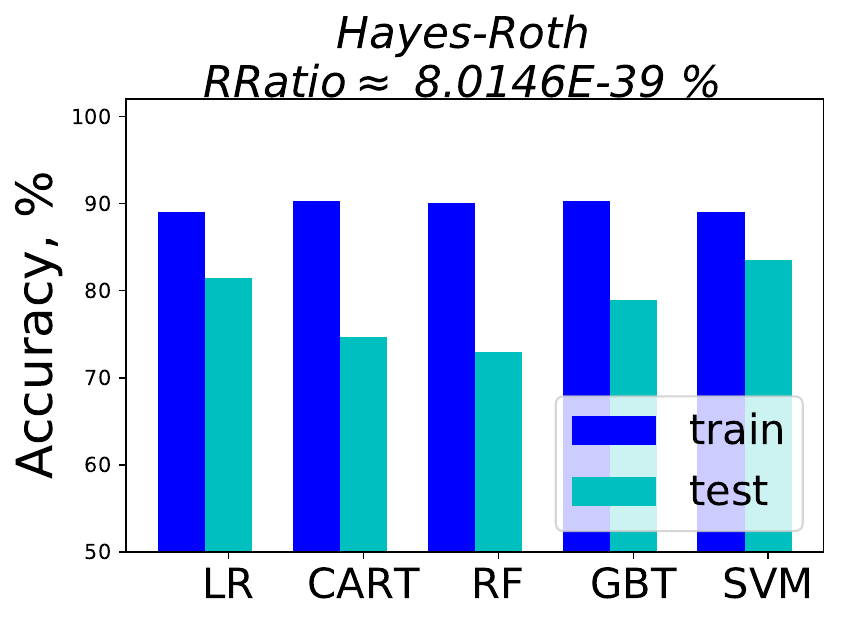}\quad
			\includegraphics[width=0.22\textwidth]{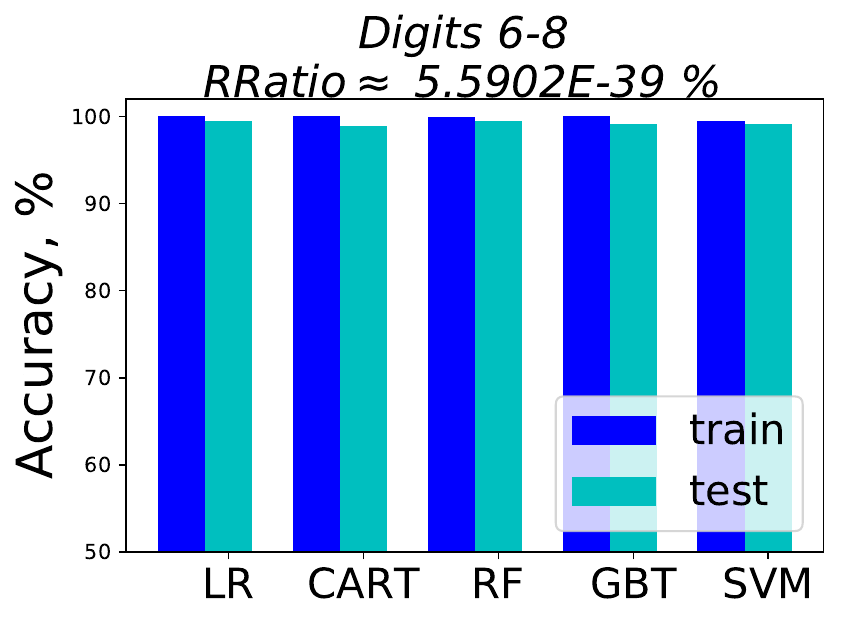}\\	
			\includegraphics[width=0.22\textwidth]{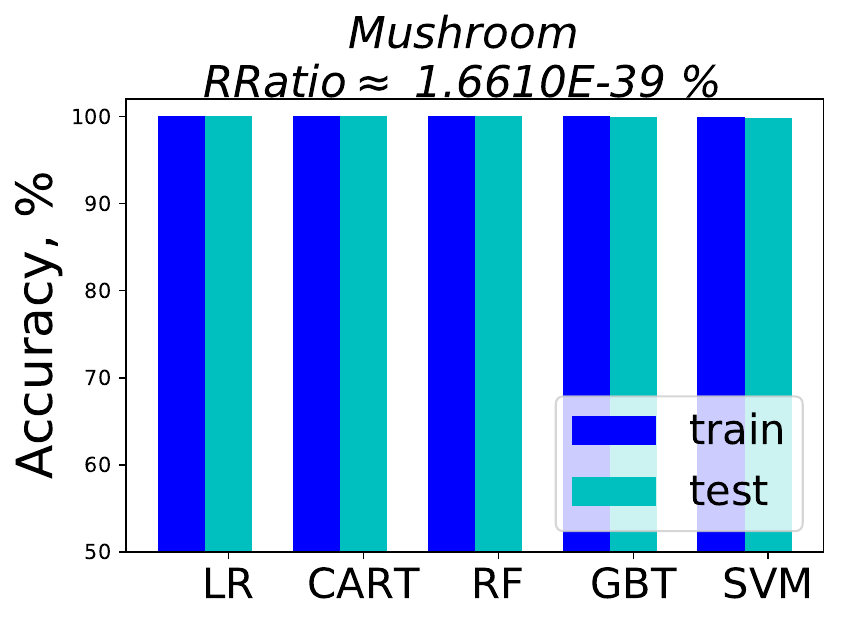}\quad
			\includegraphics[width=0.22\textwidth]{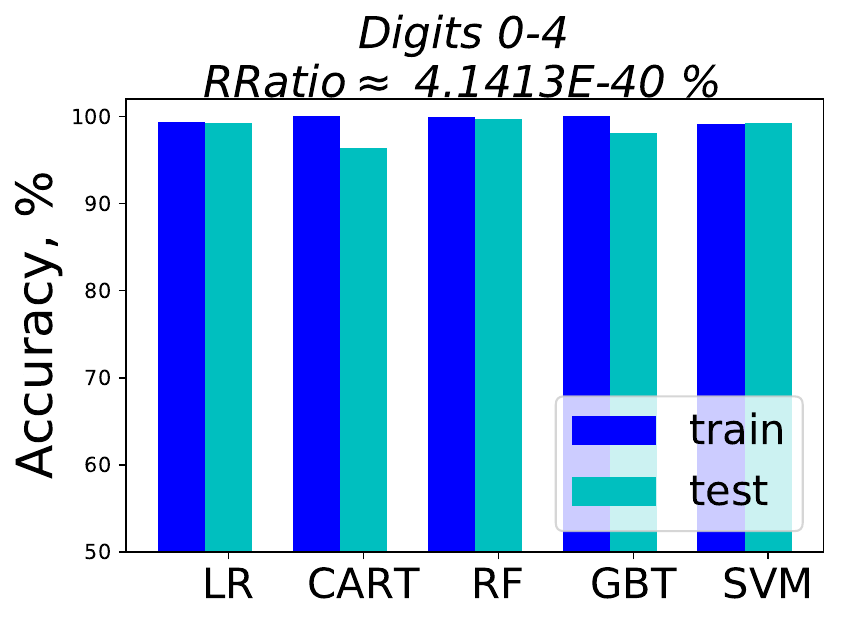}\quad
			\includegraphics[width=0.22\textwidth]{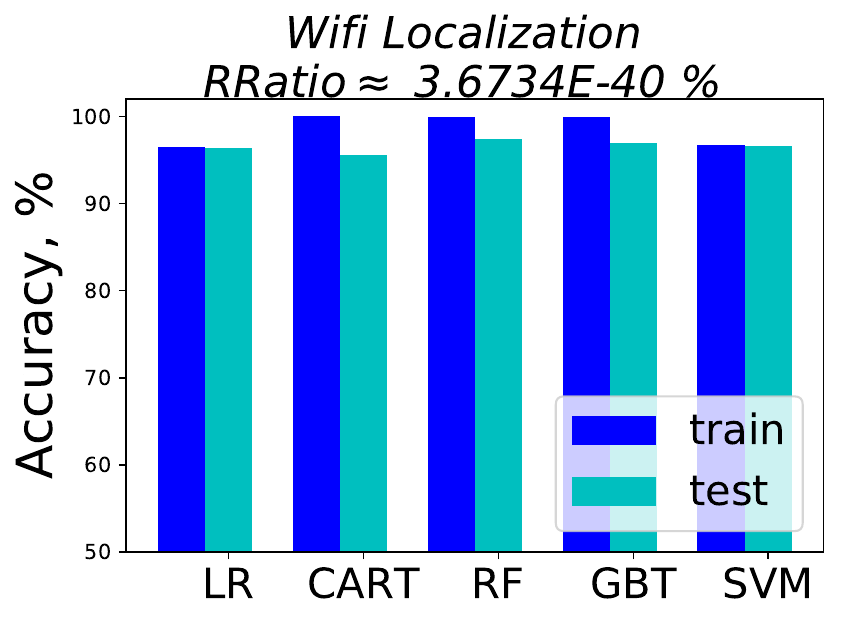}\quad
			\includegraphics[width=0.22\textwidth]{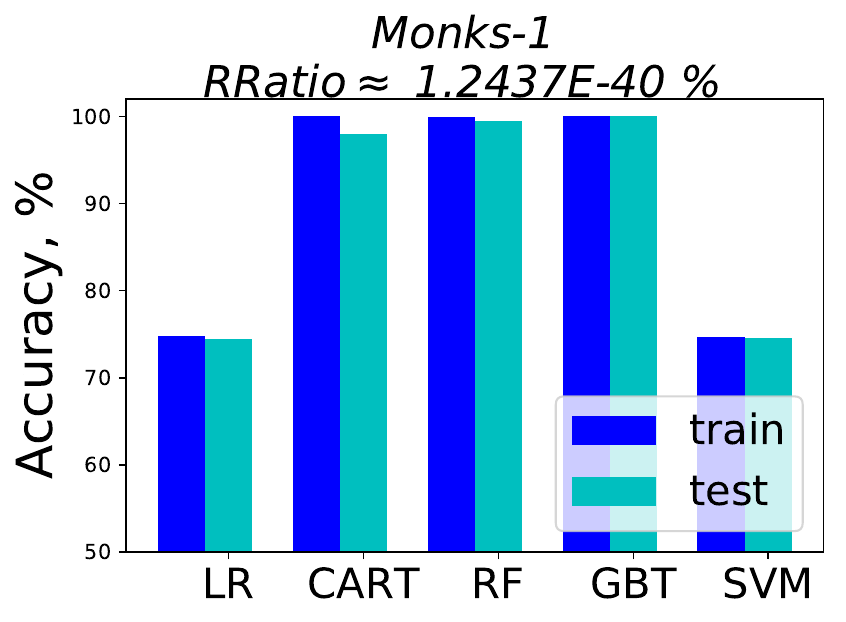}\\
			\includegraphics[width=0.22\textwidth]{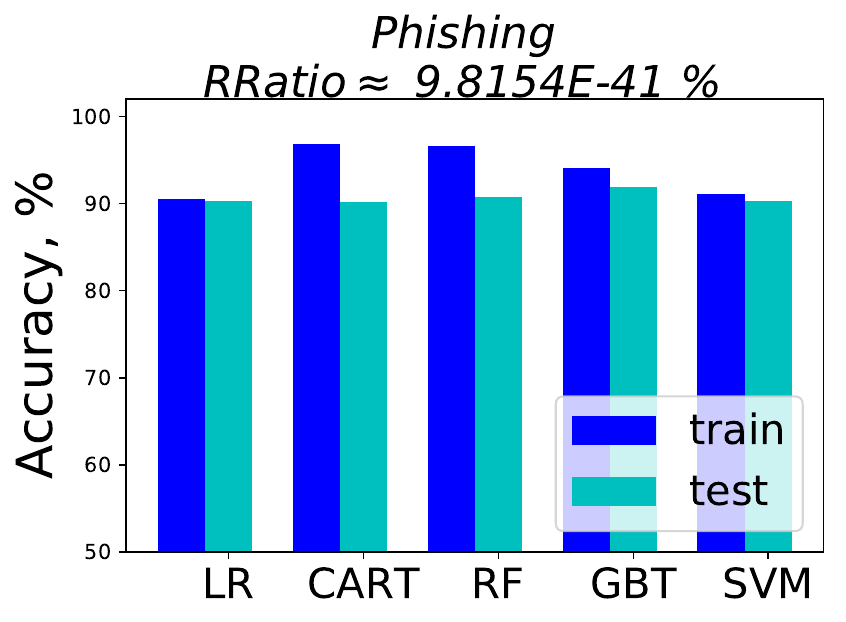}\quad
			\includegraphics[width=0.22\textwidth]{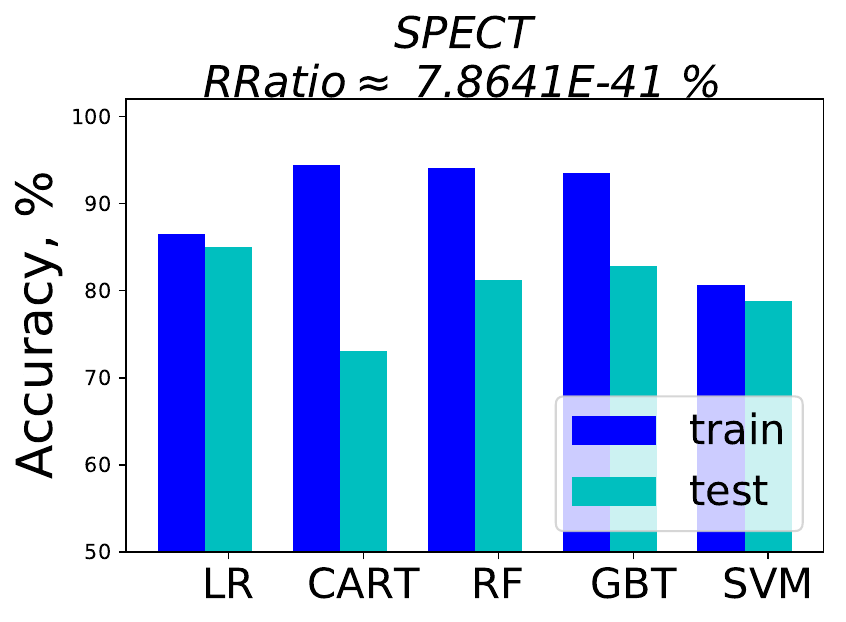}\quad
			\includegraphics[width=0.22\textwidth]{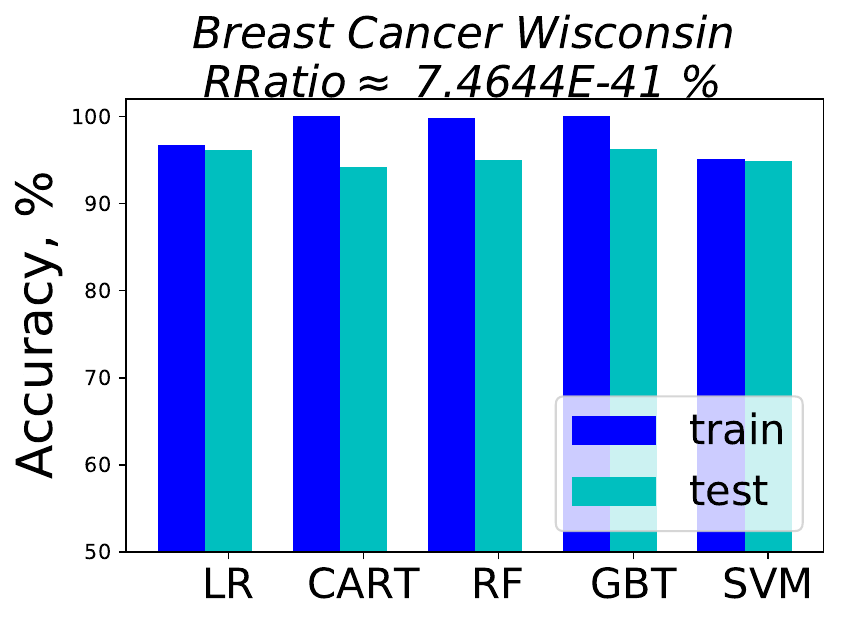}\quad
			\includegraphics[width=0.22\textwidth]{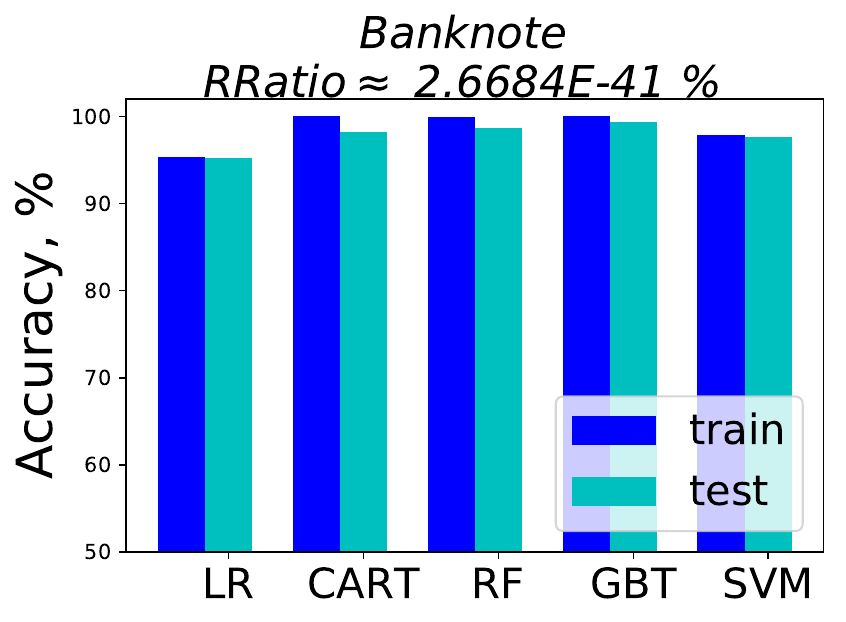}
			\\	
		\end{tabular}
		\caption{Performance of five machine learning algorithms without regularization for the UCI classification data sets. Data sets are listed in decreasing order of Rashomon ratio. Rashomon ratios, train and test accuracies are averaged over ten folds for data sets with more than 200 points and over five folds for data sets with less than 200 points. These plots continue in  Figure \ref{fig:bar_plots_nr_2}. The data sets with larger Rashomon ratios correlate with similar performance of machine learning algorithms and good generalization.} 
		\label{fig:bar_plots_nr_1}. 
	\end{figure*}
	
	\begin{figure*}[t]
		\centering
		\begin{tabular}{cc}
		    
			\includegraphics[width=0.22\textwidth]{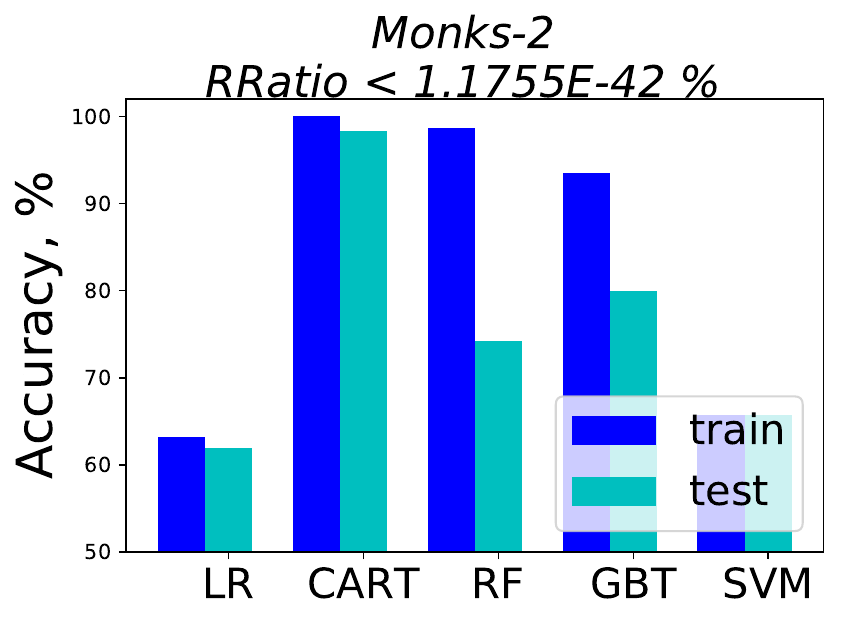}\quad	
			\includegraphics[width=0.22\textwidth]{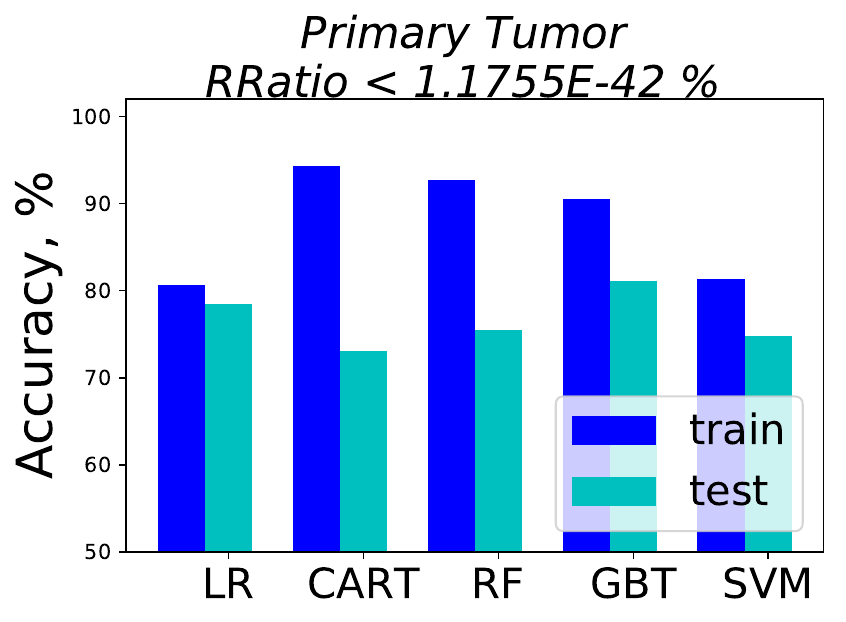}\quad
			\includegraphics[width=0.22\textwidth]{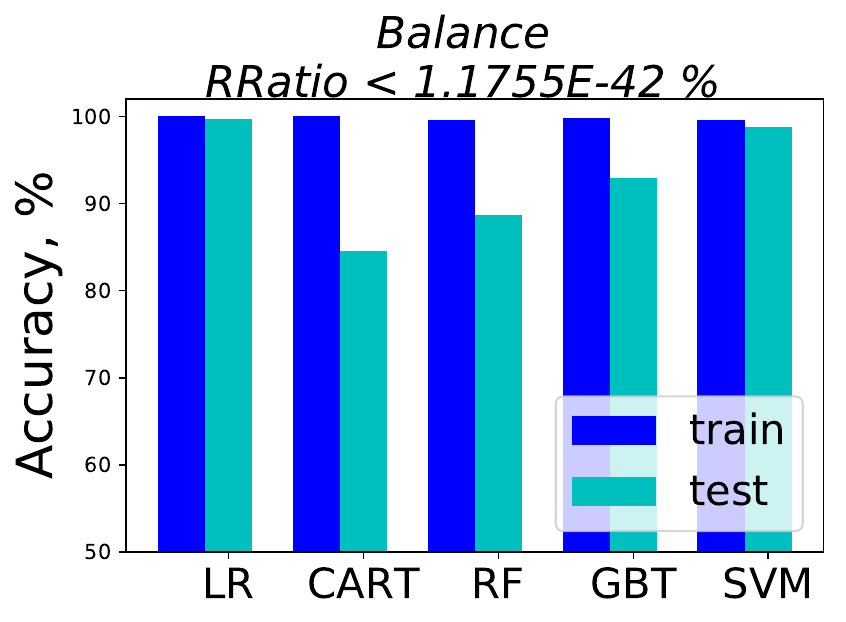}\quad	
			\includegraphics[width=0.22\textwidth]{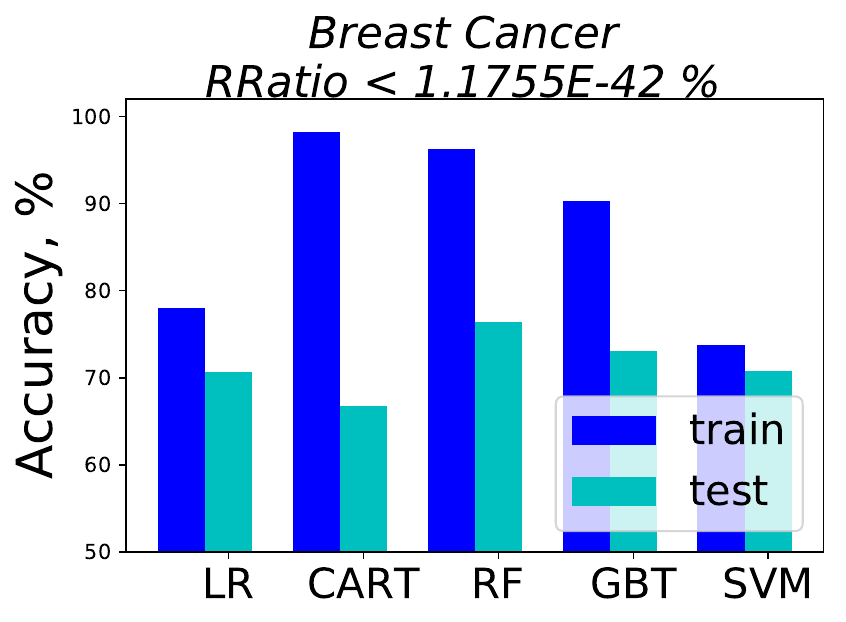}\\
			\includegraphics[width=0.22\textwidth]{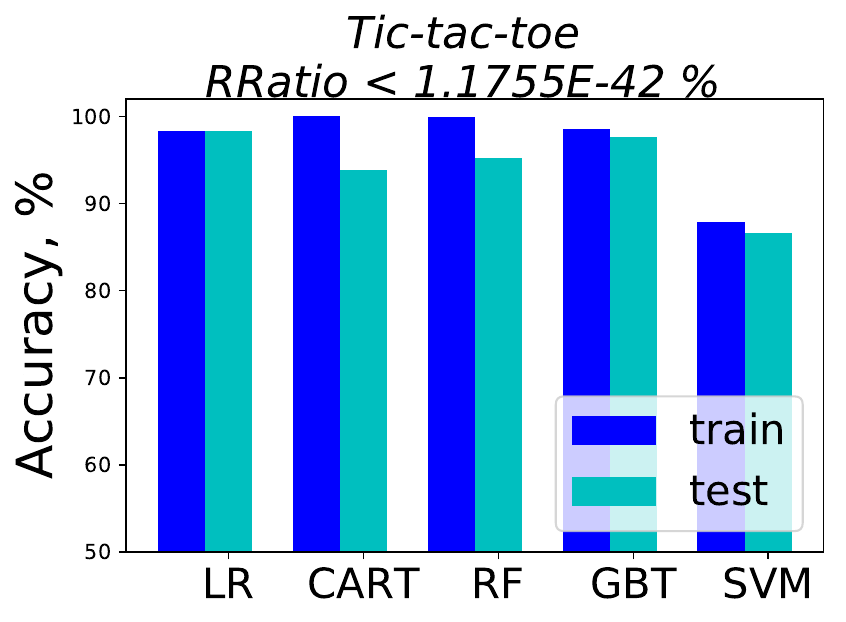}\quad
			\includegraphics[width=0.22\textwidth]{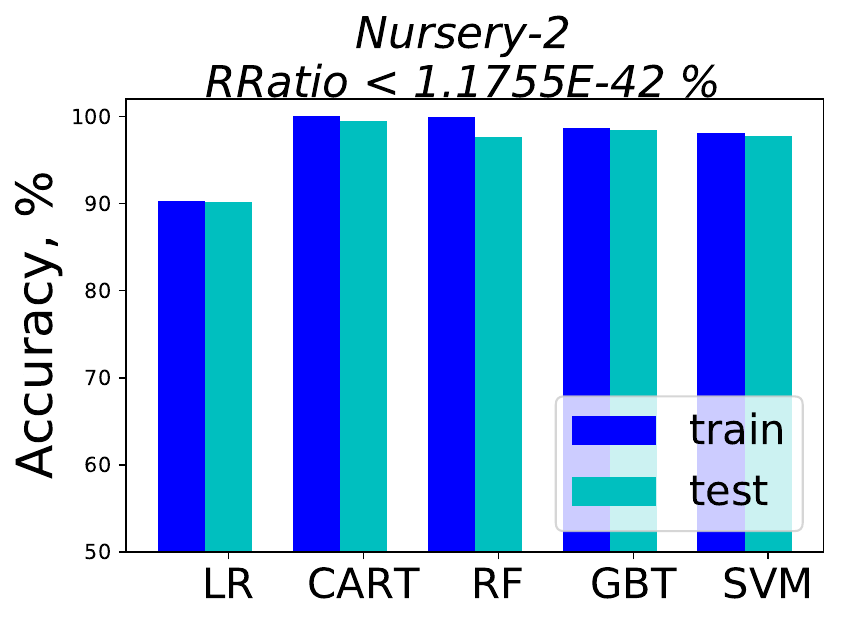}\quad	
			\includegraphics[width=0.22\textwidth]{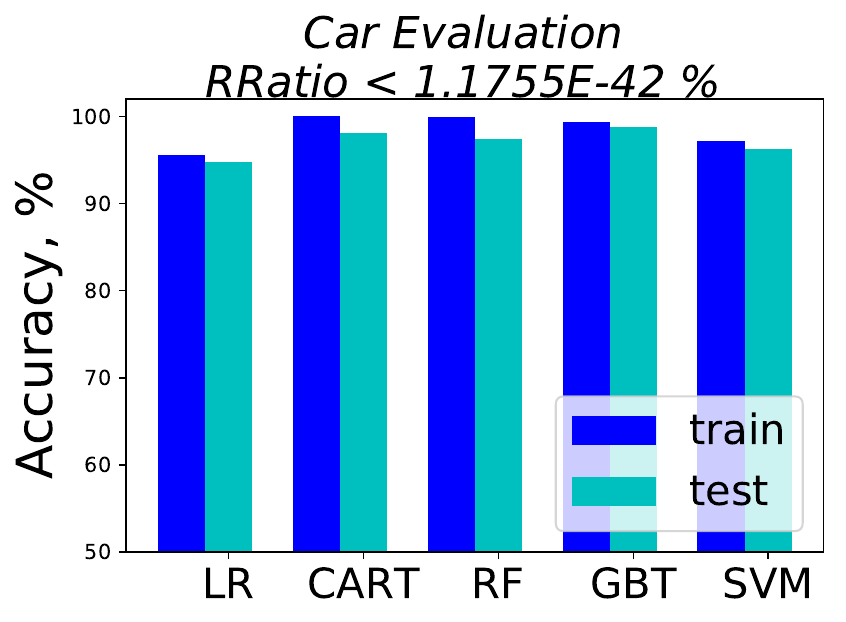}\quad
			\includegraphics[width=0.22\textwidth]{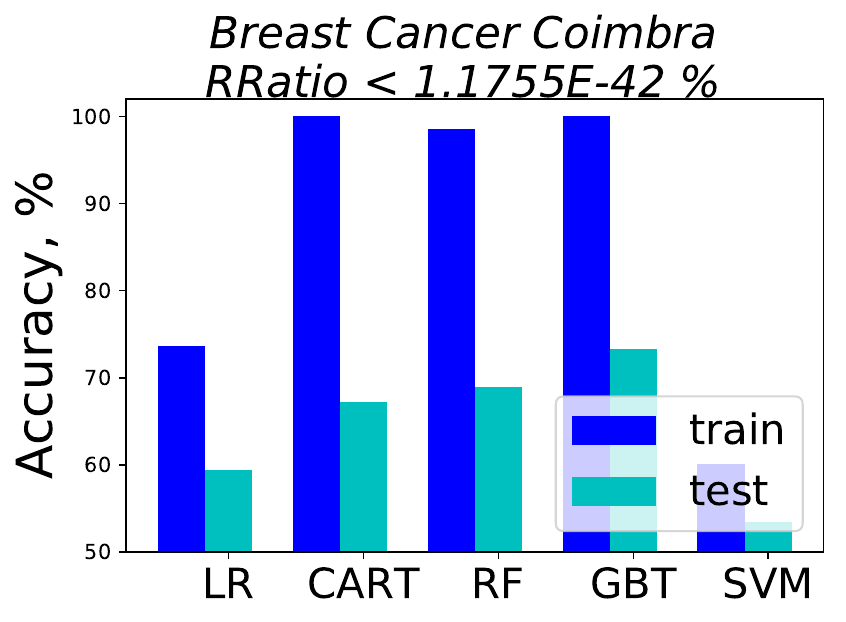}\\
			\includegraphics[width=0.22\textwidth]{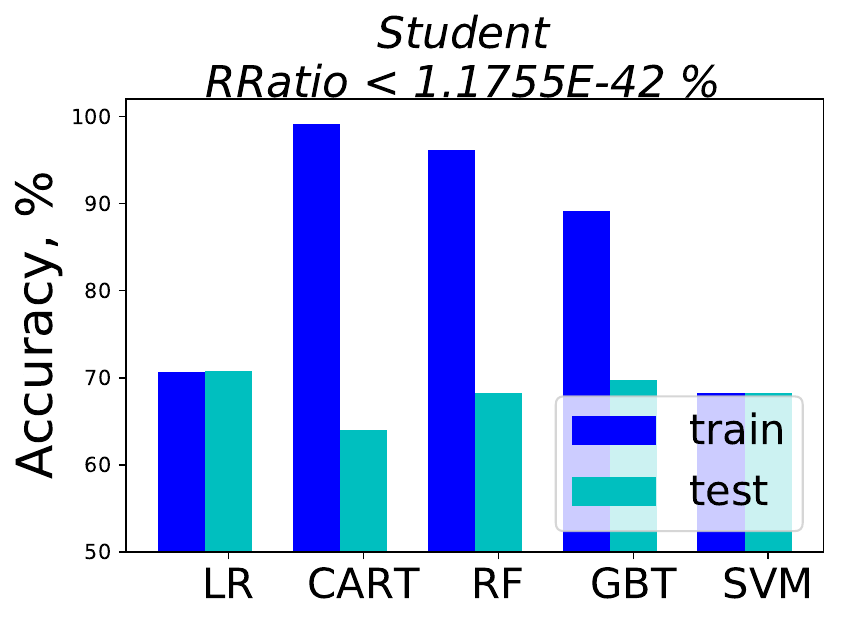}\quad
		    \includegraphics[width=0.22\textwidth]{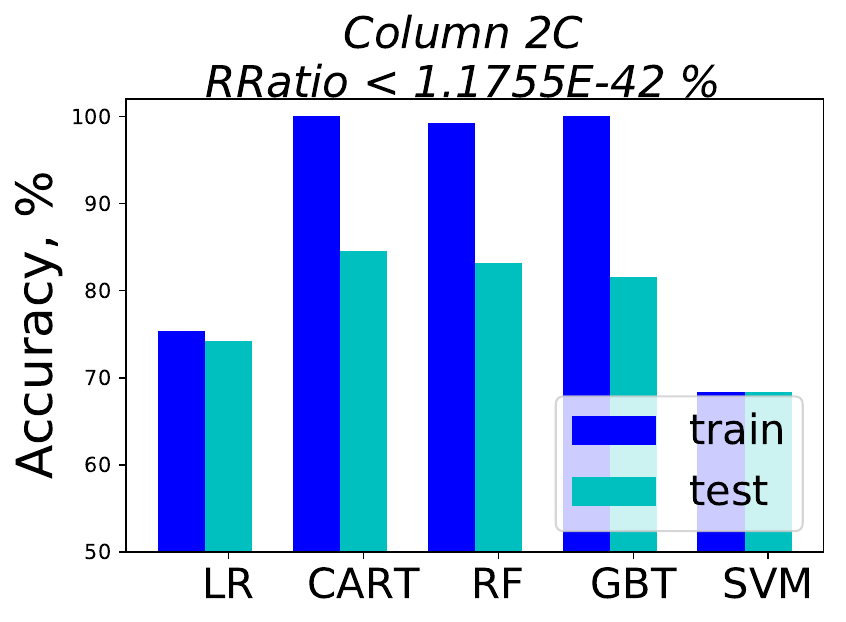}\quad	
			\includegraphics[width=0.22\textwidth]{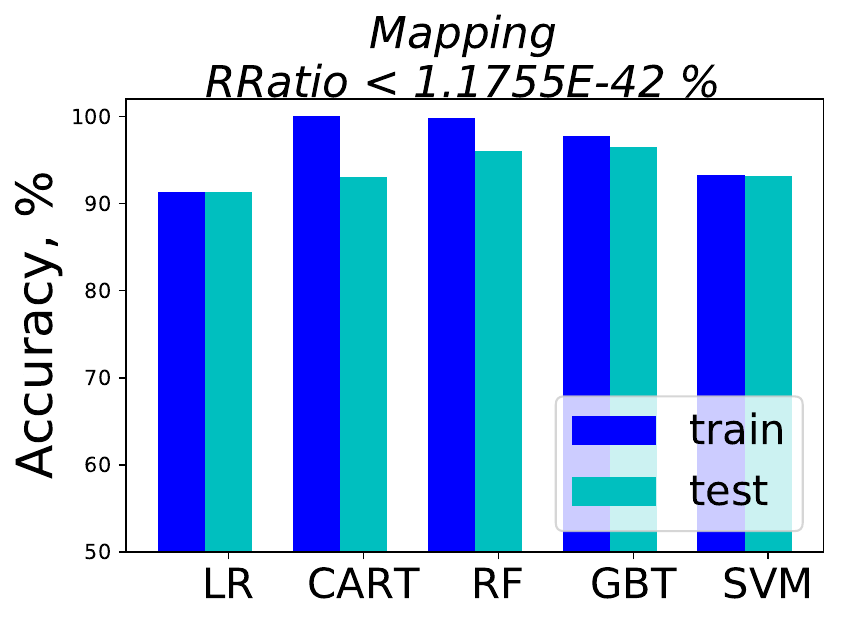}\quad
			\includegraphics[width=0.22\textwidth]{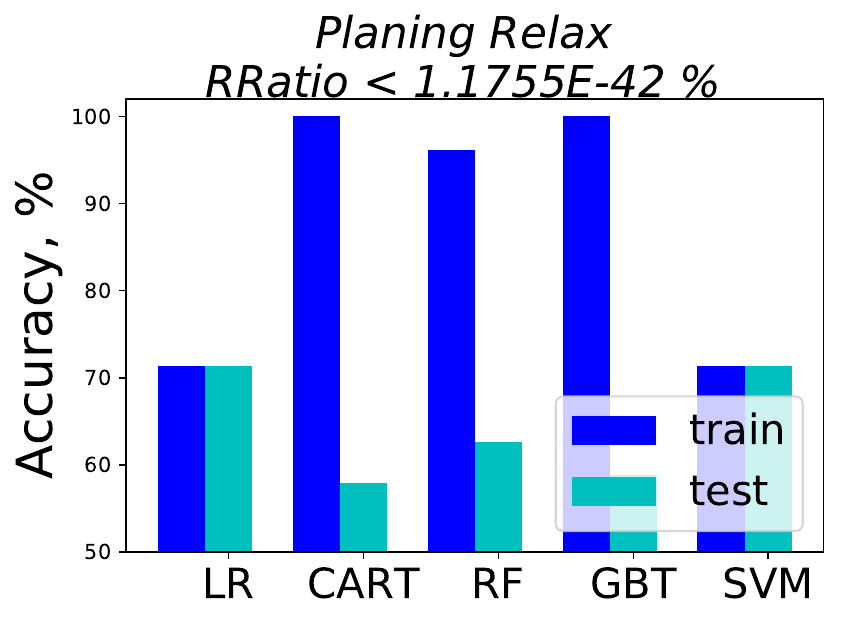}\\
			\includegraphics[width=0.22\textwidth]{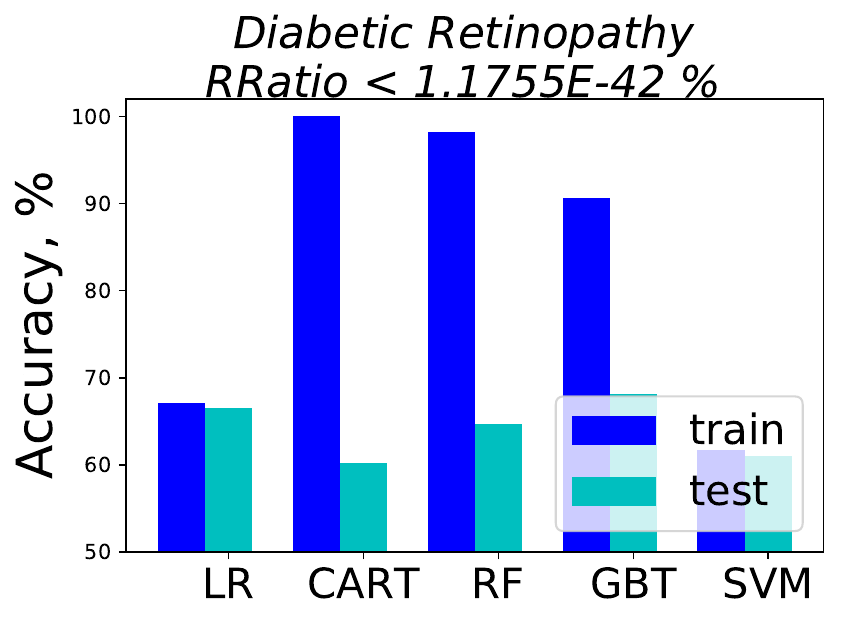}\quad
			\includegraphics[width=0.22\textwidth]{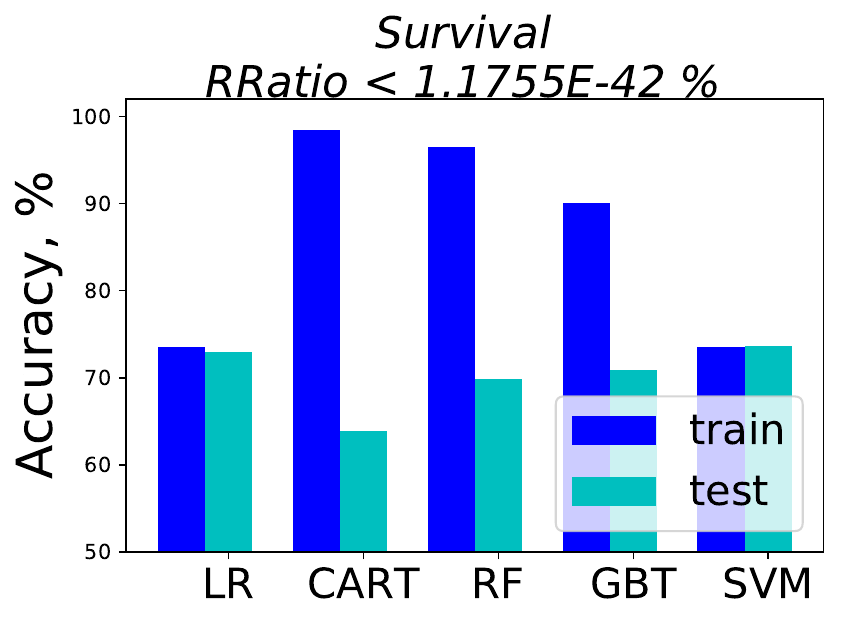}\quad	
			\includegraphics[width=0.22\textwidth]{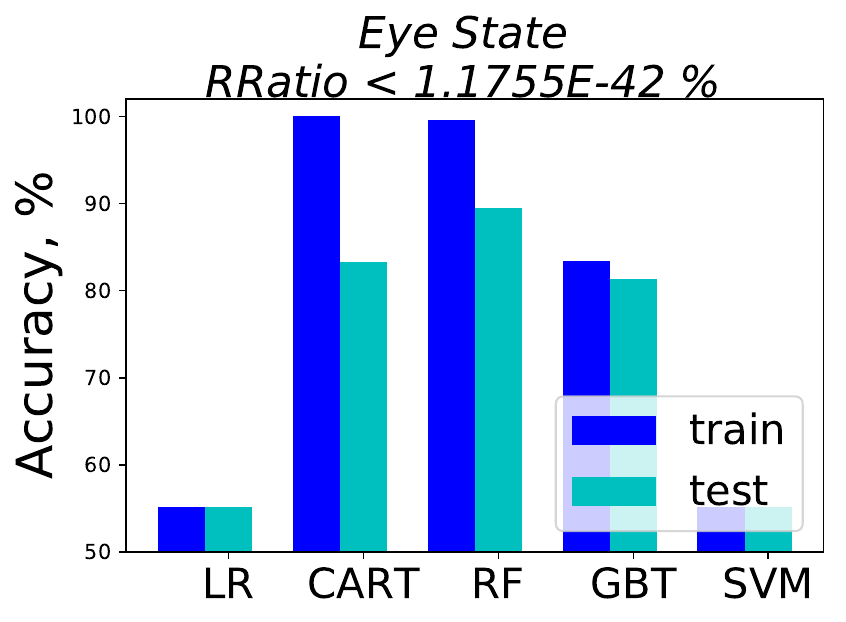}\quad
			\includegraphics[width=0.22\textwidth]{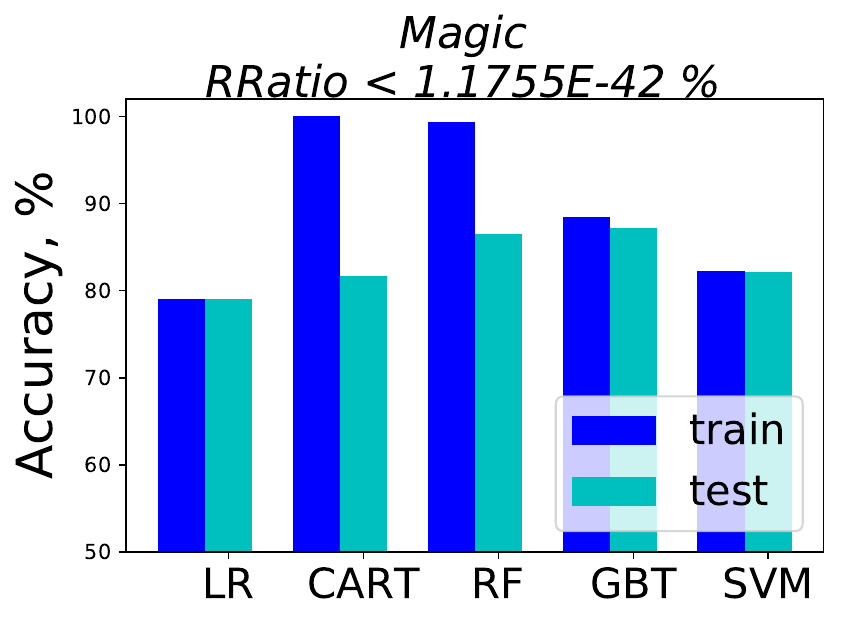}\\
			\includegraphics[width=0.22\textwidth]{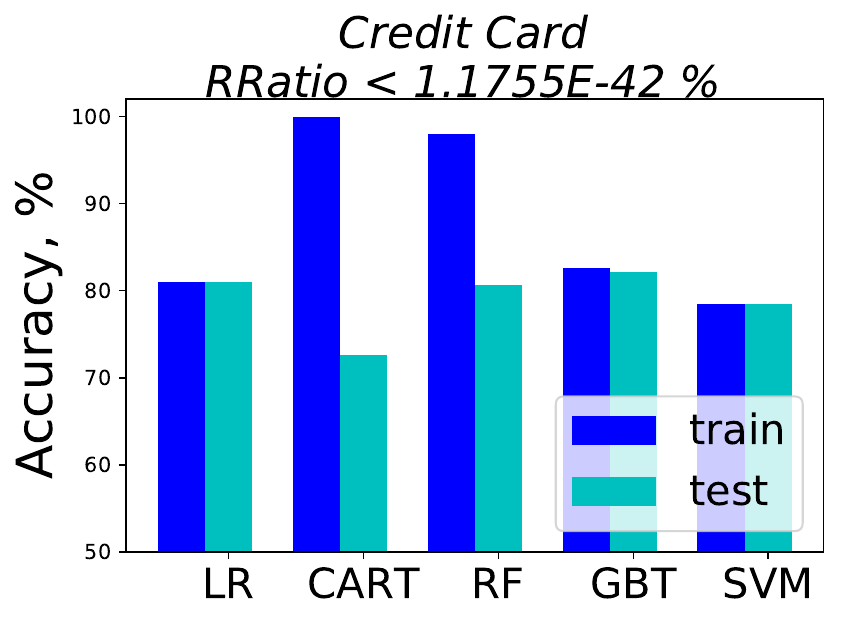}\quad	
			\includegraphics[width=0.22\textwidth]{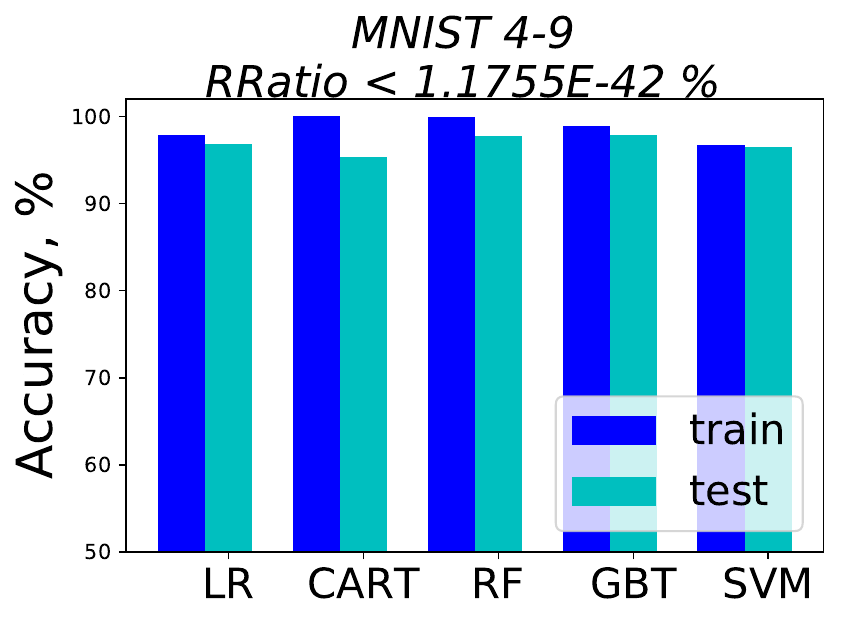}\\	
		\end{tabular}
		\caption{Performance of five machine learning algorithms without regularization for the UCI classification data sets. Data sets are listed in decreasing order of the Rashomon ratio continuing from  Figure \ref{fig:bar_plots_nr_1}. Rashomon ratios, train and test accuracies are averaged over ten folds for data sets with more than 200 points and over five folds for data sets with less than 200 points}.
		\label{fig:bar_plots_nr_2}
	\end{figure*}

		\begin{figure*}[t]
		\centering
		\begin{tabular}{cc}
			\includegraphics[width=0.22\textwidth]{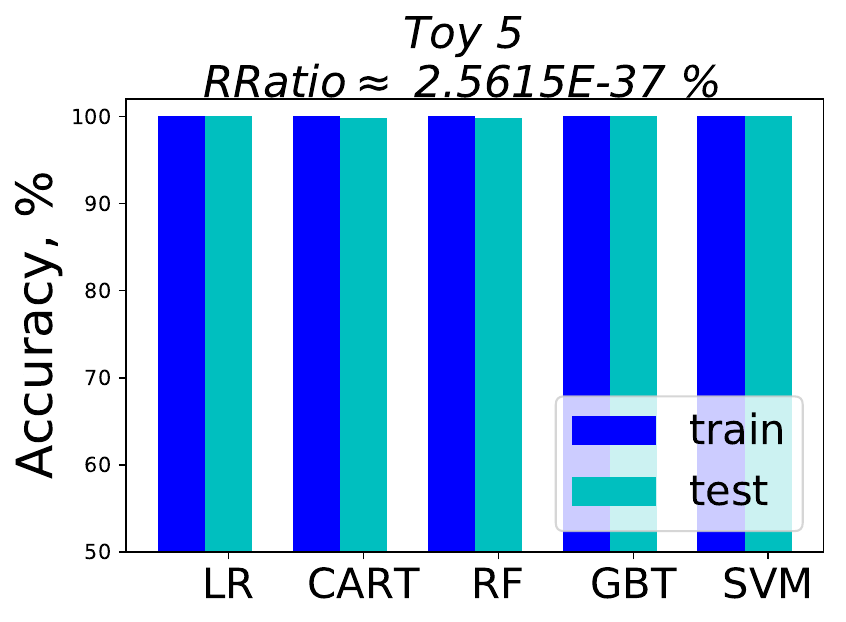}\quad	
			\includegraphics[width=0.22\textwidth]{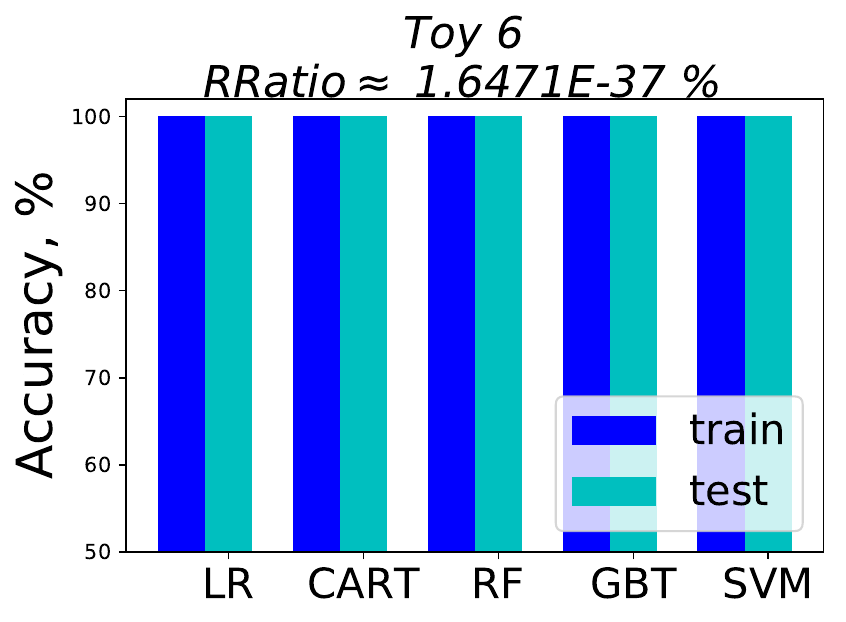}\quad
			\includegraphics[width=0.22\textwidth]{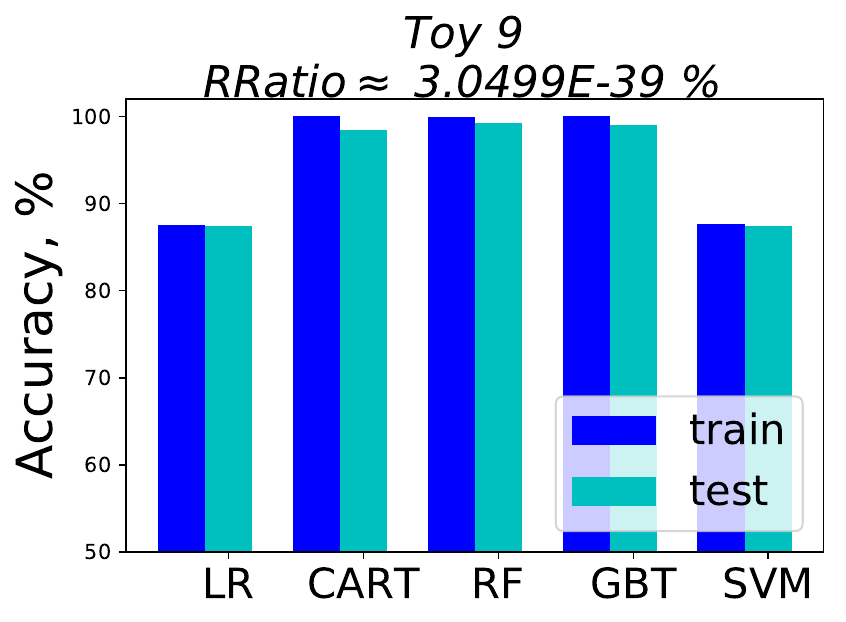}\quad
			\includegraphics[width=0.22\textwidth]{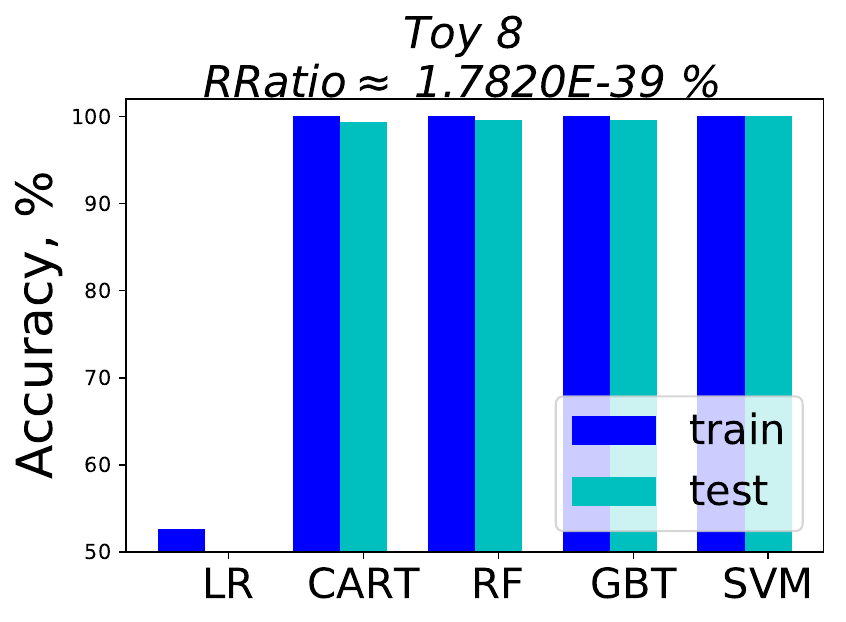}\\
			\includegraphics[width=0.22\textwidth]{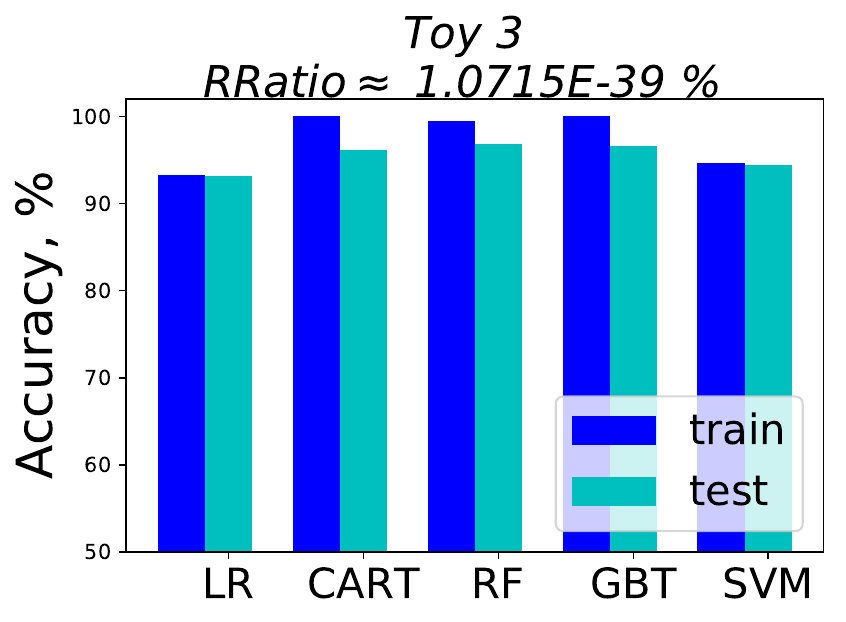}\quad	
			\includegraphics[width=0.22\textwidth]{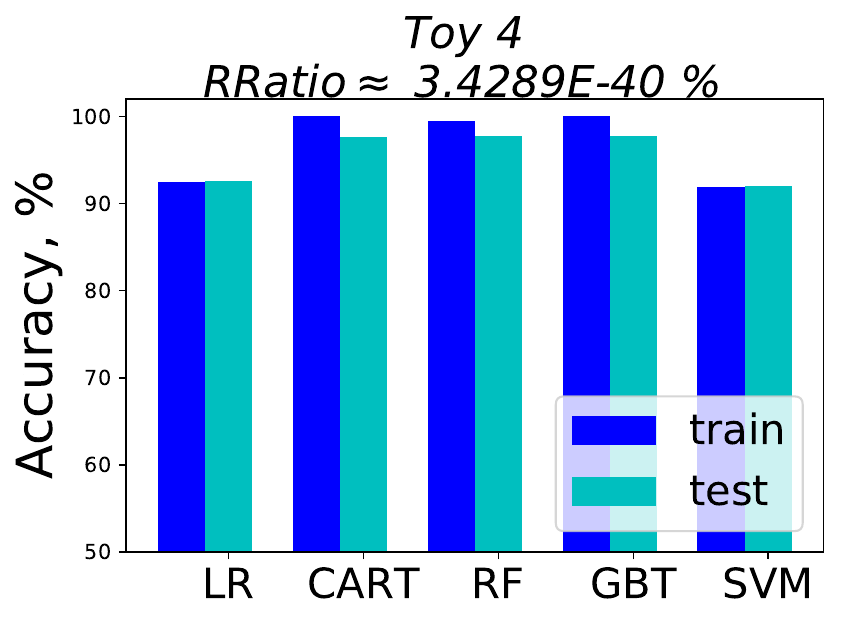}\quad
			\includegraphics[width=0.22\textwidth]{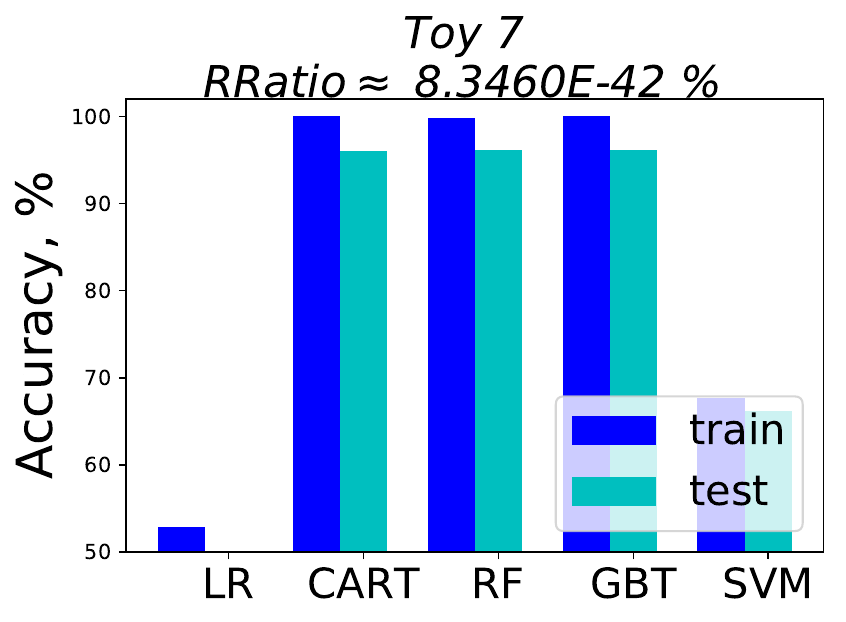}\quad
			\includegraphics[width=0.22\textwidth]{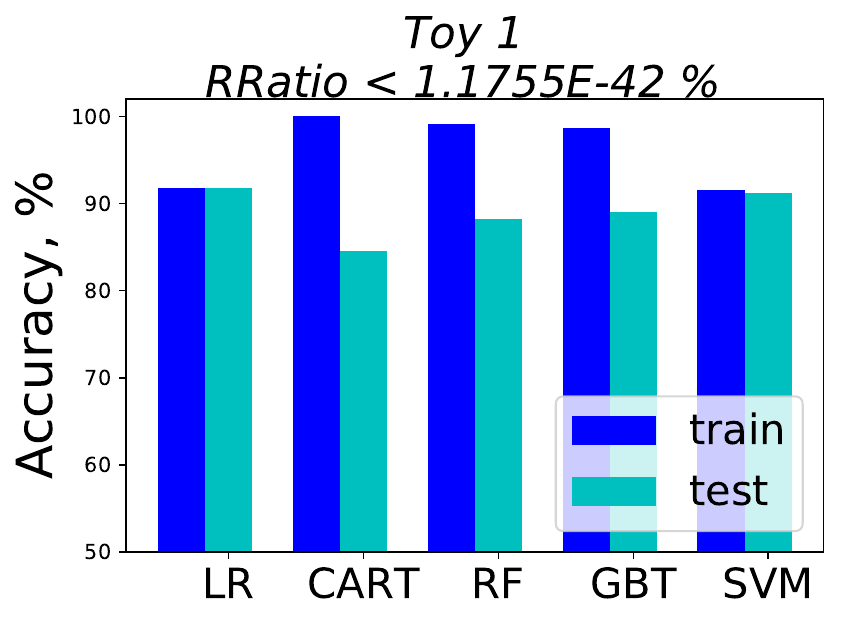}\\
		    \includegraphics[width=0.22\textwidth]{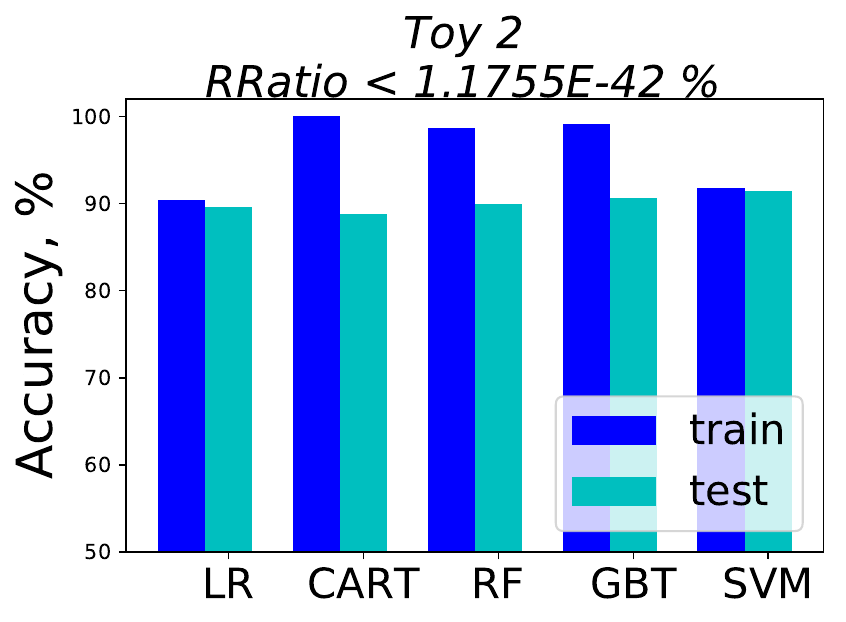}\quad	
			\includegraphics[width=0.22\textwidth]{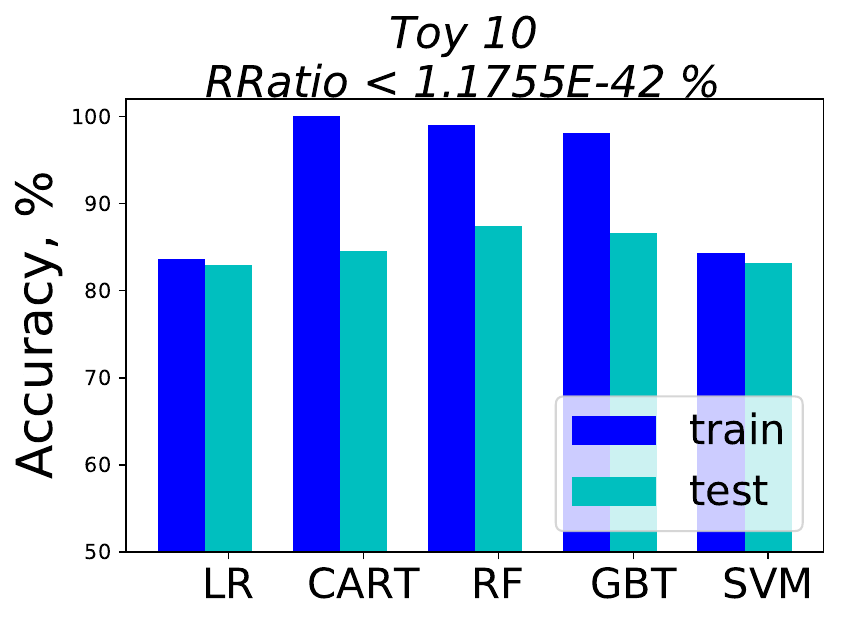}\quad
			\includegraphics[width=0.22\textwidth]{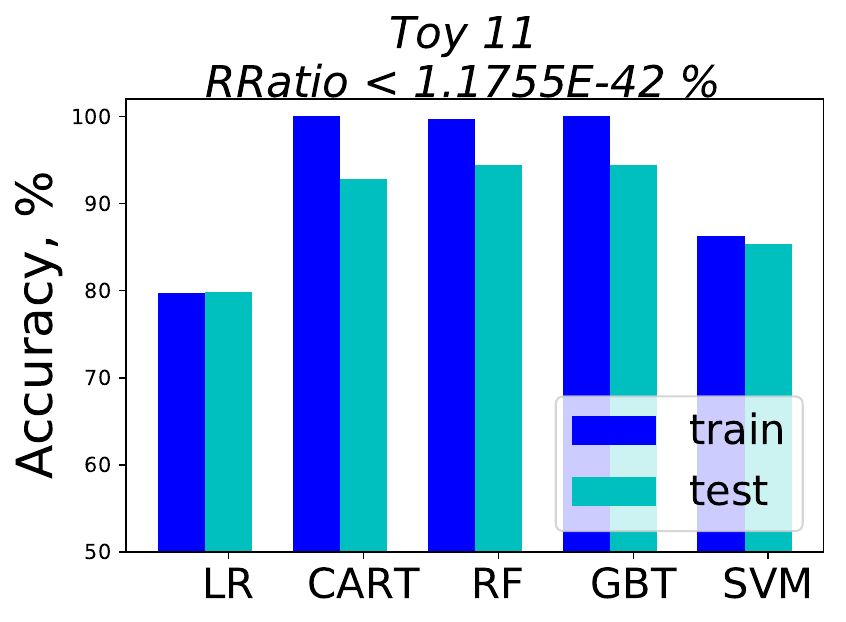}\quad
			\includegraphics[width=0.22\textwidth]{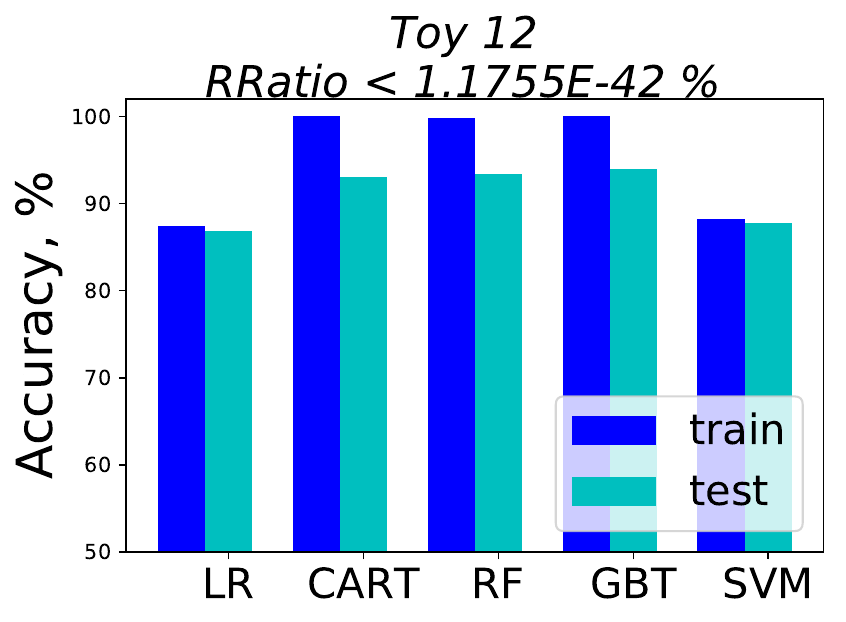}\\
		\end{tabular}
		\caption{Performance of five machine learning algorithms without regularization for the synthetic data sets with real-valued features. Data sets are listed in decreasing order of the Rashomon ratio. Rashomon ratios, train and test accuracies are averaged over ten folds. }
		\label{fig:bar_plots_nr_3}
	\end{figure*}

\end{document}